\newcommand\Mark[1]{\textsuperscript#1}
\title{An Adaptive Policy to Employ \\ Sharpness-Aware Minimization}
\author{Weisen Jiang\Mark{{1, 2}}, Hansi Yang\Mark{2}, 
	Yu Zhang\Mark{{1, 3, \thanks{Correspondence to: Yu Zhang}}}, James Kwok\Mark{2} \\	
	\Mark{1}
	{\small
		{Guangdong Provincial Key Laboratory of Brain-inspired Intelligent Computation}} \\
	{\small  \;\; Department of Computer Science and Engineering,  Southern University of Science and Technology } \\
	\Mark{2}
	{\small Department of Computer Science and Engineering, 	Hong Kong University of Science and Technology}\\
	\Mark{3} 
	{\small Peng Cheng Laboratory} \\
	\texttt{\{wjiangar, hyangbw, jamesk\}@cse.ust.hk,  yu.zhang.ust@gmail.com}
}
\begin{document}

	\maketitle
	
	\begin{abstract}
		Sharpness-aware minimization (SAM), which searches for flat minima by min-max
		optimization, has been shown to be useful in improving model generalization. 
		However, since each SAM update requires computing two gradients, its computational
		cost and training time are both doubled compared to standard empirical risk
		minimization (ERM).  
		Recent state-of-the-arts 
		reduce the fraction of SAM updates and thus
		accelerate SAM by  switching between
		SAM and ERM 
		updates
		randomly or periodically.  
		In this paper, we design an adaptive policy to employ SAM based on the loss landscape geometry.  
		Two efficient algorithms, AE-SAM and AE-LookSAM,
		are proposed. 
		We theoretically show that 
		AE-SAM
		has the same convergence rate as SAM.
		Experimental results on various datasets and architectures
		demonstrate the efficiency and effectiveness
		of the adaptive policy.
	\end{abstract}
	
	\section{Introduction}
	
	Despite great success 
	in many 
	applications~\citep{he2016deep,
		zagoruyko2016wide,han2017deep},
	deep networks are often over-parameterized 
	and capable of memorizing all training data.
	The training loss landscape 
	is
	complex and nonconvex 
	with many local minima 
	of different generalization abilities. 
	Many studies have
	investigated the relationship 
	between the
	loss surface's
	geometry 
	and generalization
	performance~\citep{hochreiter1994simplifying, mcallester1999pac, keskar2017on, neyshabur2017exploring, Jiang2020Fantastic},
	and found that flatter minima 
	generalize better than sharper minima~\citep{karolina2017, petzka2021relative, chaudhari2017entropysgd, keskar2017on, Jiang2020Fantastic}.
	
	Sharpness-aware minimization~(SAM)~\citep{foret2021sharpness}
	is the current state-of-the-art 
	to seek flat minima
	by solving a 
	min-max optimization problem.
	In the SAM algorithm, 
	each update consists of 
	\textit{two} forward-backward 
	computations:
	one for computing the perturbation
	and 
	the other for computing the actual update direction.
	Since these two computations are not parallelizable,
	SAM doubles the computational overhead as well as the training time compared to empirical risk minimization~(ERM).
	
	Several algorithms~\citep{du2021efficient, zhao2022ss, liu2022towards} have 
	been proposed to improve the efficiency of SAM.
	ESAM~\citep{du2021efficient}
	uses fewer samples to compute the 
	gradients and updates fewer parameters,
	but each update still requires two gradient computations. 
	Thus, ESAM
	does not alleviate the 
	bottleneck of training speed.
	Instead of using the SAM update at every iteration,
	recent state-of-the-arts~\citep{zhao2022ss, liu2022towards} proposed to use SAM randomly or periodically.
	Specifically,
	SS-SAM~\citep{zhao2022ss} selects 
	SAM or ERM  according to a 
	Bernoulli trial, 
	while LookSAM~\citep{liu2022towards} 
	employs SAM at every $k$ step.
	Though more efficient, the random or periodic use of SAM is suboptimal
	as it is not geometry-aware.
	Intuitively, 
	the SAM update is more useful in sharp regions 
	than in flat regions.
	
	In this paper,
	we propose an adaptive policy to 
	employ SAM based 
	on the geometry of the loss landscape.
	The SAM update is used when
	the model is in sharp regions,
	while the ERM update is used in 
	flat regions 
	for reducing the fraction of SAM updates.
	To measure sharpness,
	we use the squared stochastic gradient norm and
	model it by a normal distribution, whose
	parameters
	are estimated 
	by exponential moving average.
	Experimental results
	on standard benchmark datasets demonstrate
	the superiority of the proposed policy.
	
	Our contributions are summarized as follows:
	\begin{enumerate*}[(i), series = tobecont, itemjoin = \quad]
		\item We propose an adaptive policy to use SAM or ERM update based on the loss landscape geometry.
		\item We propose 
		an efficient algorithm, called AE-SAM (\underline{A}daptive policy to \underline{E}mploy
		\underline{SAM}),
		to reduce the fraction of SAM updates. We also 
		theoretically study
		its convergence rate.
		\item 
		The proposed policy is general and can be combined with any SAM variant.
		In this paper,
		we integrate it with 
		LookSAM~\citep{liu2022towards}
		and propose AE-LookSAM.
		\item
		Experimental results on various 
		network architectures
		and 
		datasets 
		(with and
		without label noise) verify the superiority of AE-SAM and AE-LookSAM over
		existing baselines.
	\end{enumerate*}
	
	\textbf{Notations}.
	Vectors (\eg, $\vx$) and matrices (\eg, $\vX$) are denoted by lowercase and uppercase boldface letters, respectively.
	For a vector $\vx$, its $\ell_2$-norm is $\|\vx\|$.
	$\hN(\mu; \sigma^2)$ is the univariate normal distribution with mean $\mu$ and variance $\sigma^2$.
	$\diag(\vx)$ constructs a diagonal matrix with $\vx$ on the diagonal.
	Moreover, $\bI_{A}(x)$ denotes the indicator function for a given set $A$, i.e.,
	$\bI_A(x) = 1$ if $x\in A$, and
	$0$ otherwise.
	
	\section{Related Work}
	
	We are given a training set $\hD$
	with i.i.d. samples 
	$\{(\vx_i, y_i): i=1, \dots, n\}$.
	Let $f(\vx; \vw)$ be a model parameterized by $\vw$. Its empirical risk on $\hD$  is $\hL(\hD; \vw) = \frac{1}{n} \sum_{i=1}^n \ell(f(\vx_i; \vw), y_i)$, where $\ell(\cdot, \cdot)$ is a loss (e.g., cross-entropy loss for classification).  Model training
	aims to learn a model from the training data
	that generalizes well on the test data.
	
	\textbf{Generalization and Flat Minima.}
	The connection between model generalization and loss landscape geometry 
	has been theoretically and empirically studied in \citep{keskar2017on, karolina2017, Jiang2020Fantastic}.
	Recently, \cite{Jiang2020Fantastic}
	conducted large-scale experiments
	and find that sharpness-based measures (flatness)
	are related to generalization of minimizers.
	Although flatness can be 
	characterized by 
	the Hessian's eigenvalues~\citep{keskar2017on, dinh2017sharp},
	handling the Hessian explicitly is computationally prohibitive.  To address this
	issue, practical algorithms propose to seek flat minima by injecting noise into the optimizers \citep{zhu2019anisotropic,
		zhou2019toward, orvieto2022anticorrelated, bisla2022low},
	introducing regularization~\citep{chaudhari2017entropysgd, zhao2022penalizing, du2022sharpness}, 
	averaging model weights during training~\citep{izmailov2018averaging, he2019asymmetric, cha2021swad},
	or sharpness-aware minimization (SAM)~\citep{foret2021sharpness, kwon2021asam,
		zhuang2021surrogate, 
		kim2022fisher}.
	
	\textbf{SAM.}
	The state-of-the-art SAM \citep{foret2021sharpness} and its variants \citep{
		kwon2021asam, zhuang2021surrogate, kim2022fisher, zhao2022penalizing} search for flat minima by
	solving the following min-max optimization problem:
	\begin{align}
		\min_{\vw} \max_{\| \vepsilon\| \leq \rho } \hL(\hD; \vw + \vepsilon),
		\label{eq:sam}
	\end{align}
	where $\rho > 0$ is the radius of perturbation.  The above can also be rewritten as 
	$\min_\vw \hL(\hD; \vw) + \hR(\hD; \vw)$,
	where  $\hR(\hD;\vw)\equiv\max_{\| \vepsilon\| \leq \rho }\hL(\hD; \vw+\vepsilon) - \hL(\hD; \vw)$ is a regularizer that penalizes sharp minimizers~\citep{foret2021sharpness}.
	As solving the inner maximization in \eqref{eq:sam} exactly is computationally
	infeasible for nonconvex losses,
	SAM approximately solves it by first-order
	Taylor approximation, leading to the update rule:
	\begin{align}
		\vw_{t+1} = \vw_{t} - \eta \nabla \hL(\hB_t; \vw_t + \rho_t \nabla \hL(\hB_t; \vw_t)),
		\label{eq:sam-rule}
	\end{align}
	where $\hB_t$ is a mini-batch of data,
	$\eta$ is the step size,
	and $\rho_t=\frac{\rho}{\| \nabla \hL(\hB_t; \vw_{t})\| }$.
	Although SAM has shown to be effective in 
	improving the generalization 
	of deep networks, 
	a major drawback is that each update in \eqref{eq:sam-rule} 
	requires \textit{two} forward-backward calculations.
	Specifically, 
	SAM first calculates the gradient of $\hL(\hB_t; \vw)$ at $\vw_t$ 
	to obtain the perturbation,
	then calculates the gradient of $\hL(\hB_t; \vw)$
	at $\vw_t + \rho_t \nabla \hL(\hB_t; \vw_t)$
	to obtain the update direction for $\vw_t$.
	As a result, SAM doubles the computational overhead compared to ERM.
	
	\textbf{Efficient Variants of SAM.}
	Several algorithms have been proposed to accelerate the SAM algorithm.
	ESAM \citep{du2021efficient} uses fewer samples  to compute the gradients 
	and only updates part of the model in the second step,
	but still requires to compute most of the gradients.
	Another direction is 
	to 
	reduce the number of SAM updates during training. 
	SS-SAM~\citep{zhao2022ss} randomly selects SAM or ERM update according to 
	a Bernoulli trial,
	while
	LookSAM~\citep{liu2022towards} employs SAM at every $k$ iterations.
	Intuitively, the SAM update is more suitable for sharp regions than flat regions.
	However, the mixing policies in SS-SAM and LookSAM are not adaptive to the loss landscape.
	In this paper, we design an adaptive policy to employ SAM based on the loss landscape geometry.
	
	\section{Method}
	\label{sec:method}
	
	In this section, we propose an adaptive policy to employ SAM.  
	The
	idea is to use ERM when $\vw_t$ is in a flat region, and use SAM
	only when the loss landscape is locally sharp.  
	We start by introducing a sharpness
	measure (Section~\ref{sec:measure}),
	then propose an adaptive policy based on this
	(Section~\ref{sec:adaptive}).
	Next, we propose two algorithms (AE-SAM and AE-LookSAM) and study the
	convergence.
	
	\subsection{Sharpness Measure}
	\label{sec:measure}
	Though sharpness can be characterized by Hessian's
	eigenvalues~\citep{keskar2017on, dinh2017sharp}, they are expensive to compute.
	A widely-used approximation 
	is based on  the gradient magnitude
	$\diag([\nabla \hL(\hB_t; \vw_t)]^2)$
	\citep{bottou2018optimization, khan2018fast}, where $[\vv]^2$ denotes the
	elementwise square of a vector $\vv$.  As $\| \nabla \hL(\hB_t; \vw_t) \|^2$ equals the trace
	of $\diag([\nabla \hL(\hB_t; \vw_t)]^2)$, it is reasonable to choose $\| \nabla
	\hL(\hB_t; \vw_t) \|^2$ as a sharpness measure.
	
	$\| \nabla \hL(\hB_t; \vw_t) \|^2$ is also related to 
	the gradient variance 
	$\var(\nabla \hL(\hB_t; \vw_t) )$,  another sharpness measure~\citep{Jiang2020Fantastic}.  
	Specifically,
	\begin{align}
		\!\!
		\var(\nabla \hL(\hB_t; \vw_t) ) \!\equiv\! \bE_{\hB_t} \| \nabla \hL(\hB_t; \vw_t) \!-\! \nabla \hL(\hD; \vw_t)\|^2 \!=\! \bE_{\hB_t} \| \nabla \hL(\hB_t; \vw_t)  \|^2 \!-\! \| \nabla \hL(\hD; \vw_t) \|^2. \!\!\label{eq:var}
	\end{align}
	With appropriate smoothness assumptions on 
	$\hL$,
	both
	SAM and ERM 
	can be shown
	theoretically
	to 
	converge  to
	critical points of 
	$\hL(\hD;\vw)$
	(i.e., $\nabla \hL(\hD;\vw) = 0$)
	\citep{reddi2016stochastic,andriushchenko2022towards}.  Thus, it follows from
	\eqref{eq:var} that 
	$\var(\nabla \hL(\hB_t; \vw_t) ) = \bE_{\hB_t} \| \nabla \hL(\hB_t; \vw_t)  \|^2$ when $\vw_t$ is a critical point of $\hL(\hD; \vw)$.
	\cite{Jiang2020Fantastic} conducted
	extensive experiments and empirically show that 
	$\var(\nabla \hL(\hB_t; \vw_t) )$
	is positively correlated with the generalization gap.  The smaller
	the $\var(\nabla \hL(\hB_t; \vw_t) )$, the better 
	generalization is the model with parameter $\vw_t$.  This finding also explains why SAM generalizes better than ERM.
	Figure \ref{fig:variance-gradient-trend} shows the gradient variance w.r.t.
	the
	number of epochs 
	using SAM and ERM 
	on \textit{CIFAR-100} 
	with various network architectures (experimental
	details are in Section \ref{sec:setup-cifar}).
	As can be seen,
	SAM always has a much smaller 
	variance than ERM.
	Figure
	\ref{fig:sgd-squared-grad-norm-trend} shows the
	expected squared 
	norm
	of the stochastic gradient 
	w.r.t. the number of epochs on \textit{CIFAR-100}.  As shown,
	SAM achieves a much smaller 
	$\bE_{\hB_t} \| \nabla \hL(\hB_t; \vw_t)  \|^2$ than 
	ERM.
	
	\begin{figure}[!t]
		\centering
		\vskip -.1in
		\!\!\!
		\subfigure[\textit{ResNet-18}. \label{fig:variance-grad-norm-resnet18}]{\includegraphics[width=0.32\textwidth]{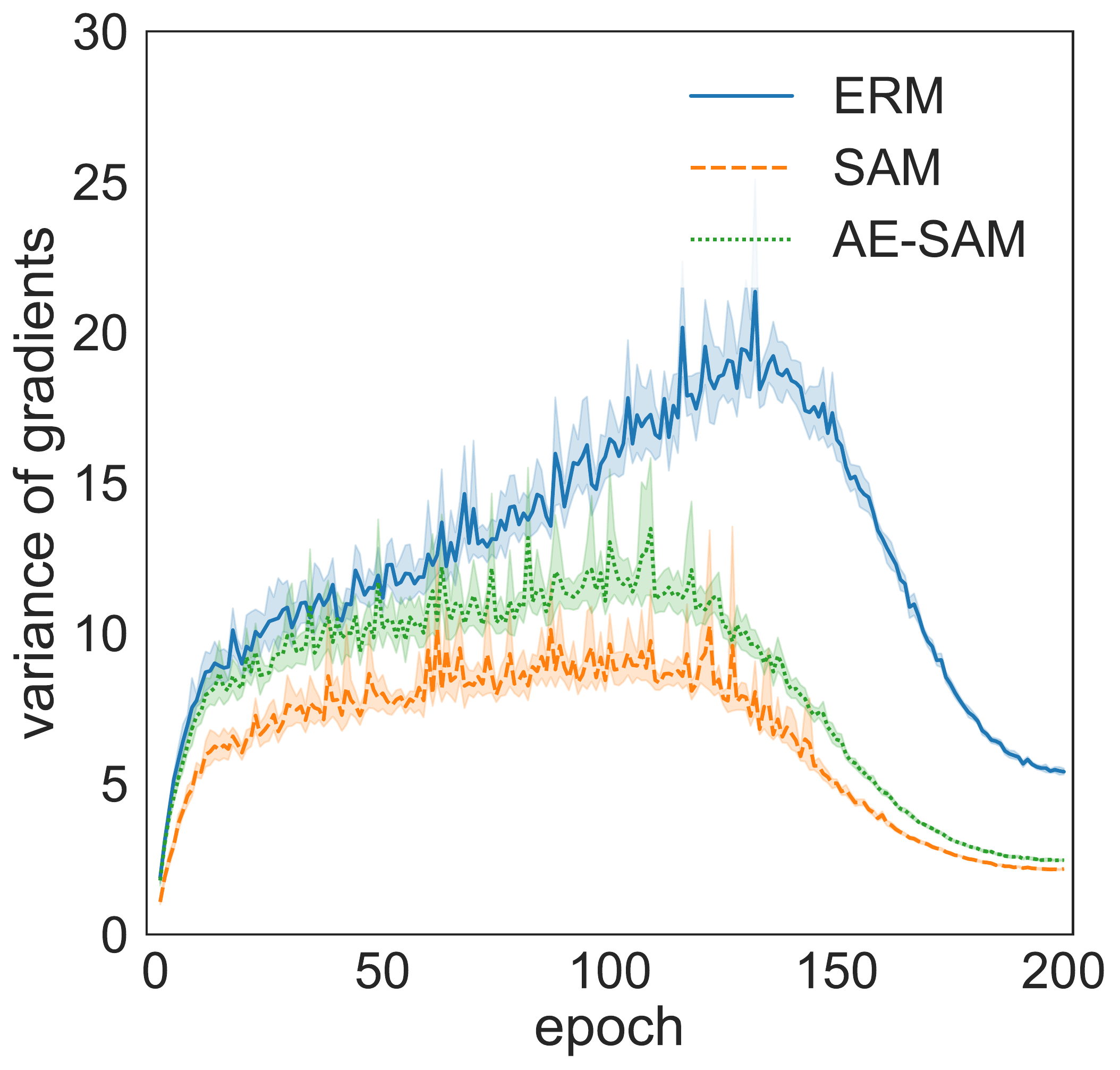}}
		\!\!\!
		\subfigure[\textit{WRN-28-10}. \label{fig:variance-grad-norm-wrn}]{\includegraphics[width=0.32\textwidth]{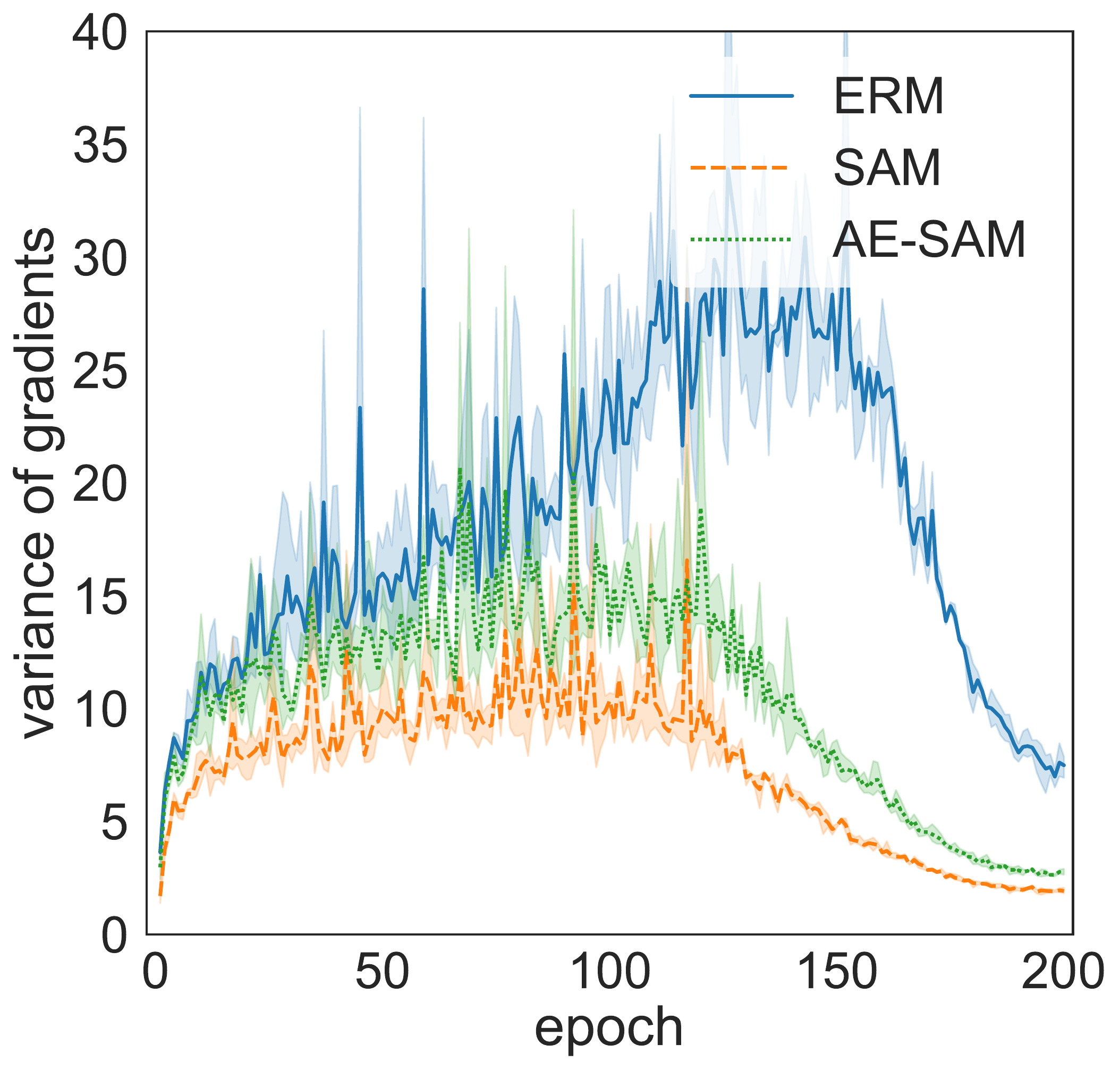}}
		\!\!\!
		\subfigure[\textit{PyramidNet-110}. \label{fig:variance-grad-norm-pyramidnet}]{\includegraphics[width=0.32\textwidth]{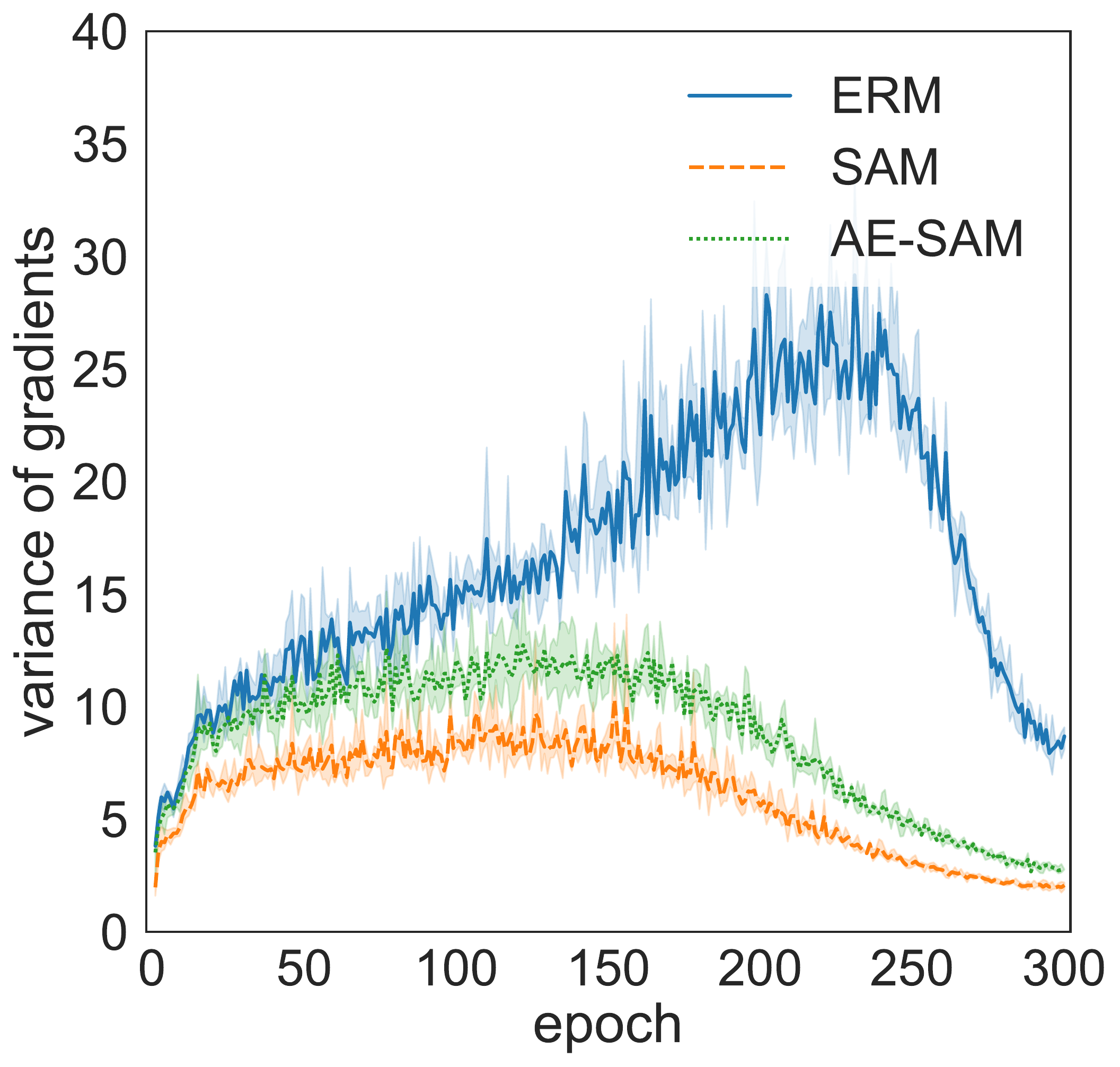}}
		\!\!\!
		\vskip -.2in
		\caption{Variance of gradient
			on \textit{CIFAR-100}. 
			Best viewed in color.
		}
		\label{fig:variance-gradient-trend}
	\end{figure}
	
	\begin{figure}[!t]
		\centering
		\vskip -.1in
		\!\!\!
		\subfigure[\textit{ResNet-18}.]
		{\includegraphics[width=0.32\textwidth]{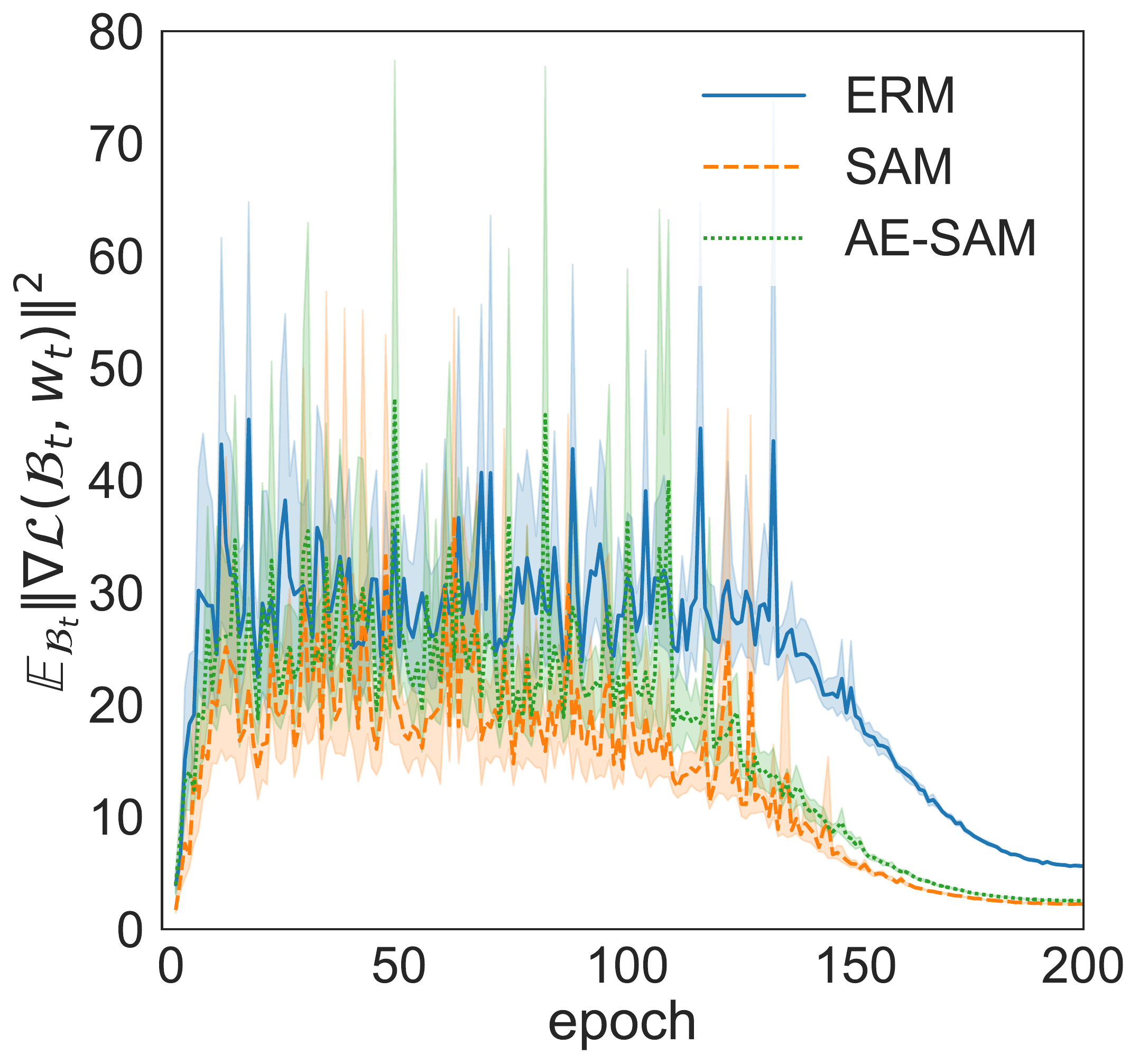}}
		\!\!\!
		\subfigure[\textit{WRN-28-10}.]
		{\includegraphics[width=0.33\textwidth]{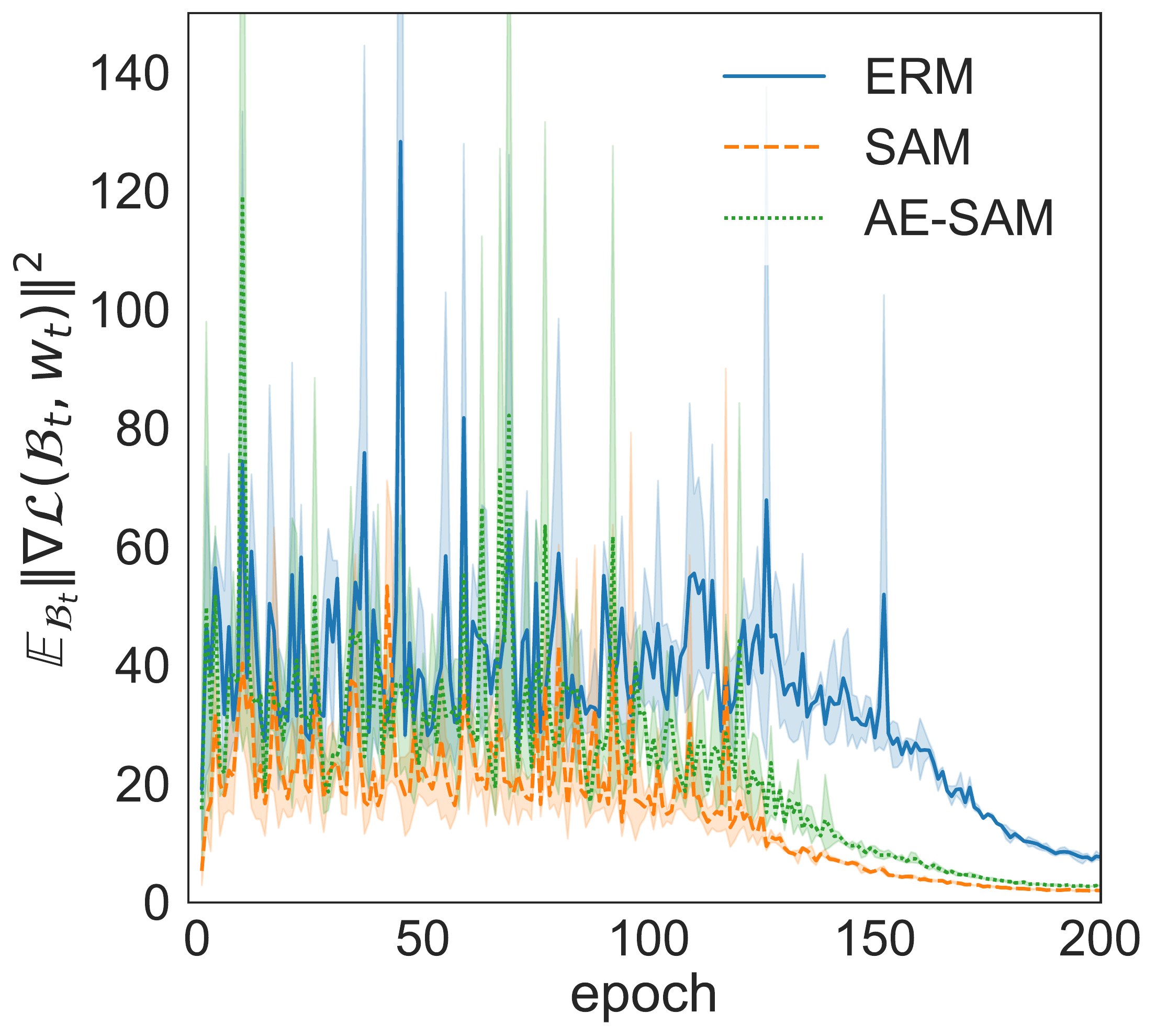}}
		\!\!\!
		\subfigure[\textit{PyramidNet-110}.]
		{\includegraphics[width=0.32\textwidth]{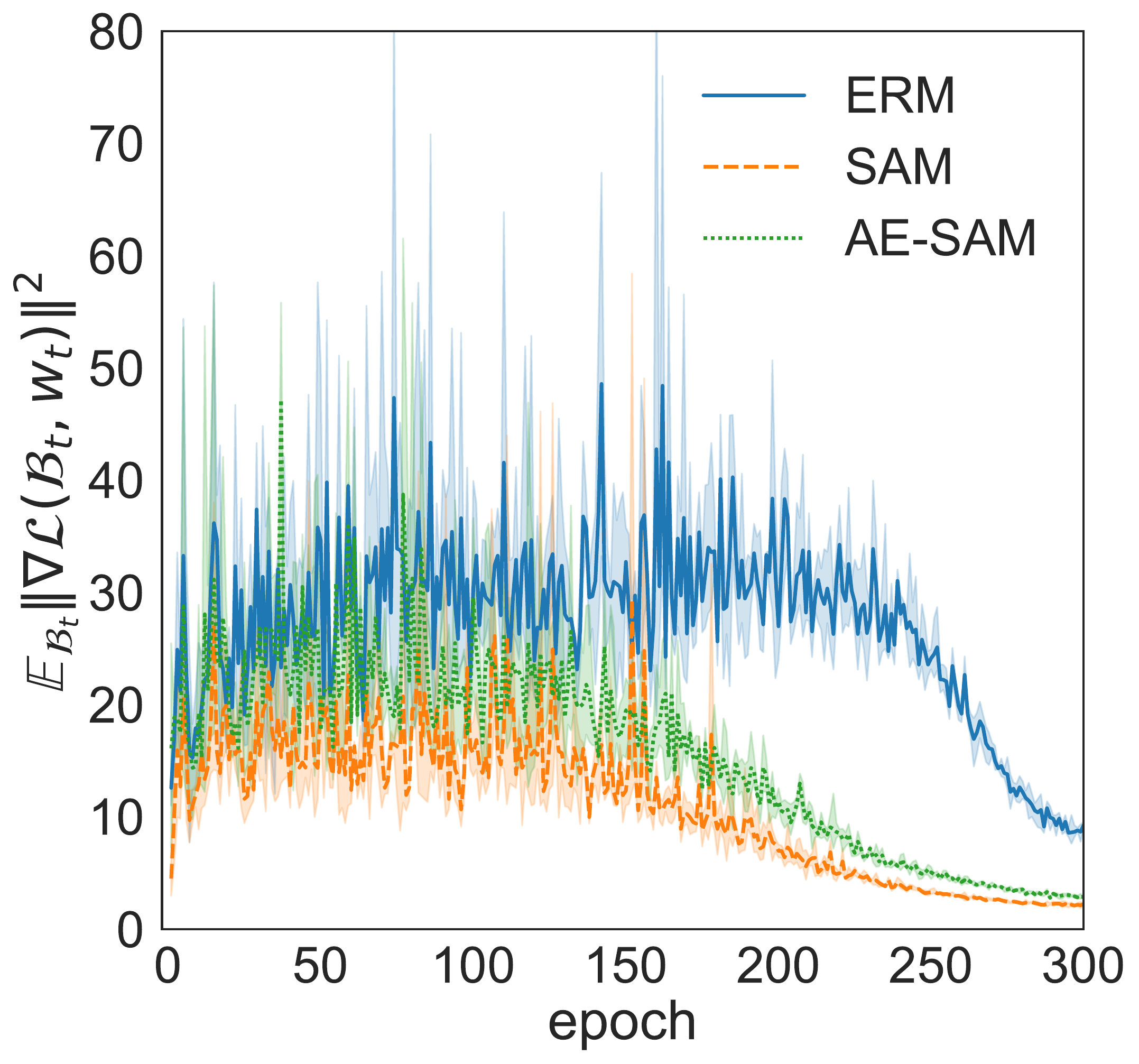}}
		\!\!\!
		\vskip -.2in
		\caption{Squared stochastic gradient norms $\bE_{\hB} \| \nabla \hL(\hB; \vw_t)  \|^2$
			on \textit{CIFAR-100}. 
			Best viewed in color.
		}
		\label{fig:sgd-squared-grad-norm-trend}
	\end{figure}
	
	\subsection{Adaptive Policy to Employ SAM}
	\label{sec:adaptive}
	
	As 
	$\bE_{\hB_t} \| \nabla \hL(\hB_t; \vw_t) \|^2$ changes with $t$ (Figure
	\ref{fig:sgd-squared-grad-norm-trend}),
	the sharpness at $\vw_t$ also changes along the optimization trajectory. 
	As a result,
	we need to estimate $\bE_{\hB_t} \| \nabla \hL(\hB_t; \vw_t) \|^2$
	at every
	iteration.
	One can sample a large number of mini-batches and compute the mean of the
	stochastic gradient norms.  However, this can be computationally
	expensive.
	To address this problem,
	we model $\| \nabla \hL(\hB_t; \vw_t) \|^2$
	with a simple distribution and estimate the distribution parameters in an online manner.
	Figure \ref{fig:grad-norm-dist} shows $\| \nabla \hL(\hB_t; \vw_t) \|^2$ of $400$
	mini-batches at different training stages (epoch $= 60$, $120$, and $180$) on
	\textit{CIFAR-100} using \textit{ResNet-18}\footnote{Results on other
		architectures and \textit{CIFAR-10} are shown in Figures \ref{fig-apd:gradnorm-dist-cifar} and \ref{fig-apd:gradnorm-qq-cifar} of Appendix \ref{apd:dist-gdn}.}.  As can be seen, the distribution
	follows a Bell curve.
	Figure \ref{fig:grad-norm-qqplot} shows the corresponding
	quantile-quantile (Q-Q) plot~\citep{wilk1968probability}.
	The
	closer is the curve to a
	line, the distribution is closer to the normal
	distribution.  Figure~\ref{fig:grad-norm} 
	suggests that $\| \nabla \hL(\hB_t; \vw_t)\|^2$ can be
	modeled\footnote{Note that normality is 
		not needed
		in the theoretical analysis (Section \ref{sec:convergence}).}
	with a 
	normal distribution $\hN(\mu_t, \sigma_t^2)$.
	We use exponential moving average (EMA),
	which is popularly used in adaptive gradient methods (e.g., RMSProp~\citep{tieleman2012lecture}, AdaDelta~\citep{zeiler2012adadelta}, Adam~\citep{kingma2015adam}),
	to
	estimate its mean and variance:
	\begin{align}
		\mu_t &= \delta \mu_{t-1} + (1-\delta)  \|\nabla \hL(\hB_t; \vw_t) \|^2, \label{eq:mu-update} \\
		\sigma_t^2 &= \delta \sigma_{t-1}^2 + (1-\delta)   (\|\nabla \hL(\hB_t; \vw_t) \|^2 - \mu_{t})^2, \label{eq:sigma-update}
	\end{align}
	where $\delta \in (0,1)$ controls the forgetting rate.
	Empirically, we use $\delta = 0.9$.
	Since $\nabla \hL(\hB_t; \vw_t)$
	is already available during training,   this
	EMA update does not involve additional gradient calculations (the cost for the norm operator is negligible).
	
	\begin{figure}[h]
		\vskip -.1in
		\centering
		\subfigure[Distributions. \label{fig:grad-norm-dist}]{\includegraphics[width=0.445\textwidth]{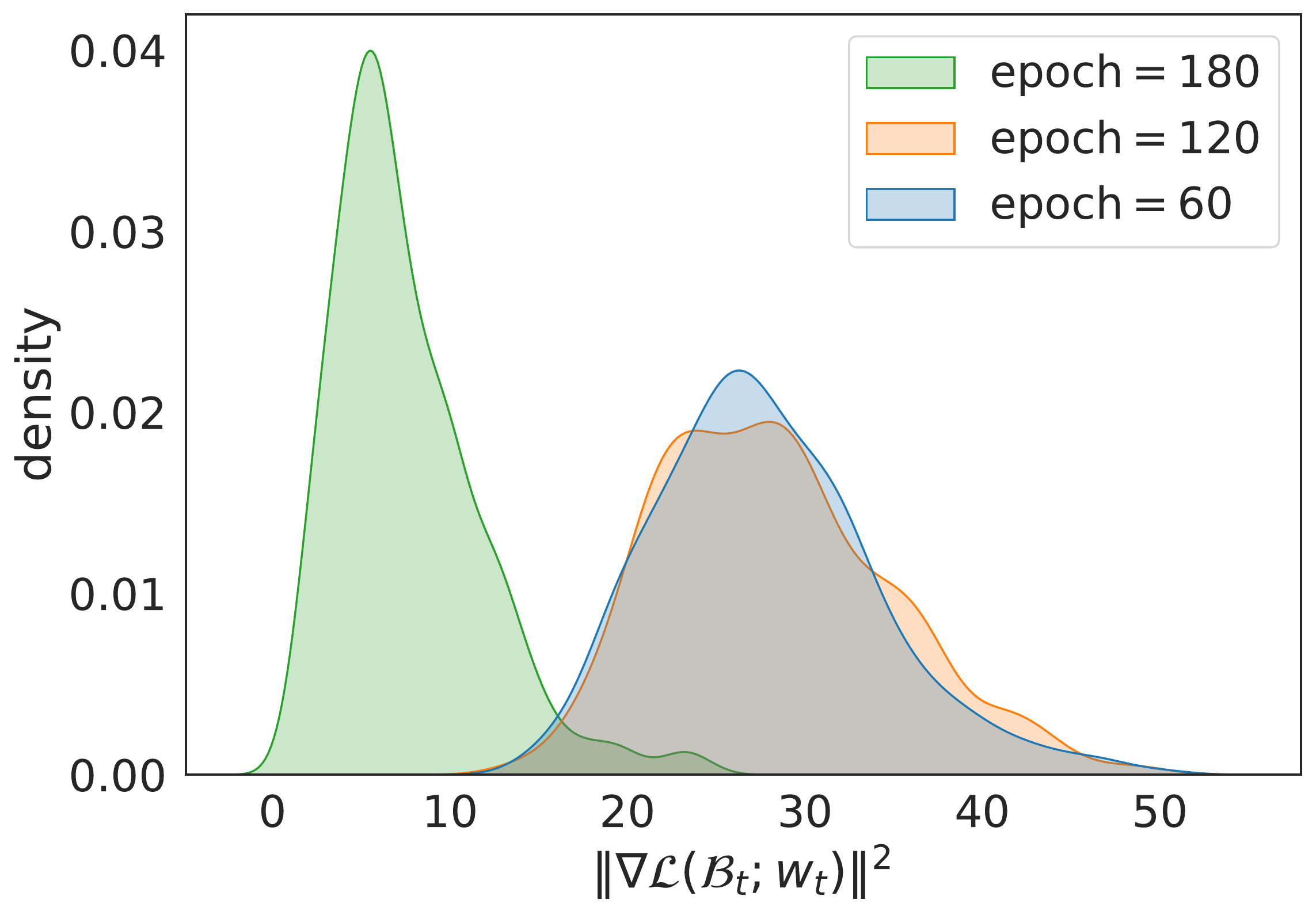}}
		\quad
		\subfigure[Q-Q plots.\label{fig:grad-norm-qqplot}]{\includegraphics[width=0.32\textwidth]{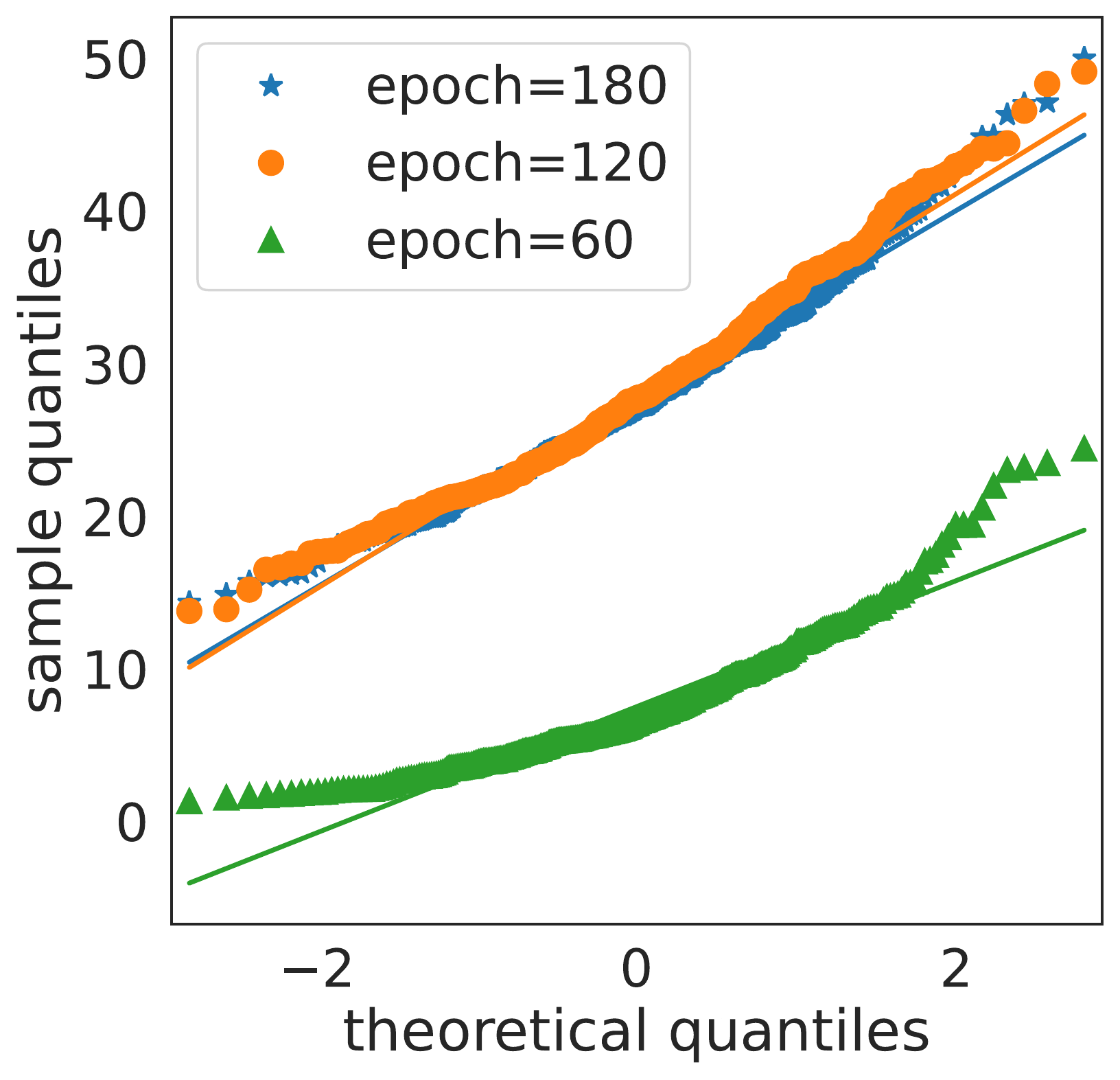}}	
		\vskip -.15in
		\caption{Stochastic gradient norms $\{\| \hL(\hB_t; \vw_t) \|^2: \hB_t\sim \hD\}$ of \textit{ResNet-18} on \textit{CIFAR-100} are approximately normally distributed.
			Best viewed in color.
		}
		\label{fig:grad-norm}
	\end{figure}
	
	Using $\mu_t$ and $\sigma_t^2$, we employ SAM only at iterations where $\| \nabla
	\hL(\hB_t; \vw_t)\|^2$ is relatively large (i.e., the loss landscape is locally sharp).  Specifically, when $\|\nabla
	\hL(\hB_t; \vw_t) \|^2 \geq \mu_t + c_t\sigma_t$ (where $c_t$ is a threshold), SAM
	is used; otherwise, ERM is used.  
	When $c_t\to -\infty$, it reduces to SAM; when $c_t\to \infty$, it becomes ERM.
	Note that during the early
	training stage, the model 
	is still underfitting and 
	$\vw_t$  is far from the region of final convergence.  Thus, minimizing the
	empirical loss is more important than seeking a locally flat region.
	\cite{andriushchenko2022towards} also empirically observe that the SAM update is
	more effective in boosting performance towards the end of training.  We therefore design a schedule 
	that linearly decreases $c_t$ from $\lambda_2$ to $\lambda_1$ (which are pre-set values):
	$c_t=g_{\lambda_1, \lambda_2}(t)\equiv
	\frac{t}{T} \lambda_1 + \left(1-\frac{t}{T}\right)\lambda_2  $,
	where $T$ is the total number of iterations.
	The whole procedure, called \underline{A}daptive policy to \underline{E}mploy
	\underline{SAM} (\textbf{AE-SAM}),
	is shown in Algorithm \ref{alg:etsam}.
	
	\textbf{AE-LookSAM}.
	The proposed adaptive policy can be combined with any SAM variant. Here, we
	consider integrating it with LookSAM~\citep{liu2022towards}.
	When $\|\nabla \hL(\hB_t; \vw_t) \|^2 \geq \mu_t + c_t \sigma_t$, SAM is used and
	the update direction for $\vw_t$ is decomposed into two orthogonal directions as
	in LookSAM: (i) the ERM update direction to reduce training loss, and  (ii)
	the direction  that
	biases the model to a flat region.
	When $\|\nabla \hL(\hB_t; \vw_t) \|^2 < \mu_t + c_t \sigma_t$,
	ERM is performed and the second direction of
	the previous SAM update is reused to
	compose an approximate SAM direction.
	The procedure, called AE-LookSAM, is also shown in Algorithm \ref{alg:etsam}.	
	
	\begin{algorithm}[!h]
		\caption{\cboxb{AE-SAM} and \cboxa{AE-LookSAM}.}
		\label{alg:etsam}
		\begin{algorithmic}[1]
			\Require training set $\hD$, stepsize $\eta$, radius $\rho$; $\lambda_1$ and $\lambda_2$ for $g_{\lambda_1, \lambda_2}(t)$; $\vw_0$, $\mu_{-1} = 0$, $\sigma_{-1}^2=e^{-10}$, and
			$\alpha$ for AE-LookSAM;
			
			\For{$t=0,\dots, T-1$}
			\State sample a mini-batch data $\hB_t$ from $\hD$;
			\State compute $\vg=\nabla \hL(\hB_t; \vw_t)$;
			\State update $\mu_t$ by \eqref{eq:mu-update} and $\sigma_t^2$
			by \eqref{eq:sigma-update};
			\State compute $c_t=g_{\lambda_1, \lambda_2}(t)$;
			\If{$\|\nabla \hL(\hB_t; \vw_t) \|^2 \geq \mu_t + c_t \sigma_t$}
			\State $\vg_s =  \nabla \hL(\hB_t; \vw_t + \rho \nabla \hL(\hB_t;\vw_t))$;
			\State \cboxa{if AE-LookSAM: decompose $\vg_s$ as $\vg_v = \vg_s - \frac{\vg^\top \vg_s}{\| \vg\|^2} \vg$;}
			\Else:
			\State \cboxb{if AE-SAM: $\vg_s = \vg$;}
			\State \cboxa{if AE-LookSAM: $\vg_s = \vg + \alpha \frac{\|\vg\|}{\|\vg_v\|} \vg_v$;}
			\EndIf
			\State $\vw_{t+1} = \vw_t - \eta \vg_s$;
			\EndFor \\
			\Return $\vw_T$.
		\end{algorithmic}
	\end{algorithm}
	
	\subsection{Convergence Analysis}
	\label{sec:convergence}

	In this section,
	we study the convergence of 
	any algorithm $\hA$ whose
	update in each iteration can be either SAM or ERM. 
	Due to this mixing of
	SAM and ERM updates, 
	analyzing
	its convergence is more challenging
	compared with that of SAM.
	
	The following assumptions on 
	smoothness
	and  bounded variance of stochastic gradients
	are standard in the literature on
	non-convex optimization~\citep{ghadimi2013stochastic, reddi2016stochastic}   
	and SAM~\citep{andriushchenko2022towards, abbas2022sharp, qu2022generalized}.
	
	\begin{assumption}[Smoothness]
		\label{ass:smooth}
		$\hL(\hD; \vw)$ is $\beta$-smooth in $\vw$, 
		i.e., $\|\nabla \hL(\hD;\vw) - \nabla \hL(\hD;\vv) \| \leq \beta \| \vw - \vv \|$.
	\end{assumption}
	
	\begin{assumption}[Bounded variance of stochastic gradients]
		\label{ass:bd-noise}
		$\bE_{(\vx_i, y_i) \sim \hD} \| \nabla \ell(f(\vx_i;\vw), y_i) - \nabla \hL(\hD; \vw) \|^2 \leq \sigma^2$.
	\end{assumption}
	
	Let $\xi_t$ be an indicator of whether SAM or ERM is used at iteration $t$ (i.e., $\xi_t=1$ for SAM, and $0$ for ERM).
	For example,
	$\xi_t = \bI_{\{\vw: \|\nabla \hL(\hB_t; \vw) \|^2 \geq \mu_t + c_t\sigma_t\}}(\vw_t)$
	for the proposed AE-SAM,
	and $\xi_t$ is sampled from a Bernoulli distribution
	for SS-SAM~\citep{zhao2022ss}.	
	
	\begin{theorem}
		\label{thm:conv-sgd}
		Let $b$ be the mini-batch size.  If stepsize $\eta = \frac{1}{4\beta\sqrt{T}}$ and
		$\rho = \frac{1}{T^{\frac{1}{4}}}$, algorithm $\hA$
		satisfies
		\begin{align}
			\min_{0\leq t\leq T-1} \bE  \| \nabla \hL(\hD; \vw_t) \|^2
			&\leq  \frac{32\beta \left(\hL(\hD; \vw_0) - \bE \hL(\hD; \vw_T)\right)}{\sqrt{T} \left(7-6\zeta\right)} 
			+  \frac{ (1 + \zeta +  5 \beta^2 \zeta )\sigma^2}{b\sqrt{T}\left(7-6\zeta\right)}, \label{eq:conv-bound-1} 
		\end{align}
		where 
		$\zeta =  \frac{1}{T} \sum_{t=0}^{T-1}\xi_t \in [0,1]$
		is the fraction of SAM updates,
		and  the expectation is taken over the random
		training samples.
	\end{theorem}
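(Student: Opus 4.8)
The plan is to run the textbook non-convex SGD descent argument, but to split the per-step error into its SAM part and its ERM part and track them separately, since only the SAM steps carry a bias. Write the common update as $\vw_{t+1} = \vw_t - \eta\,\widehat{\vg}_t$ where $\widehat{\vg}_t = \xi_t\widetilde{\vg}_t + (1-\xi_t)\vg_t$, with $\vg_t \equiv \nabla\hL(\hB_t;\vw_t)$ the stochastic gradient and $\widetilde{\vg}_t \equiv \nabla\hL(\hB_t;\vw_t+\rho\vg_t)$ the SAM direction. Applying Assumption~\ref{ass:smooth} (which, as is standard, I also invoke at the mini-batch level, e.g.\ under per-sample $\beta$-smoothness) gives the descent inequality
\[
\hL(\hD;\vw_{t+1}) \le \hL(\hD;\vw_t) - \eta\big\langle \nabla\hL(\hD;\vw_t),\,\widehat{\vg}_t\big\rangle + \tfrac{\beta\eta^2}{2}\big\|\widehat{\vg}_t\big\|^2 ,
\]
so the task reduces to taking the conditional expectation $\bE_t[\cdot]$ over $\hB_t$ and bounding the linear and quadratic terms by multiples of $\|\nabla\hL(\hD;\vw_t)\|^2$ and $\sigma^2/b$.

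For the ERM steps ($\xi_t=0$) this is classical: $\bE_t\vg_t = \nabla\hL(\hD;\vw_t)$ makes the linear term exactly $\|\nabla\hL(\hD;\vw_t)\|^2$, and Assumption~\ref{ass:bd-noise} gives $\bE_t\|\vg_t\|^2 \le \|\nabla\hL(\hD;\vw_t)\|^2 + \sigma^2/b$. For the SAM steps ($\xi_t=1$) the direction is biased, and the one quantitative input is that smoothness controls the perturbation displacement, $\|\widetilde{\vg}_t - \vg_t\| \le \beta\rho\|\vg_t\|$ (replace $\beta\rho\|\vg_t\|$ by $\beta\rho$ for the normalized perturbation of \eqref{eq:sam-rule}); combining this with Cauchy--Schwarz and Young's inequality bounds the extra linear piece $-\langle\nabla\hL(\hD;\vw_t),\widetilde{\vg}_t-\vg_t\rangle$, while $\|\widetilde{\vg}_t\|^2 \le 2\|\vg_t\|^2 + 2\beta^2\rho^2\|\vg_t\|^2$ handles the quadratic piece; taking $\bE_t$ and invoking Assumption~\ref{ass:bd-noise} once more reduces everything to $\|\nabla\hL(\hD;\vw_t)\|^2$ and $\sigma^2/b$. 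The subtlety worth flagging: for AE-SAM the indicator $\xi_t$ depends on $\hB_t$, hence is not independent of $\vg_t$; since $\xi_t\in\{0,1\}$ this is absorbed by keeping each $\xi_t$ glued to a non-negative quantity and only using $\xi_t\le1$ where the slack is harmless (equivalently, by conditioning additionally on $\xi_t$). As the paper notes, normality of $\|\nabla\hL(\hB_t;\vw_t)\|^2$ is never used.

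Combining the two regimes yields a per-iteration bound of the form $\bE_t\hL(\hD;\vw_{t+1}) \le \hL(\hD;\vw_t) - \eta\,a(\xi_t)\,\|\nabla\hL(\hD;\vw_t)\|^2 + \eta\,b(\xi_t)\,\sigma^2/b$, with $a(\cdot),b(\cdot)$ explicit in $\eta,\rho,\beta$. Substituting $\eta = \tfrac{1}{4\beta\sqrt T}$ and $\rho = T^{-1/4}$ and bounding the lower-order-in-$T$ terms, the coefficients collapse to exactly the shape in \eqref{eq:conv-bound-1}: a factor $\tfrac{7-6\xi_t}{8}$ in front of the gradient-norm term, and a noise coefficient that is $\propto 1$ on ERM steps and picks up the $\beta^2$-dependence on SAM steps. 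This constant-chasing is the only genuinely delicate bookkeeping in the argument.

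Finally, I would telescope the per-step inequality over $t=0,\dots,T-1$ and take the total expectation over the training samples, obtaining $\sum_{t=0}^{T-1}\eta\,a(\xi_t)\,\bE\|\nabla\hL(\hD;\vw_t)\|^2 \le \hL(\hD;\vw_0) - \bE\hL(\hD;\vw_T) + \tfrac{\sigma^2}{b}\sum_{t=0}^{T-1}\eta\,b(\xi_t)$; then lower-bound the left side by $\big(\min_{0\le t\le T-1}\bE\|\nabla\hL(\hD;\vw_t)\|^2\big)\sum_{t}\eta\,a(\xi_t)$, use $\sum_{t=0}^{T-1}\xi_t = \zeta T$ to evaluate $\sum_t a(\xi_t)$ and $\sum_t b(\xi_t)$ in terms of $\zeta$, and divide through; a last substitution of $\eta$ produces \eqref{eq:conv-bound-1}. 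The main obstacle is the SAM block: keeping the bias $\widetilde{\vg}_t-\vg_t$ and the data-dependent indicator $\xi_t$ under control simultaneously while still landing on the clean $(7-6\zeta)$-type constants; the ERM block and the telescoping step are routine.
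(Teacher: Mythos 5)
Your proposal follows essentially the same route as the paper's proof: the smoothness descent inequality applied to the mixed update $\vw_{t+1}=\vw_t-\eta\left((1-\xi_t)\vg_t+\xi_t\vh_t\right)$, the split by $\xi_t\in\{0,1\}$, the ERM part handled by unbiasedness plus the bounded-variance assumption, the SAM part controlled through the smoothness bound $\|\vh_t-\vg_t\|\le\rho\beta\|\vg_t\|$, and a final telescoping with $\eta=\frac{1}{4\beta\sqrt{T}}$, $\rho=T^{-1/4}$ and $\zeta=\frac{1}{T}\sum_t\xi_t$. The only differences are presentational: the Cauchy--Schwarz/Young argument you inline for the cross term $\bE\,\nabla\hL(\hD;\vw_t)^\top\vh_t$ is exactly the content of the lemma the paper imports from Andriushchenko and Flammarion, and the paper's Claim 2 bounds $\bE\|\vh_t\|^2$ slightly more carefully than your $\|\vh_t\|^2\le 2(1+\beta^2\rho^2)\|\vg_t\|^2$, which is the bookkeeping needed to land on the exact $(7-6\zeta)$ and $(1+\zeta+5\beta^2\zeta)$ constants that you assert but do not carry out.
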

	All proofs are in Appendix \ref{sec:proof}.
	Note that 
	a larger $\zeta$
	leads to a larger upper bound in \eqref{eq:conv-bound-1}.
	When $\zeta=1$,
	the above reduces to SAM
	(Corollary \ref{cor:conv-sam} of Appendix \ref{appendix:proof of thm}).
	
	\section{Experiments}
	\label{sec:expt}
	
	In this section, we evaluate the proposed AE-SAM and AE-LookSAM on several
	standard benchmarks.  As the SAM update doubles the computational overhead
	compared to the ERM update, the training speed is mainly determined by how often
	the SAM update is used.  Hence, we evaluate efficiency by measuring the fraction of SAM
	updates used:
	$\fracsam \equiv 
	100 \times \text{\#\{iterations using SAM\}}/T$.
	The total number of iterations, 
	$T$, 
	is the same for all methods.

	\subsection{\textit{CIFAR-10} and \textit{CIFAR-100}}
	\label{sec:setup-cifar}
	
	\textbf{Setup.}
	In this section,	experiments are performed on the \textit{CIFAR-10} and \textit{CIFAR-100} datasets~\citep{krizhevsky2009learning}
	using four 
	network
	architectures:
	\textit{ResNet-18}~\citep{he2016deep}, 
	\textit{WideResNet-28-10} (denoted \textit{WRN-28-10})~\citep{zagoruyko2016wide},
	\textit{PyramidNet-110}~\citep{han2017deep},
	and \textit{ViT-S16}~\citep{dosovitskiy2021an}.
	
	Following the setup in~\citep{liu2022towards, foret2021sharpness, zhao2022penalizing}, we use batch
	size
	$128$,
	initial learning rate of $0.1$, cosine learning rate schedule, SGD optimizer with momentum $0.9$ and weight decay $0.0001$.  The number of training epochs is $300$
	for \textit{PyramidNet-110},
	$1200$ for \textit{ViT-S16},
	and $200$
	for 
	\textit{ResNet-18} and \textit{WideResNet-28-10}.
	$10\%$ of the training set is used as the validation set.
	As in \cite{foret2021sharpness},
	we perform grid search
	for the radius $\rho$
	over $\{0.01, 0.02, 0.05, 0.1, 0.2, 0.5\}$ using the validation
	set.
	Similarly, $\alpha$ is selected
	by grid search over $\{0.1, 0.3, 0.6, 0.9\}$.
	For the $c_t$ schedule $g_{\lambda_1, \lambda_2}(t)$,
	$\lambda_1=-1$ and $\lambda_2=1$ for AE-SAM; $\lambda_1=0$
	and $\lambda_2=2$
	for AE-LookSAM.
	
	\begin{table}[!t]
		\centering
		\caption{Means and standard deviations of testing accuracy and fraction of SAM
			updates (\%SAM) on \textit{CIFAR-10} and \textit{CIFAR-100}. Methods are grouped based on 
			\%SAM.
			The highest accuracy 
			in each group 
			is underlined; while
			the highest accuracy  for each network architecture (across all 
			groups)
			is in bold.
		}
		\begin{tabular}{c c c c @{\hskip 0.4in} c c}
			\toprule
			&  & \multicolumn{2}{c}{\textit{CIFAR-10} } & \multicolumn{2}{c}{\textit{CIFAR-100} }  \\
			& & Accuracy & \fracsam & Accuracy & \fracsam \\
			\arrayrulecolor{black!50}\specialrule{1.5pt}{.3\jot}{0.3pc}
			\multirow{7}{*}{\STAB{\rotatebox[origin=c]{90}{\textit{ResNet-18}}}}
			& ERM & $95.41$ \mbox{\scriptsize $\pm 0.03$} & $0.0$ \mbox{\scriptsize $\pm 0.0$} 
			& $78.17$ \mbox{\scriptsize $\pm 0.05$} &  $0.0$ \mbox{\scriptsize $\pm 0.0$} 
			\\ \cmidrule{2-6} 
			& SAM~\citep{foret2021sharpness} &  $96.52$ \mbox{\scriptsize $\pm 0.12$}  & $100.0$ \mbox{\scriptsize $\pm 0.0$} 
			& $80.17$ \mbox{\scriptsize $\pm 0.15$}  & $100.0$ \mbox{\scriptsize $\pm 0.0$} 
			\\        & ESAM~\citep{du2021efficient} & $96.56$ \mbox{\scriptsize $\pm 0.08$}  & $100.0$ \mbox{\scriptsize $\pm 0.0$} & $80.41$ \mbox{\scriptsize $\pm 0.10$}  & $100.0$ \mbox{\scriptsize $\pm 0.0$} 
			\\
			\cmidrule{2-6}
			& SS-SAM~\citep{zhao2022ss}  & $96.40$ \mbox{\scriptsize $\pm 0.16$}& $50.0$ \mbox{\scriptsize $\pm 0.0$} & $80.10$ \mbox{\scriptsize $\pm 0.16$}& $50.0$ \mbox{\scriptsize $\pm 0.0$}  
			\\
			& AE-SAM   & $\underline{\mathbf{96.63}}$ \mbox{\scriptsize $\pm 0.04$}& $50.1$ \mbox{\scriptsize $\pm 0.1$} & $\underline{\mathbf{80.48}}$ \mbox{\scriptsize $\pm 0.11$} & $49.8$ \mbox{\scriptsize $\pm 0.0$}
			\\
			\cmidrule{2-6}
			& LookSAM~\citep{liu2022towards} & $96.32$ \mbox{\scriptsize $\pm 0.12$}  & $20.0$ \mbox{\scriptsize $\pm 0.0$} & $79.89$ \mbox{\scriptsize $\pm 0.29$} & $20.0$ \mbox{\scriptsize $\pm 0.0$} 
			\\
			& AE-LookSAM  & $\underline{96.56}$ \mbox{\scriptsize $\pm 0.21$}  & $20.0$ \mbox{\scriptsize $\pm 0.1$} & $\underline{80.29}$ \mbox{\scriptsize $\pm 0.37$} & $20.0$ \mbox{\scriptsize $\pm 0.0$} 
			\\
			\arrayrulecolor{black!50}\specialrule{1.5pt}{.3\jot}{0.3pc}
			\multirow{7}{*}{\STAB{\rotatebox[origin=c]{90}{\textit{WRN-28-10}}}}
			& ERM & $96.34$ \mbox{\scriptsize $\pm 0.12$} & $0.0$ \mbox{\scriptsize $\pm 0.0$} & $81.56$ \mbox{\scriptsize $\pm 0.14$} & $0.0$ \mbox{\scriptsize $\pm 0.0$} 
			\\ \cmidrule{2-6}
			& SAM~\citep{foret2021sharpness} & $97.27$ \mbox{\scriptsize $\pm 0.11$} & $100.0$ \mbox{\scriptsize $\pm 0.0$} & $83.42$ \mbox{\scriptsize $\pm 0.05$}  & $100.0$ \mbox{\scriptsize $\pm 0.0$} 
			\\
			& ESAM~\citep{du2021efficient} & $97.29$ \mbox{\scriptsize $\pm 0.11$}  & $100.0$ \mbox{\scriptsize $\pm 0.0$} & $\mathbf{84.51}$ \mbox{\scriptsize $\pm 0.02$} & $100.0$ \mbox{\scriptsize $\pm 0.0$} 
			\\ 
			\cmidrule{2-6}
			& SS-SAM~\citep{zhao2022ss}  & $97.09$  \mbox{\scriptsize $\pm 0.11$}& $50.0$ \mbox{\scriptsize $\pm 0.0$} & $82.89$  \mbox{\scriptsize $\pm 0.02$}&$50.0$ \mbox{\scriptsize $\pm 0.0$} 
			\\
			& AE-SAM & $\underline{\mathbf{97.30}}$ \mbox{\scriptsize $\pm 0.10$} & $49.5$ \mbox{\scriptsize $\pm 0.1$}  & $\underline{\mathbf{84.51}}$ \mbox{\scriptsize $\pm 0.11$}  & $49.6$ \mbox{\scriptsize $\pm 0.0$} 
			\\
			\cmidrule{2-6}
			& LookSAM~\citep{liu2022towards}  & $97.02$  \mbox{\scriptsize $\pm 0.12$}& $20.0$ \mbox{\scriptsize $\pm 0.0$} & 83.70  \mbox{\scriptsize $\pm 0.12$}& $20.0$ \mbox{\scriptsize $\pm 0.0$}  
			\\
			&AE-LookSAM & $\underline{97.15}$ \mbox{\scriptsize $\pm 0.08$} & $20.0$ \mbox{\scriptsize $\pm 0.0$} &$\underline{83.92}$ \mbox{\scriptsize $\pm 0.07$} & $20.2$ \mbox{\scriptsize $\pm 0.0$} 
			\\	
			\arrayrulecolor{black!50}\specialrule{1.5pt}{.3\jot}{0.3pc}
			\multirow{7}{*}{\STAB{\rotatebox[origin=c]{90}{\textit{PyramidNet-110}}}}
			& ERM & $96.62$ \mbox{\scriptsize $\pm 0.10$} & $0.0$ \mbox{\scriptsize $\pm 0.0$} & 	$81.89$ \mbox{\scriptsize $\pm 0.15$}  & $0.0$ \mbox{\scriptsize $\pm 0.0$} 
			\\ \cmidrule{2-6}
			& SAM~\citep{foret2021sharpness} & $97.30$ \mbox{\scriptsize $\pm 0.10$}  & $100.0$ \mbox{\scriptsize $\pm 0.0$} & $84.46$ \mbox{\scriptsize $\pm 0.05$} & $100.0$ \mbox{\scriptsize $\pm 0.0$}
			\\
			& ESAM~\citep{du2021efficient} & $97.81$ \mbox{\scriptsize $\pm 0.01$} & $100.0$ \mbox{\scriptsize $\pm 0.0$} & $85.56$ \mbox{\scriptsize $\pm 0.05$} & $100.0$ \mbox{\scriptsize $\pm 0.0$}   
			\\
			\cmidrule{2-6}
			& SS-SAM~\citep{zhao2022ss}   & $97.22$ \mbox{\scriptsize $\pm 0.10$}& $50.0$ \mbox{\scriptsize $\pm 0.0$} & $84.90$ \mbox{\scriptsize $\pm 0.05$}& $50.0$ \mbox{\scriptsize $\pm 0.0$} 
			\\
			& AE-SAM   &$\underline{\mathbf{97.90}}$ \mbox{\scriptsize $\pm 0.05$}  & $50.2$ \mbox{\scriptsize $\pm 0.1$} & $\underline{\mathbf{85.58}}$ \mbox{\scriptsize $\pm 0.10$}  & $49.8$ \mbox{\scriptsize $\pm 0.1$} 
			\\
			\cmidrule{2-6}
			& LookSAM~\citep{liu2022towards} & $97.10$ \mbox{\scriptsize $\pm 0.11$} & $20.0$ \mbox{\scriptsize $\pm 0.0$} & $84.01$ \mbox{\scriptsize $\pm 0.06$} & $20.0$ \mbox{\scriptsize $\pm 0.0$} 
			\\
			&AE-LookSAM & $\underline{97.22}$ \mbox{\scriptsize $\pm 0.11$}& $20.3$ \mbox{\scriptsize $\pm 0.0$} & $\underline{84.80}$ \mbox{\scriptsize $\pm 0.13$} & $20.2$ \mbox{\scriptsize $\pm 0.1$} 
			\\
			\arrayrulecolor{black!50}\specialrule{1.5pt}{.3\jot}{0.3pc}
			\multirow{7}{*}{\STAB{\rotatebox[origin=c]{90}{\textit{ViT-S16}}}} 
			&ERM & $86.69$ \mbox{\scriptsize $\pm 0.11$}  &  $0.0$ \mbox{\scriptsize $\pm 0.0$} & $62.42$ \mbox{\scriptsize $\pm 0.22$} & $0.0$ \mbox{\scriptsize $\pm 0.0$}  \\
			\cmidrule{2-6}
			&SAM~\citep{foret2021sharpness} & $87.37$ \mbox{\scriptsize $\pm 0.09$}  &  $100.0$ \mbox{\scriptsize $\pm 0.0$} & $63.23$ \mbox{\scriptsize $\pm 0.25$} & $100.0$ \mbox{\scriptsize $\pm 0.0$}  \\
			&ESAM~\citep{du2021efficient} & $84.27$ \mbox{\scriptsize $\pm 0.11$}  &  $100.0$ \mbox{\scriptsize $\pm 0.0$} & $62.11$ \mbox{\scriptsize $\pm 0.15$} & $100.0$ \mbox{\scriptsize $\pm 0.0$}  \\
			\cmidrule{2-6}
			&SS-SAM~\citep{zhao2022ss} & $87.38$ \mbox{\scriptsize $\pm 0.14$}  &  $50.0$ \mbox{\scriptsize $\pm 0.0$} & $63.18$ \mbox{\scriptsize $\pm 0.19$} & $50.0$ \mbox{\scriptsize $\pm 0.0$} \\
			&AE-SAM & \underline{$\mathbf{87.77}$} \mbox{\scriptsize $\pm 0.13$}  &  $49.7$ \mbox{\scriptsize $\pm 0.1$} & \underline{$63.68$} \mbox{\scriptsize $\pm 0.23$} & $49.5$ \mbox{\scriptsize $\pm 0.2$}  \\
			\cmidrule{2-6}
			&LookSAM~\citep{liu2022towards} & $87.12$ \mbox{\scriptsize $\pm 0.20$}  &  $20.0$ \mbox{\scriptsize $\pm 0.0$} & $63.52$ \mbox{\scriptsize $\pm 0.19$} & $20.0$ \mbox{\scriptsize $\pm 0.0$}  \\
			&AE-LookSAM & \underline{$87.32$} \mbox{\scriptsize $\pm 0.11$}  &  $20.2$ \mbox{\scriptsize $\pm 0.2$} & \underline{$\mathbf{64.16}$} \mbox{\scriptsize $\pm 0.23$} & $20.3$ \mbox{\scriptsize $\pm 0.2$} \\
			\arrayrulecolor{black!50}\specialrule{1.5pt}{.3\jot}{0.3pc}
		\end{tabular}
		\label{table:result-cifar} 
	\end{table}
	
	\textbf{Baselines.}
	The proposed AE-SAM and AE-LookSAM are compared with the following baselines:
	\begin{enumerate*}[(i), series = tobecont, itemjoin = \quad]
		\item ERM; \item SAM~\citep{foret2021sharpness}; and its more efficient variants including
		\item ESAM~\citep{du2021efficient}
		which uses part of the weights to compute the
		perturbation and part of the samples to compute the SAM update direction.
		These two techniques can reduce the computational cost,
		but may not always accelerate SAM, particularly in parallel
		training~\citep{li2020pytorch};
		\item SS-SAM~\citep{zhao2022ss},
		which randomly selects SAM or ERM according to a Bernoulli trial with success
		probability $0.5$. This is
		the scheme with the best performance in
		\citep{zhao2022ss};
		\item LookSAM~\citep{liu2022towards} which uses SAM at every $k=5$ steps.
	\end{enumerate*}
	The experiment is repeated five times with different random seeds.
	
	\textbf{Results.} Table \ref{table:result-cifar} shows the testing accuracy and
	fraction of SAM updates (\%SAM).
	Methods are grouped based on \fracsam. 
	As can be seen, 
	AE-SAM has higher accuracy than SAM while using only 50\% of SAM updates.
	SS-SAM
	and AE-SAM have comparable \fracsam\   (about $50\%$), and
	AE-SAM achieves higher accuracy than SS-SAM (which is statistically significant
	based on 
	the pairwise t-test 
	at $95\%$ significance level).
	Finally, LookSAM and
	AE-LookSAM 
	have comparable \fracsam\  (about $20\%$), and
	AE-LookSAM also has higher accuracy
	than LookSAM.
	These improvements
	confirm that the adaptive policy is better.

	\subsection{\textit{ImageNet}}
	\textbf{Setup.}
	In this section, we perform 
	experiments on 
	the 
	\textit{ImageNet}~\citep{russakovsky2015imagenet}, 
	which contains $1000$ classes and $1.28$ million
	images.
	The \textit{ResNet-50}~\citep{he2016deep}
	is used.
	Following the setup 
	in~\cite{du2021efficient},
	we train the network
	for $90$ epochs 
	using a SGD optimizer with momentum $0.9$, weight decay $0.0001$,
	initial learning rate $0.1$, cosine learning rate schedule,
	and batch size $512$.
	As in~\citep{foret2021sharpness,du2021efficient},
	$\rho=0.05$.
	For the $c_t$ schedule $g_{\lambda_1, \lambda_2}(t)$,
	$\lambda_1=-1$ and $\lambda_2=1$ for AE-SAM; $\lambda_1=0$
	and $\lambda_2=2$
	for AE-LookSAM.
	$k=5$ is used for LookSAM.
	Experiments are repeated with three different
	random seeds.
	
	\textbf{Results.}
	Table \ref{table:summary-imagnet} shows the testing accuracy and fraction of SAM
	updates.  As can be seen, with only half of the iterations using SAM,
	AE-SAM achieves comparable performance
	as SAM.
	Compared with LookSAM,
	AE-LookSAM has
	better
	performance (which is also statistically significant),
	verifying
	the proposed adaptive policy is 
	more effective than
	LookSAM's periodic policy.
	
	\begin{table}[!h]
		\centering
		\vskip -.05in
		\caption{Means and standard deviations of testing accuracy and fraction of SAM
			updates (\%SAM) on \textit{ImageNet}
			using \textit{ResNet-50}.
			Methods are grouped based on 
			\%SAM.
			The highest accuracy 
			in each group 
			is underlined; while
			the highest across all groups
			is in bold.}
		\begin{tabular}{c c c}
			\toprule
			& 
			Accuracy & \fracsam \\
			\midrule
			ERM & $77.11$ \mbox{\scriptsize $\pm 0.14$}  & $0.0$ \mbox{\scriptsize $\pm 0.0$}
			\\
			\midrule
			SAM~\citep{foret2021sharpness} &  $\mathbf{77.47}$ 
			\mbox{\scriptsize $\pm 0.12$}  & $100.0$ \mbox{\scriptsize $\pm 0.0$}  
			\\
			ESAM~\citep{du2021efficient} & $77.25$  \mbox{\scriptsize $\pm 0.75$}   & $100.0$ \mbox{\scriptsize $\pm 0.0$}
			\\
			\midrule
			SS-SAM~\citep{zhao2022ss}  & $77.38$ \mbox{\scriptsize $\pm 0.06$}&   $50.0$ \mbox{\scriptsize $\pm 0.0$} 
			\\
			AE-SAM & \underline{$77.43$} \mbox{\scriptsize $\pm 0.06$} & $49.4$  \mbox{\scriptsize $\pm 0.0$}
			\\
			\midrule
			LookSAM~\citep{liu2022towards} & $77.13$ \mbox{\scriptsize $\pm 0.09 $} & $20.0$ 
			\mbox{\scriptsize $\pm 0.0$}
			\\
			AE-LookSAM & $\underline{77.29}$ \mbox{\scriptsize $\pm 0.08$}   & $20.3$ \mbox{\scriptsize $\pm 0.0$}
			\\
			\bottomrule
		\end{tabular}
		\label{table:summary-imagnet} 
	\end{table}
	
	\subsection{Robustness to Label Noise}
	\label{sec:noisy-level}
	
	\textbf{Setup.}
	In this section, we study 
	whether the more-efficient SAM variants will affect its robustness to training
	label noise.
	Following the setup in~\cite{foret2021sharpness}, we conduct
	experiments on a corrupted version of \textit{CIFAR-10}, with some of its training
	labels randomly flipped  (while its testing set is
	kept clean).
	The \textit{ResNet-18} and \textit{ResNet-32}
	networks
	are used. 
	They  are trained for $200$ epochs using SGD with momentum $0.9$, weight decay
	$0.0001$, batch size $128$, initial learning rate $0.1$, and cosine learning rate
	schedule.  
	For LookSAM, the SAM update is used every $k=2$ steps.\footnote{The performance
		of
		LookSAM can be sensitive to the value of $k$. 
		Table \ref{table:looksam-noisy-label} of
		Appendix~\ref{sec:appendix-label-noise} shows that
		using $k=2$
		leads to the best performance in this experiment.}
	For AE-SAM and AE-LookSAM, 
	we set
	$\lambda_1=-1$ and $\lambda_2=1$
	in their $c_t$ schedules $g_{\lambda_1, \lambda_2}(t)$, 
	such that their fractions of SAM updates 
	(approximately $50\%$)
	are comparable with
	SS-SAM and LookSAM.
	Experiments are repeated with five different
	random seeds.
	
	\textbf{Results.}
	Table \ref{table:noisy-labels}
	shows the testing accuracy
	and fraction of SAM updates.
	As can be seen,
	AE-LookSAM achieves comparable performance with SAM but is faster as
	only half of the iterations use the SAM update.
	Compared with 
	ESAM, SS-SAM, and LookSAM,
	AE-LookSAM performs better.
	The improvement is particularly noticeable
	at the higher noise levels (e.g., $80\%$).   
	
	Figure \ref{fig:curve-noisy-resnet18} shows the training and testing accuracies
	with number of epochs
	at a noise level of $80\%$
	using \textit{ResNet-18}\footnote{Results for other 
		noise levels and
		\textit{ResNet-32} are shown in Figures
		\ref{fig:curve-noisy-resnet18-apd} and \ref{fig:curve-noisy-resnet32-apd} of
		Appendix \ref{sec:robust}, respectively.}. As can be seen, SAM is robust to the
	label noise, while
	ERM and SS-SAM heavily suffer from overfitting.
	AE-SAM and LookSAM 
	can alleviate the overfitting problem to a certain extent.
	AE-LookSAM,
	by combining the adaptive policy with LookSAM,
	achieves the same high level of robustness as SAM.

	\begin{table}[!t]
		\centering
		\vskip -.05in		
		\caption{Testing accuracy and fraction of SAM updates on \textit{CIFAR-10}
			with different levels of label noise.
			The best accuracy is in bold
			and the second best is underlined.
		}
		\resizebox{.98\textwidth}{!}{
			\begin{tabular}{c @{\hskip .07in} c @{\hskip .05in} c @{\hskip .05in} c @{\hskip .2in}  c @{\hskip .05in}c @{\hskip .2in} c @{\hskip .05in}c @{\hskip .2in} c@{\hskip .05in} c}
				\toprule
				\multicolumn{2}{c}{ }& \multicolumn{2}{c}{$\text{noise }=20\%$}
				& \multicolumn{2}{c}{$\text{noise }=40\%$}
				& \multicolumn{2}{c}{$\text{noise }=60\%$}
				& \multicolumn{2}{c}{$\text{noise }=80\%$} \\
				\multicolumn{2}{c}{ } &accuracy & \fracsam
				&accuracy & \fracsam
				&accuracy & \fracsam
				&accuracy & \fracsam	\\
				\midrule
				\multirow{7}{*}{\STAB{\rotatebox[origin=c]{90}{\textit{ResNet-18}}}}
				& ERM & $87.92$ & $0.0$  & $70.82$ 
				&$0.0$  & $49.61$  
				&$0.0$ & $28.23$   & $0.0$  
				\\
				& SAM~\citep{foret2021sharpness} &   
				$\mathbf{94.80}$    & $100.0$ &	\underline{$91.50$} 
				&$100.0$ & $\mathbf{88.15}$  
				&$100.0$  & $\mathbf{77.40}$  & $100.0$ 
				\\
				&ESAM~\citep{du2021efficient} & $94.19$ &$100.0$  & $91.46$   
				&$100.0$ & $81.30$  
				&$100.0$  & $15.00$  & $100.0$ 
				\\ 
				& SS-SAM~\citep{zhao2022ss}  
				& 	$90.62$   &  $50.0$ 
				& $77.84$   & $50.0$  
				& $61.18$  	& $50.0$  
				& $47.32$ & $50.0$  
				\\
				&LookSAM~\citep{liu2022towards} 
				& $92.72$ 	& $50.0$  
				& $88.04$   	&$50.0$   
				& $72.26$   &$50.0$  
				& $69.72$  & $50.0$ 
				\\
				& AE-SAM 
				& 	$92.84$  & $50.0$ 
				&	$84.17$   &$50.0$   
				& $73.54$   	&$49.9$ 
				& $65.00$   	& $50.0$   
				\\
				&AE-LookSAM 
				&  \underline{$94.34$}    & $49.9$   
				&  $\mathbf{91.58}$   &$50.0$  
				& \underline{$87.85$}   &$50.0$ 
				& \underline{$76.90$}   & $50.0$  
				\\
				\midrule 
				\multirow{7}{*}{\STAB{\rotatebox[origin=c]{90}{\textit{ResNet-32}}}}
				& ERM & 	$87.43$   & $0.0$   & $70.82$     &$0.0$ & $46.26$  & $0.0$  & $29.00$  
				& $0.0$  
				\\
				&SAM~\citep{foret2021sharpness} &   
				$\mathbf{95.08}$  & $100.0$  &	$91.01$  &$100.0$  & $\mathbf{88.90}$  & $100.0$  & $\mathbf{77.32}$   & $100.0$ 
				\\
				& ESAM~\citep{du2021efficient} & $93.42$    & $100.0$   & $\underline{91.63}$   & $100.0$   & 	$82.73$    & $100.0$ 
				& $10.09$ & $100.0$ 
				\\ 
				&SS-SAM~\citep{zhao2022ss}  & $89.63$   & $50.0$  & $74.17$  & $50.0$ & $58.40$  
				&$50.0$  & $59.53$   & $50.0$
				\\
				&LookSAM~\citep{liu2022towards} & $92.49$   & $50.0$   & $86.56$ 
				& $50.0$    & $63.35$ 
				& $50.0$   & $68.01$  &
				$50.0$  
				\\
				& AE-SAM & $92.87$     & $50.0$     & 	$82.85$ & $50.0$   & $71.50$   & $50.0$  & $65.43$   & $50.3$  
				\\
				&AE-LookSAM &  \underline{$94.70$}   & $50.0$   & $\mathbf{91.80}$  
				& $50.0$   & \underline{$88.22$}  
				& $50.0$   & \underline{$77.03$}    & $49.8$   
				\\
				\bottomrule
			\end{tabular}
		}
		\label{table:noisy-labels} 
	\end{table}
	
	\begin{figure}[!t]
		\centering
		\subfigure[Training accuracy. \label{fig:noisy-cifar10-80-resnet18-train}]{\includegraphics[width=0.32\textwidth]{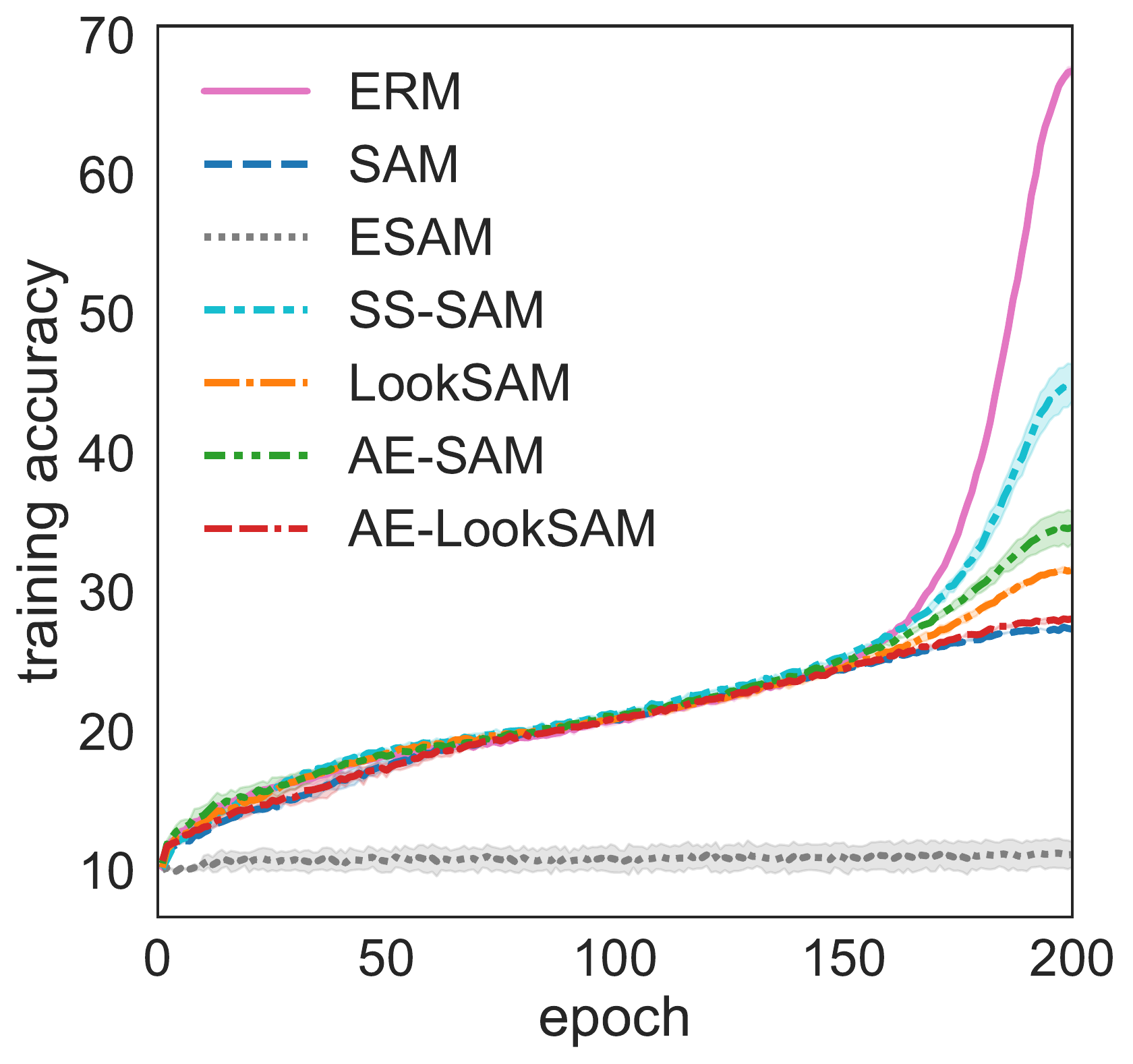}}
		\quad \quad 
		\subfigure[Testing accuracy. \label{fig:noisy-cifar10-80-resnet18-valid}]{\includegraphics[width=0.32\textwidth]{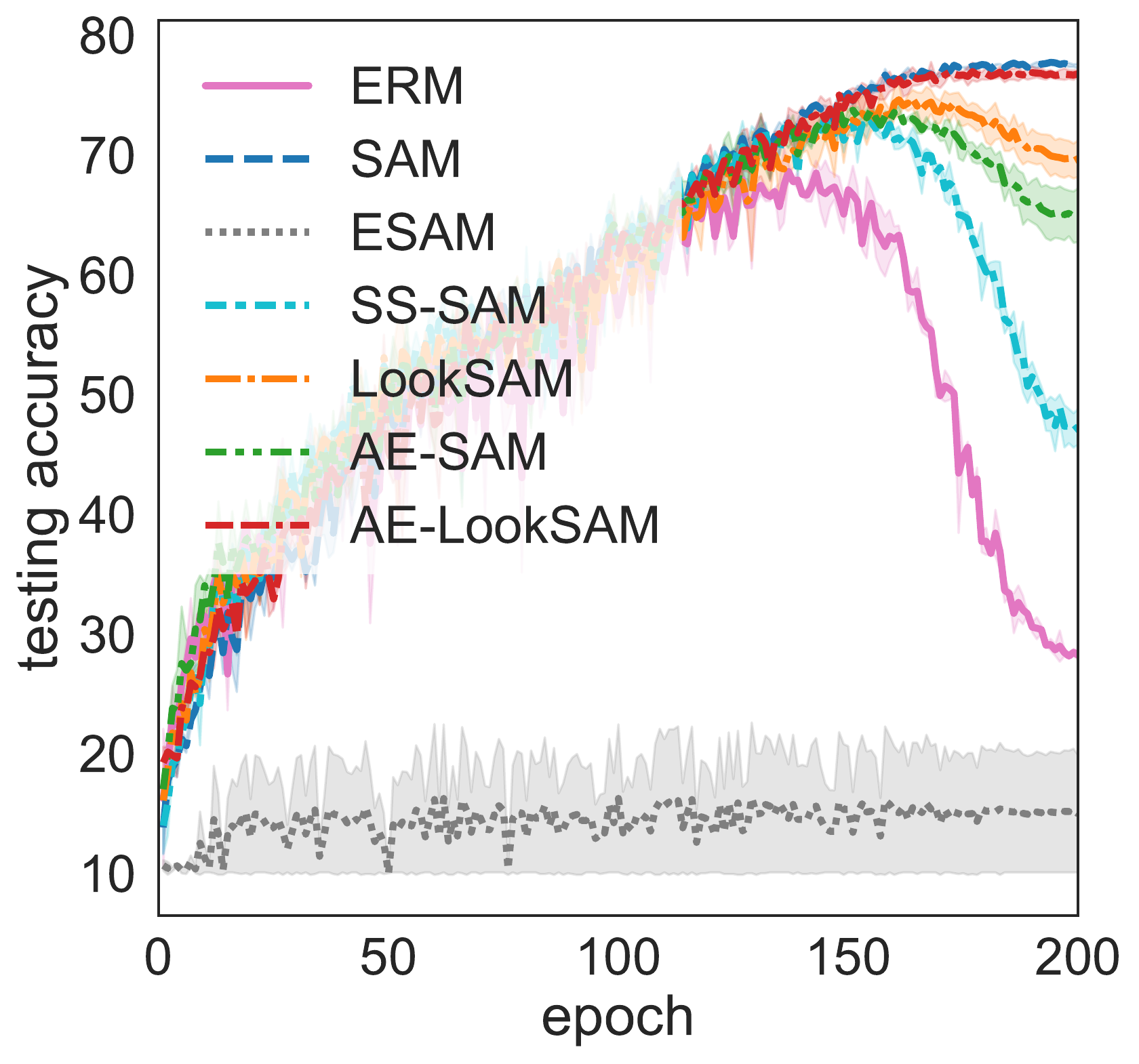}}
		\vskip -.2in
		\caption{Accuracies with number of training epochs on \textit{CIFAR-10} (with $80\%$ noise labels) using \textit{ResNet-18}.
			Best viewed in color.
		}
		\label{fig:curve-noisy-resnet18}
	\end{figure}
	
	\subsection{Effects of $\lambda_1$ and $\lambda_2$}
	\label{sec:ablation study ae-sam}
	
	In this experiment, we study the effects of $\lambda_1$ and $\lambda_2$ on AE-SAM.
	We use the same setup as in Section \ref{sec:setup-cifar},
	where
	$\lambda_1$ 
	and $\lambda_2$ (with $\lambda_1\leq \lambda_2$)
	are chosen from $\{0, \pm 1, \pm 2\}$.
	Results on AE-LookSAM using the label noise setup in Section
	\ref{sec:noisy-level} are shown in Appendix \ref{sec:lambda}.
	
	Figure \ref{fig:ablation-samfrac} shows the effect
	on the fraction of SAM updates.  
	For a fixed $\lambda_2$, increasing $\lambda_1$ 
	increases  the threshold
	$c_t$,
	and the condition $\| \nabla \hL(\hB_t; \vw_t) \|^2\geq \mu_t + c_t \sigma_t$
	becomes more difficult to satisfy.
	Thus,  as can be seen,
	the fraction of SAM updates is 
	reduced. The same applies when 
	$\lambda_2$ increases.
	A similar trend is also observed 
	on the testing accuracy
	(Figure \ref{fig:ablation-acc}).
	
	\begin{figure}[!t]
		\centering
		\begin{minipage}{.46\textwidth}
			\vskip .14in
			\centering
			\!\!\!
			\subfigure[\textit{CIFAR-10}.\label{fig:ablation-cifar10-samfrac}]{\includegraphics[width=0.50\textwidth]{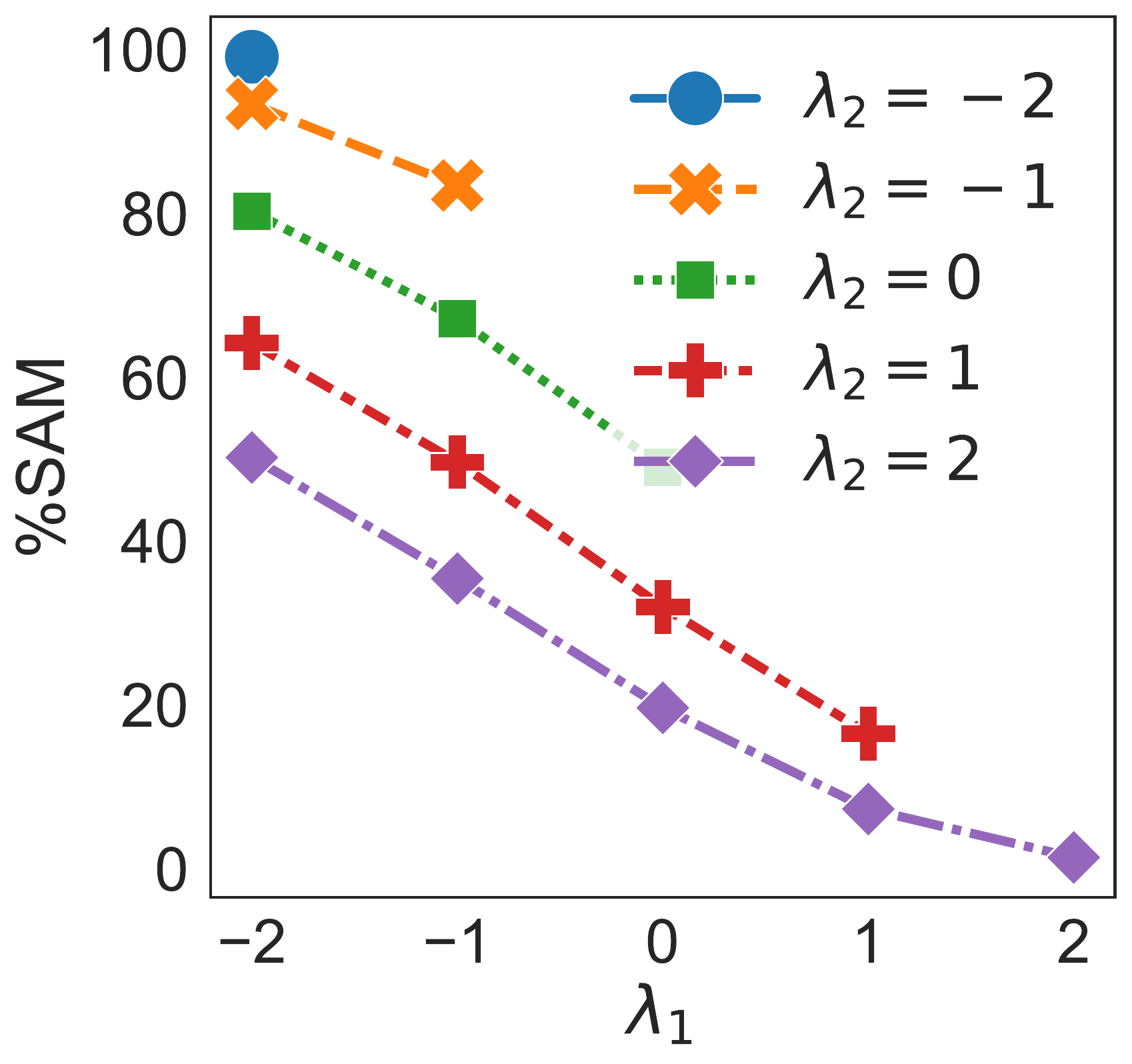}}	
			\subfigure[\textit{CIFAR-100}. \label{fig:ablation-cifar100-samfrac}]{\includegraphics[width=0.50\textwidth]{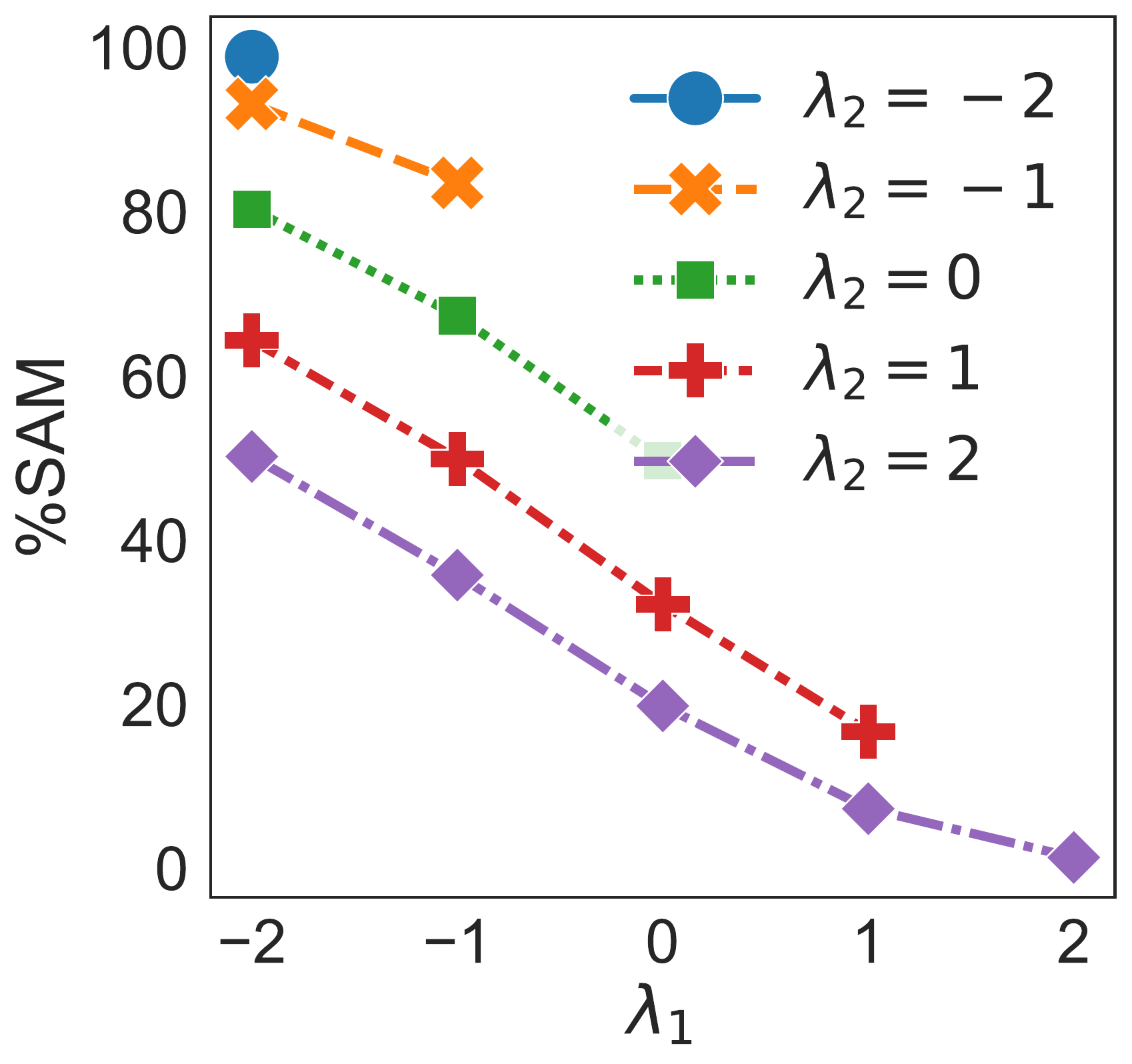}}
			\!\!\!
			\vskip -.1in
			\caption{
				Effects of $\lambda_1$ and $\lambda_2$ on fraction of SAM updates
				using \textit{ResNet-18}. 
				Best viewed in color.
			}
			\label{fig:ablation-samfrac}
		\end{minipage} \hfill
		\begin{minipage}{.49\textwidth}
			\centering
			\!\!\!
			\subfigure[\textit{CIFAR-10}.\label{fig:ablation-cifar10-acc}]{\includegraphics[width=0.49\textwidth]{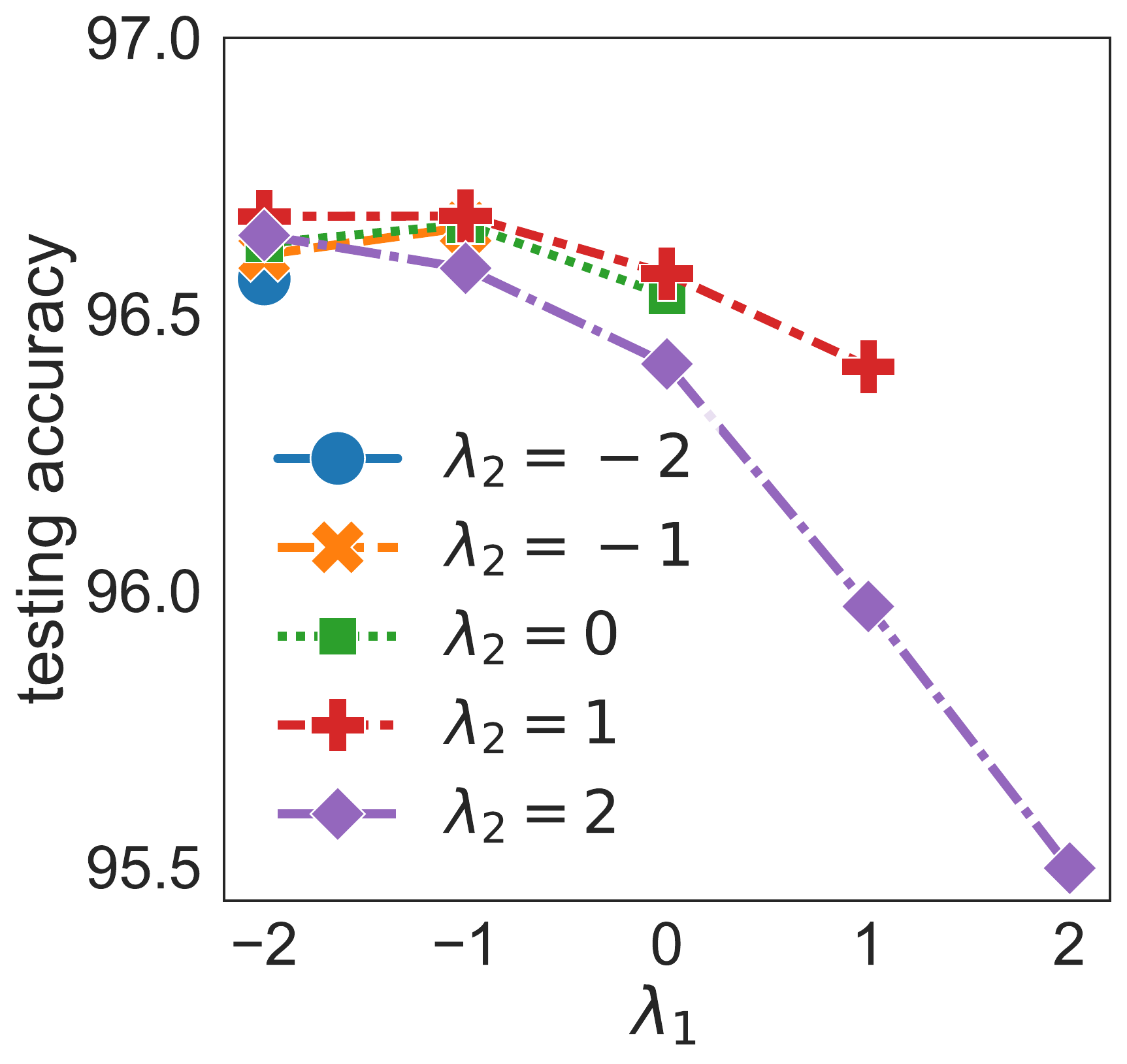}}	
			\!\!\!
			\subfigure[\textit{CIFAR-100}. \label{fig:ablation-cifar100-acc}]{\includegraphics[width=0.475\textwidth]{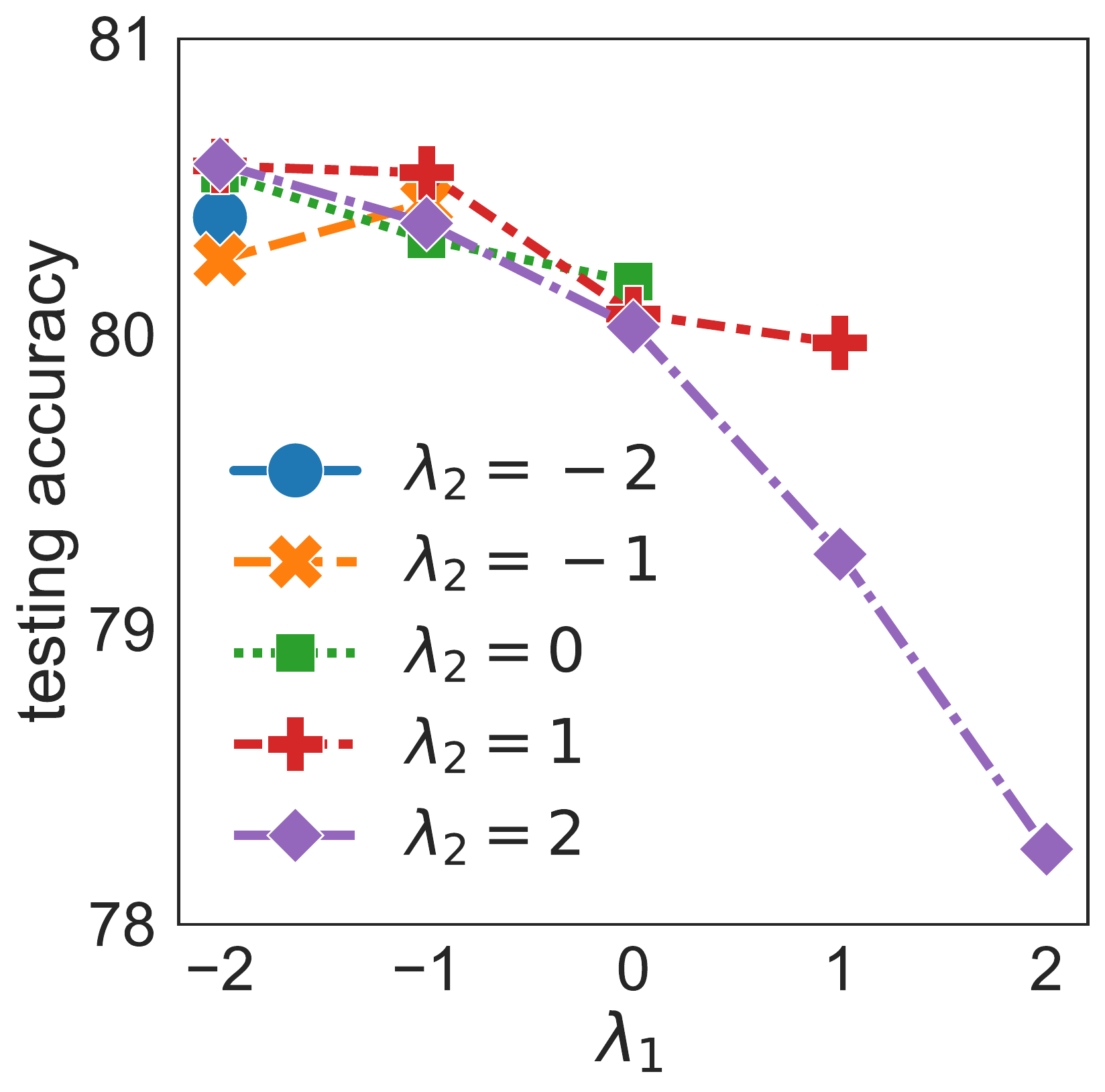}}
			\!\!\!
			\vskip -.1in
			\caption{
				Effects of $\lambda_1$ and $\lambda_2$ on testing accuracy
				using \textit{ResNet-18}. 
				Best viewed in color.
			}
			\label{fig:ablation-acc}
		\end{minipage}
	\end{figure}

	\subsection{Convergence}
	In this experiment,
	we study 
	whether $\vw_t$'s (where
	$t$ is the number of epochs)
	obtained from  AE-SAM 
	can reach critical points of $\hL(\hD; \vw)$, as suggested in Theorem
	\ref{thm:conv-sgd}.
	Figure  \ref{fig:gradient-norm-trend}
	shows $\|\nabla \hL(\hD; \vw_t)\|^2$ w.r.t. $t$
	for the experiment in Section \ref{sec:setup-cifar}.
	As can be seen,
	in all settings,
	$\|\nabla \hL(\hD; \vw_t) \|^2$
	converges to $0$.
	In Appendix \ref{sec:appendix-expt-result}, we
	also verify the convergence of AE-SAM's training loss on \textit{CIFAR-10} and \textit{CIFAR-100}
	(Figure \ref{fig:loss-trend}), and that
	AE-SAM and SS-SAM  have comparable convergence speeds
	(Figure \ref{fig:conv-compar-loss-trend}),
	which agrees with Theorem \ref{thm:conv-sgd}
	as both have comparable fractions of SAM updates (Table \ref{table:result-cifar}).
	
	\begin{figure}[!h]
		\centering
		\vskip -.15in
		\!\!
		\subfigure[\textit{ResNet-18}. \label{fig:grad-norm-trend-resnet}]{\includegraphics[width=0.32\textwidth]{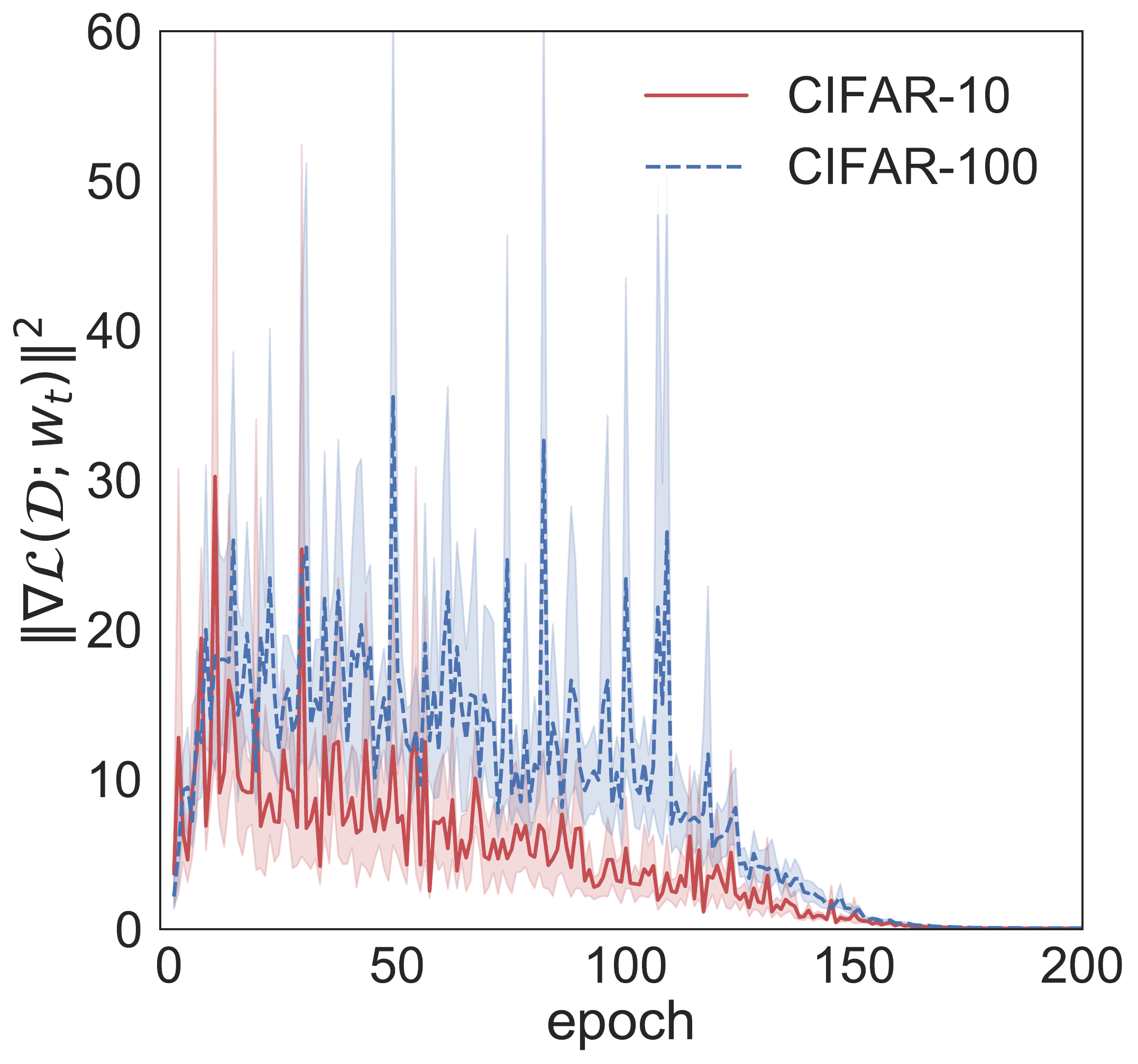}}
		\!\!\!
		\subfigure[\textit{WRN-28-10}. \label{fig:grad-norm-trend-wrn}]{\includegraphics[width=0.33\textwidth]{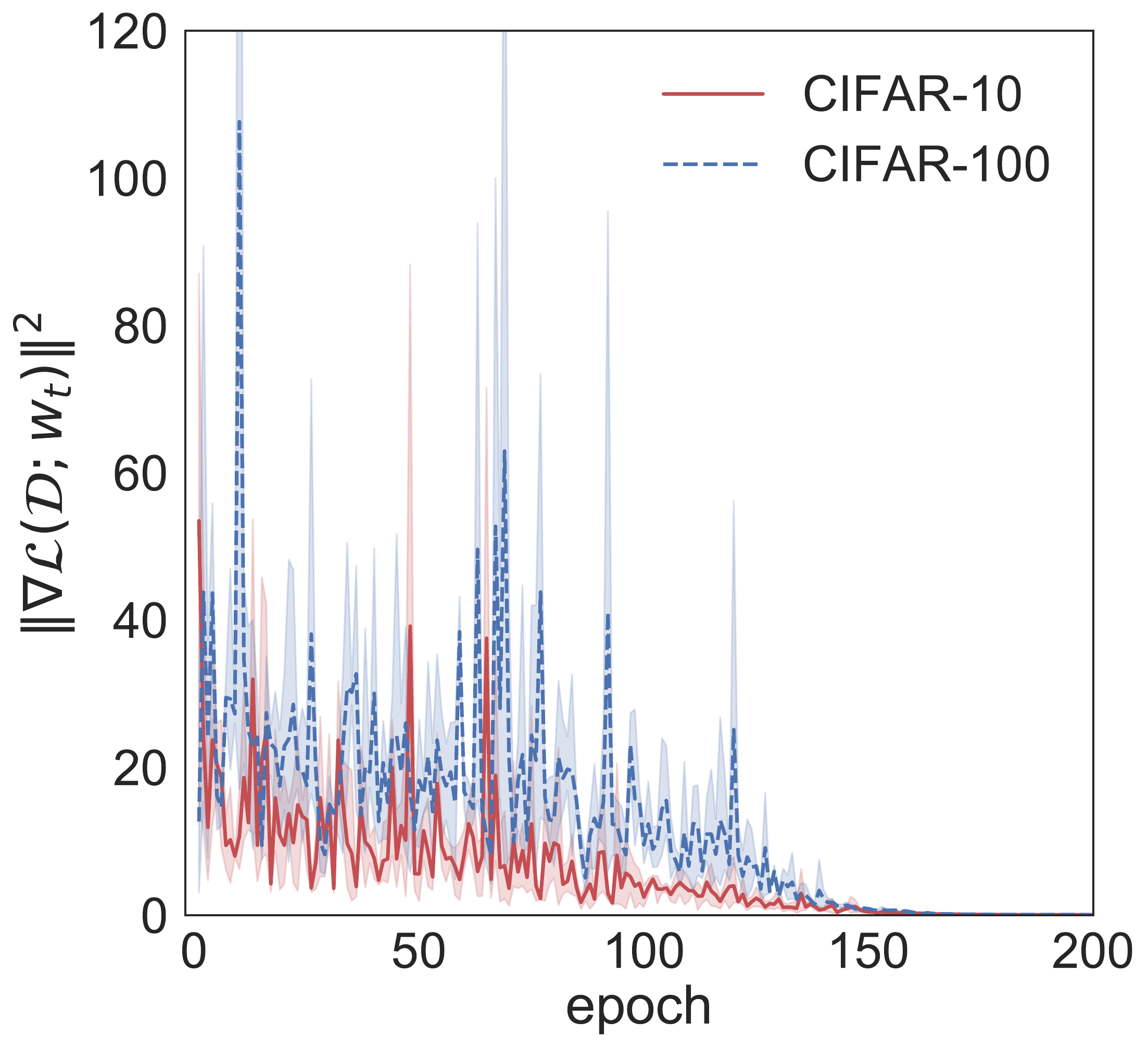}}
		\!\!\!
		\subfigure[\textit{PyramidNet-110}. \label{fig:grad-norm-trend-pyramidnet}]{\includegraphics[width=0.32\textwidth]{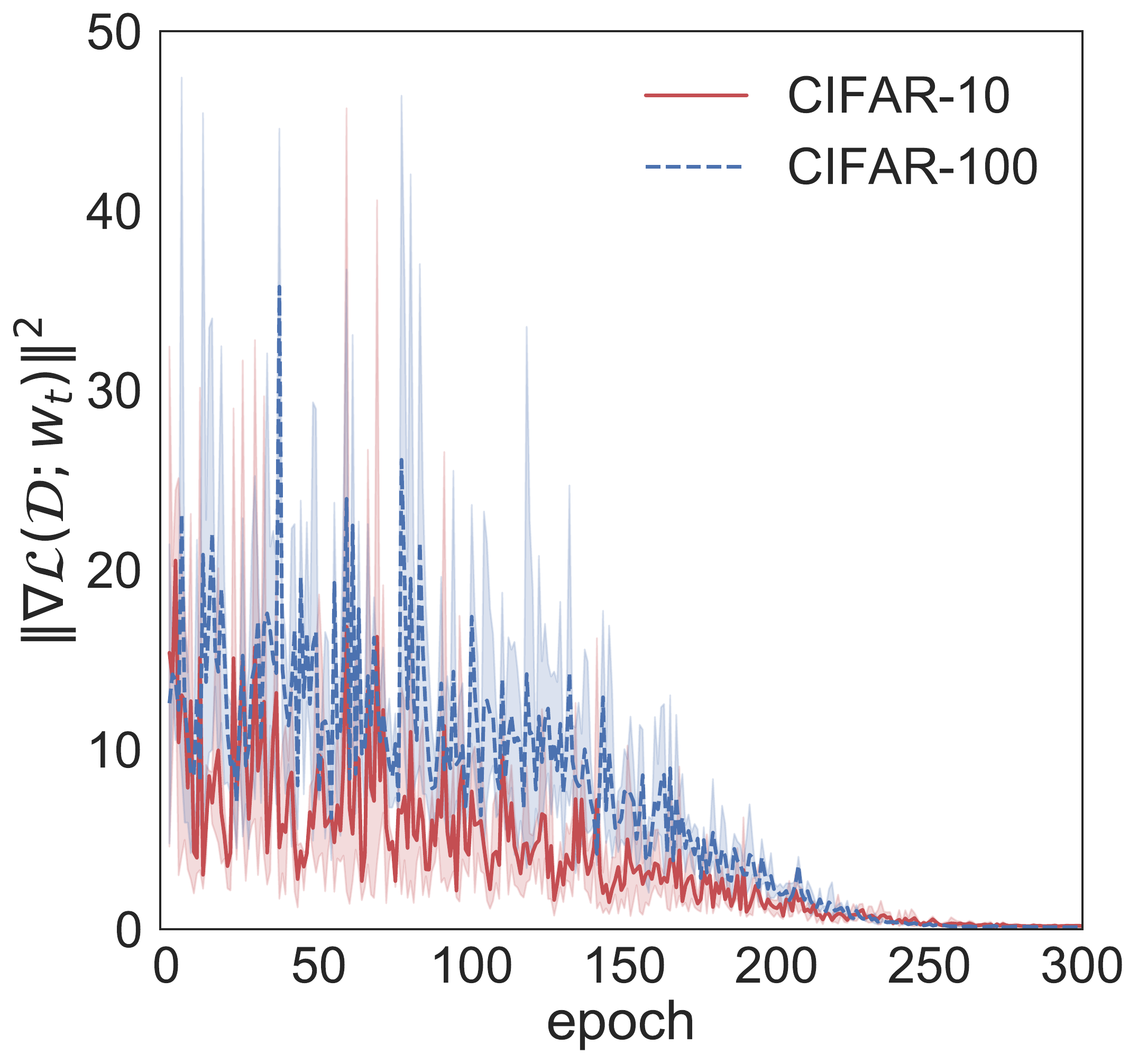}}
		\!\!\!
		\vskip -.1in
		\caption{Squared gradient norms 
			of AE-SAM
			with number of epochs.
			Best viewed in color.
		}
		\label{fig:gradient-norm-trend}
	\end{figure}

	\section{Conclusion}
	
	In this paper, 
	we proposed an adaptive policy 
	to employ SAM 
	based on the loss landscape geometry. 
	Using the policy, we proposed an efficient algorithm (called AE-SAM) to 
	reduce the fraction of SAM updates during training. 
	We theoretically and empirically analyzed the convergence of AE-SAM.
	Experimental results on a number of datasets and network architectures verify the
	efficiency and
	effectiveness of the adaptive policy. Moreover,
	the proposed policy is general 
	and can be combined 
	with other SAM variants, 
	as demonstrated by the success of AE-LookSAM.
	
	\newpage
	\subsubsection*{Acknowledgments}
	
	This work was supported by NSFC key grant 62136005, NSFC general grant 62076118, and Shenzhen fundamental research program JCYJ20210324105000003.
	This research was supported in part by the Research Grants Council of the Hong Kong Special Administrative Region (Grant 16200021).

	\bibliography{paper}
	\bibliographystyle{iclr2023_conference}
	
	
	\appendix
	\section{Proofs}
	\label{sec:proof}

	\subsection{Proof of Theorem \ref{thm:conv-sgd}}
	\label{appendix:proof of thm}
	\textbf{Theorem \ref{thm:conv-sgd}.}
	\textit{
		Let $b$ be the mini-batch size.  If $\eta = \frac{1}{4\beta\sqrt{T}}$ and $\rho = 1/T^{\frac{1}{4}}$, algorithm $\hA$
		satisfies
		\begin{align}
			\min_{0\leq t\leq T-1} \bE  \| \nabla \hL(\hD; \vw_t) \|^2
			&\leq  \frac{32\beta \left( \hL(\hD; \vw_0) - \bE \hL(\hD; \vw_T)\right)}{\sqrt{T} \left(7-6\zeta\right)} 
			+  \frac{ (1 + \zeta +  5 \beta^2 \zeta )\sigma^2}{b\sqrt{T}\left(7-6\zeta\right)}, 
		\end{align}
		where 
		$\zeta =  \frac{1}{T} \sum_{t=0}^{T-1}\xi_t \in [0,1]$.
	}
	
	\begin{lemma}[\cite{andriushchenko2022towards}]
		\label{lem:lem-sgd}
		Under Assumptions \ref{ass:smooth} and \ref{ass:bd-noise}
		for all $t$ and $\rho > 0$,
		we have 
		\begin{align}
			\bE \nabla \hL(\hB_t; \vw + \rho \nabla \hL(\hB_t; \vw))^\top \nabla \hL(\hD; \vw) 
			\geq \left( \frac{1}{2} - \rho \beta  \right)
			\| \nabla \hL(\hD;\vw) \|^2 - \frac{\beta^2 \rho^2 \sigma^2}{2b}.
		\end{align}
	\end{lemma}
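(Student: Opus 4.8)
The plan is to prove the single-step inequality by isolating the ``correct'' descent direction $\nabla\hL(\hB_t;\vw)$ inside the SAM gradient and controlling the perturbation-induced error with smoothness. First I would decompose the gradient at the ascent point $\vw+\rho\nabla\hL(\hB_t;\vw)$ as
\begin{align}
\nabla\hL(\hB_t;\vw+\rho\nabla\hL(\hB_t;\vw)) = \nabla\hL(\hB_t;\vw) + \vr_t, \nonumber
\end{align}
where $\vr_t \equiv \nabla\hL(\hB_t;\vw+\rho\nabla\hL(\hB_t;\vw)) - \nabla\hL(\hB_t;\vw)$. Taking the inner product with $\nabla\hL(\hD;\vw)$ and the expectation over $\hB_t$, the first piece is exactly $\|\nabla\hL(\hD;\vw)\|^2$ by unbiasedness of the mini-batch gradient, $\bE_{\hB_t}\nabla\hL(\hB_t;\vw)=\nabla\hL(\hD;\vw)$. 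The whole difficulty is then pushed into lower-bounding $\bE_{\hB_t}\vr_t^\top\nabla\hL(\hD;\vw)$.

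For the error term I would apply Cauchy--Schwarz together with $\beta$-smoothness to get $\|\vr_t\|\le\beta\rho\|\nabla\hL(\hB_t;\vw)\|$, hence $\vr_t^\top\nabla\hL(\hD;\vw)\ge-\beta\rho\|\nabla\hL(\hB_t;\vw)\|\,\|\nabla\hL(\hD;\vw)\|$. A single application of Young's inequality $ab\le\frac12 a^2+\frac12 b^2$ with $a=\beta\rho\|\nabla\hL(\hB_t;\vw)\|$ and $b=\|\nabla\hL(\hD;\vw)\|$ then separates the two norms, yielding
\begin{align}
\vr_t^\top\nabla\hL(\hD;\vw) \ge -\tfrac12\|\nabla\hL(\hD;\vw)\|^2 - \tfrac12\beta^2\rho^2\|\nabla\hL(\hB_t;\vw)\|^2. \nonumber
\end{align}

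Next I would take expectations and convert $\bE_{\hB_t}\|\nabla\hL(\hB_t;\vw)\|^2$ into $\|\nabla\hL(\hD;\vw)\|^2+\frac{\sigma^2}{b}$. This is where Assumption \ref{ass:bd-noise} enters: writing the mini-batch gradient as the average of $b$ i.i.d.\ per-sample gradients, the cross terms vanish in expectation and the per-sample variance bound gives $\bE_{\hB_t}\|\nabla\hL(\hB_t;\vw)-\nabla\hL(\hD;\vw)\|^2\le\sigma^2/b$, so that $\bE_{\hB_t}\|\nabla\hL(\hB_t;\vw)\|^2=\|\nabla\hL(\hD;\vw)\|^2+\var(\nabla\hL(\hB_t;\vw))\le\|\nabla\hL(\hD;\vw)\|^2+\sigma^2/b$. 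Substituting back and combining the $\|\nabla\hL(\hD;\vw)\|^2$ terms produces the coefficient $\frac12-\frac12\beta^2\rho^2$ together with the stated noise penalty $-\frac{\beta^2\rho^2\sigma^2}{2b}$; the clean form $\frac12-\beta\rho$ then follows since $\frac12\beta^2\rho^2\le\beta\rho$ whenever $\beta\rho\le2$ (which holds for the choice $\rho=T^{-1/4}$ of Theorem \ref{thm:conv-sgd} once $T$ is large) and since $\|\nabla\hL(\hD;\vw)\|^2\ge0$ lets us weaken the coefficient.

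The step I expect to be the main obstacle is the smoothness bound $\|\vr_t\|\le\beta\rho\|\nabla\hL(\hB_t;\vw)\|$: Assumption \ref{ass:smooth} is stated for the population loss $\hL(\hD;\cdot)$, whereas here it must be applied to the \emph{mini-batch} loss $\hL(\hB_t;\cdot)$ at the data-dependent perturbed point $\vw+\rho\nabla\hL(\hB_t;\vw)$. I would handle this by invoking the standard convention (as in \cite{andriushchenko2022towards}) that $\beta$ is uniform over mini-batches, i.e.\ each $\hL(\hB_t;\cdot)$ is $\beta$-smooth; granting this, every remaining manipulation is elementary and the constants fall out exactly as above.
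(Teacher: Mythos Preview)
The paper does not prove this lemma; it is quoted from \cite{andriushchenko2022towards} and used as a black box in the proof of Theorem~\ref{thm:conv-sgd}. Your argument is the standard one and is essentially what that reference does: split off the unbiased part $\nabla\hL(\hB_t;\vw)$, control the remainder $\vr_t$ via mini-batch $\beta$-smoothness and Cauchy--Schwarz, then apply Young's inequality and the variance bound $\bE\|\nabla\hL(\hB_t;\vw)\|^2\le\|\nabla\hL(\hD;\vw)\|^2+\sigma^2/b$. The paper itself relies on the same mini-batch smoothness you flag (see the step $\|\vh_t-\vg_t\|\le\rho\beta\|\vg_t\|$ in \eqref{eq:temp-asj8}), so your caveat about Assumption~\ref{ass:smooth} being stated only for $\hL(\hD;\cdot)$ applies equally to the paper.

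One point worth tightening: your final weakening $\tfrac12-\tfrac12\beta^2\rho^2\ge\tfrac12-\beta\rho$ indeed needs $\beta\rho\le 2$, whereas the lemma is stated ``for all $\rho>0$.'' You are right that this is harmless for Theorem~\ref{thm:conv-sgd} with $\rho=T^{-1/4}$, but strictly speaking your argument yields the (sharper) bound $(\tfrac12-\tfrac12\beta^2\rho^2)\|\nabla\hL(\hD;\vw)\|^2-\tfrac{\beta^2\rho^2\sigma^2}{2b}$ unconditionally, and the stated form only in the regime $\beta\rho\le 2$. Since the downstream proof only ever uses small $\rho$, either form suffices; just be explicit that the ``for all $\rho>0$'' in the lemma header is not fully recovered by the last weakening step.
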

	\begin{proof}
		Let $\vg_t \equiv \frac{1}{b} \sum_{(\vx_i, y_i) \in \hB_t} \nabla \ell(f(\vx_i; \vw_t), y_i) $,
		$\vh_t \equiv \frac{1}{b} \sum_{(\vx_i, y_i) \in \hB_t} \nabla \ell(f(\vx_i; \vw_t + \rho \vg_t), y_i) $,
		and $\hat{\vg}_t \equiv \nabla \hL(\hD; \vw_t)$.
		
		By Taylor expansion
		and $\hL(\hD; \vw)$ is $\beta$-smooth, we have  
		\begin{align}
			&\hL(\hD; \vw_{t+1})  \nonumber\\
			\leq &\hL(\hD; \vw_t) + \hat{\vg}_t^\top (\vw_{t+1} - \vw_t ) 
			+ \frac{\beta}{2}  \| \vw_{t+1} - \vw_t \|^2 \nonumber \\
			\leq &\hL(\hD; \vw_t) 
			- \eta \hat{\vg}_t^\top  \left((1-\xi_t)\vg_t   + \xi_t \vh_t\right) + \frac{\beta \eta^2}{2}  \| (1-\xi_t)\vg_t   + \xi_t \vh_t \|^2 \nonumber\\
			=& \hL(\hD; \vw_t) \!
			-\! \eta (1\!-\!\xi_t) \hat{\vg}_t^\top \vg_t
			\!-\! \eta \xi_t \hat{\vg}_t^\top \vh_t
			\!+\! \frac{\beta \eta^2}{2}\! \left(\! (1\!-\!\xi_t) \| \vg_t\|^2 \!+\! \xi_t \| \vh_t\|^2 \!+\! \underbrace{2\xi_t (1-\xi_t) \vg_t^\top \vh_t}_{=0} \!\right)\! \label{eq:temp-asjk7} \\
			=& \hL(\hD; \vw_t) 
			- \eta (1-\xi_t) \hat{\vg}_t^\top \vg_t
			- \eta \xi_t \hat{\vg}_t^\top \vh_t
			+ \frac{\beta \eta^2}{2} \left( (1-\xi_t) \| \vg_t\|^2 + \xi_t \| \vh_t\|^2 \right), \label{eq:temp-xasj1}
		\end{align}
		where we have used $\xi_t (1-\xi_t) = 0$ as $\xi_t\in \{0,1\}$, $\xi_t^2=\xi_t$, and $(1-\xi_t)^2=1-\xi_t$ to obtain \eqref{eq:temp-asjk7}. 
		Taking expectation w.r.t. $\vw_t$ on both sides of \eqref{eq:temp-xasj1}, 
		we have
		\begin{align}
			\!\!\bE \hL(\hD; \vw_{t+1})
			\!\leq  \! \bE \hL(\hD; \vw_t) 
			\!-\! \eta  (1\!-\!\xi_t) \bE \| \hat{\vg}_t \|^2 \!-\! \eta \xi_t \bE \hat{\vg}_t^\top \vh_t
			\!+\! \frac{\beta \eta^2 (1-\xi_t)}{2} \bE \| \vg_t \|^2
			\!+\! \frac{\beta \eta^2 \xi_t}{2} \bE \| \vh_t \|^2.
			\label{eq:temp-dfhk}
		\end{align}
		\vskip -.15in
		\underline{Claim 1: $\bE \| \vg_t \|^2 \!=\! \bE \| \vg_t \!-\! \hat{\vg}_t \|^2 \!+\! \bE \| \hat{\vg}_t \|^2 \!=\! \frac{\sigma^2}{b} \!+\!  \bE\| \hat{\vg}_t \|^2$,}
		which follows from Assumption \ref{ass:bd-noise}.
		
		\underline{Claim 2: $\bE \| \vh_t \|^2 \leq 2(1+ \rho^2 \beta^2 ) \frac{\sigma^2}{b} - (1 - 2\rho^2 \beta^2) \bE \|\hat{\vg}_t\|^2  + 2 \bE  \hat{\vg}_t^\top \vh_t$}, which is derived as follows:
		\begin{align}
			\bE \| \vh_t \|^2 
			&= \bE \| \vh_t - \hat{\vg}_t\|^2 - \bE \| \hat{\vg}_t \|^2 + 2 \bE  \hat{\vg}_t^\top \vh_t \nonumber\\
			&= 2 \bE \| \vh_t - \vg_t \|^2  +2 \bE \| \vg_t - \hat{\vg}_t \|^2 - \bE \| \hat{\vg}_t \|^2 + 2 \bE  \hat{\vg}_t^\top \vh_t \nonumber\\
			&\leq 2 \rho^2 \beta^2 \bE\|\vg_t \|^2 
			+  \frac{2\sigma^2}{b} - \bE \| \hat{\vg}_t \|^2 + 2 \bE  \hat{\vg}_t^\top \vh_t \label{eq:temp-asj8} \\
			&\leq 2 \rho^2 \beta^2 \left(\frac{\sigma^2}{b} +  \bE\| \hat{\vg}_t \|^2\right)
			+  \frac{2\sigma^2}{b} - \bE \| \hat{\vg}_t \|^2 + 2 \bE  \hat{\vg}_t^\top \vh_t \label{eq:temp-asj9} \\
			&=  2(1+ \rho^2 \beta^2 ) \frac{\sigma^2}{b} - (1 - 2\rho^2 \beta^2) \bE \|\hat{\vg}_t\|^2  + 2 \bE  \hat{\vg}_t^\top \vh_t,  \label{eq:temp-asj1} 
		\end{align}
		\vskip -.15in
		where  
		\eqref{eq:temp-asj8}
		follows from $\| \vh_t - \vg_t \| \leq \rho \beta \| \vg_t\|$
		and 
		Assumption \ref{ass:bd-noise},
		\eqref{eq:temp-asj9}
		follows from Claim 1.
		
		Substituting Claims 1 and 2 into \eqref{eq:temp-dfhk},
		we obtain 
		\begin{align}
			&\bE \hL(\hD; \vw_{t+1})  \nonumber  \\
			&\leq  \bE \hL(\hD; \vw_t) 	- \eta  \left(1-\xi_t \right) \bE \| \hat{\vg}_t \|^2 -  \eta \xi_t \bE \hat{\vg}_t^\top \vh_t  +\frac{\beta \eta^2 (1-\xi_t)}{2} \left(\frac{\sigma^2}{b} + \bE \| \hat{\vg}_t \|^2\right) \nonumber\\
			&\quad \quad +  \frac{\beta \eta^2 \xi_t}{2} \left(2(1+ \rho^2 \beta^2 ) \frac{\sigma^2}{b} - (1 - 2\rho^2 \beta^2) \bE \|\hat{\vg}_t\|^2  + 2 \bE  \hat{\vg}_t^\top \vh_t\right) \label{eq:tmep-ashj1}\\
			& = \bE \hL(\hD; \vw_t) 
			- \eta  \left(1-\xi_t -\frac{\beta \eta (1-\xi_t)}{2}
			+  \frac{\beta \eta  \xi_t (1-2\rho^2\beta^2) }{2} 
			\right) \bE \| \hat{\vg}_t \|^2  - \eta \xi_t \left(1-\eta \beta\right) 
			\bE  \hat{\vg}_t^\top \vh_t \nonumber\\
			&\quad \quad + \left( \frac{\beta \eta^2 (1-\xi_t)}{2}  +\beta \eta^2 \xi_t(1+ \rho^2 \beta^2 )\right) \frac{\sigma^2}{b} \nonumber\\
			& \leq \bE \hL(\hD; \vw_t) 
			- \eta  \left(1-\xi_t -\frac{\beta \eta (1-\xi_t)}{2}
			+  \frac{\beta \eta  \xi_t (1-2\rho^2\beta^2) }{2} 
			+  \xi_t \left(1-\eta \beta\right)( \frac{1}{2}  - \rho\beta)
			\right) \bE \| \hat{\vg}_t \|^2  \nonumber\\
			&\quad \quad + \left(\frac{\beta \eta^2 (1-\xi_t)}{2} + \beta \eta^2 \xi_t(1+ \rho^2 \beta^2 ) + \eta \xi_t \left(1-\eta \beta\right)\frac{\beta^2 \rho^2}{2} \right) \frac{\sigma^2}{b} \label{eq:tmep-ashj2} \\
			&\leq  \bE \hL(\hD; \vw_t) 
			- \eta \left( 1- (1+\beta \eta - 2 \rho \beta )\frac{\xi_t}{2} - \frac{\beta \eta }{2}  \right) \bE \| \hat{\vg}_t \|^2  \nonumber \\
			&\quad \quad + \left(\eta +\xi_t (\eta + 2 \eta \rho^2 \beta^2 + \beta \rho^2 - \eta \beta^2 \rho^2)\right) \frac{\eta \beta\sigma^2}{2b},
			\label{eq:temp-ashj3}
		\end{align}
		\vskip -.1in
		where \eqref{eq:tmep-ashj1}
		follows from Claims 1 and 2,
		\eqref{eq:tmep-ashj2}
		follows from Lemma \ref{lem:lem-sgd}
		and $1-\eta \beta > 0$.
		As $\eta < \frac{1}{4\beta}$, we have $1+\beta \eta - 2\rho \beta \leq 3/2$ and $ \beta \eta  < 1/4$,
		thus,
		$ 1- (1+\beta \eta - 2 \rho \beta )\frac{\xi_t}{2} - \frac{\beta \eta }{2}  > 0$.
		
		Summing over $t$ on both sides of \eqref{eq:temp-ashj3} and rearranging, 
		we obtain
		\begin{align}
			\min_{0\leq t\leq T-1}  
			\bE \| \hat{\vg}_t \|^2  
			&\leq \frac{ \hL(\hD; \vw_0) - \bE \hL(\hD; \vw_T)}{ \eta \sum_{t=0}^{T-1}  \left( 1- (1+\beta \eta - 2 \rho \beta )\frac{\xi_t}{2} - \frac{\beta \eta }{2}  \right) } 
			\nonumber \\
			&\quad + \frac{\sum_{t=0}^{T-1}\left(\eta +\xi_t (\eta +  \eta \rho^2 \beta^2 + \beta \rho^2 )\right)}{\sum_{t=0}^{T-1} \left( 1- (1+\beta \eta - 2 \rho \beta )\frac{\xi_t}{2} - \frac{\beta \eta }{2}  \right) } \frac{\beta \sigma^2}{2b} \nonumber \\
			&=\frac{ \hL(\hD; \vw_0) - \bE \hL(\hD; \vw_T)}{ T\eta(1-\frac{\gamma \zeta}{2} - \frac{\beta\eta }{2})}
			+ \frac{T (\eta + \eta \kappa \zeta + \beta \rho^2 \zeta)\beta\sigma^2}{2bT(1-\frac{\gamma\zeta}{2} - \frac{\beta\eta }{2})} \label{eq:sgd-temp-asju}\\
			&=\frac{ \hL(\hD; \vw_0) - \bE \hL(\hD; \vw_T)}{ T\eta(1-\frac{\gamma \zeta}{2} - \frac{\beta\eta }{2})}
			+ \frac{ (1 +  \kappa \zeta + 4 \beta^2 \zeta)\eta \beta\sigma^2}{2b(1-\frac{\gamma\zeta}{2} - \frac{\beta\eta }{2})} \nonumber\\
			&= \frac{ \hL(\hD; \vw_0) - \bE \hL(\hD; \vw_T)}{T\eta (1-\frac{\gamma \zeta}{2} - \frac{\beta\eta}{2})} 
			+  \frac{ (1 +  \kappa \zeta +4 \beta^2 \zeta)\sigma^2}{8b\sqrt{T}(1-\frac{\gamma\zeta}{2} - \frac{\beta\eta }{2})} \\
			&\leq \frac{32\beta \left( \hL(\hD; \vw_0) - \bE \hL(\hD; \vw_T)\right)}{\sqrt{T} \left(7-6\zeta\right)} 
			+  \frac{ (1 + \zeta +  5 \beta^2 \zeta )\sigma^2}{b\sqrt{T}\left(7-6\zeta\right)}, \label{eq:sgd-temp-ask}
		\end{align}
		\vskip -.1in
		where $\gamma = 1+ \beta \eta - 2 \rho \beta \leq 3/2$, $\kappa = 1 +  \rho^2 \beta^2$,
		$\rho^2 = 1/\sqrt{T}$,
		and $\zeta = \frac{1}{T} \sum_{t=0}^{T-1} \xi_t \in [0,1]$.
		We thus finish the proof.
	\end{proof}
	
	\begin{corollary}
		\label{cor:conv-sam}
		Let $b$ be the mini-batch size.  If $\eta = \frac{1}{4\beta\sqrt{T}}$ and $\rho = 1/T^{\frac{1}{4}}$, SAM~\citep{foret2021sharpness}
		satisfies
		\begin{align}
			\min_{0\leq t\leq T-1} \bE  \| \nabla \hL(\hD; \vw_t) \|^2
			&\leq  \frac{32\beta \left( \hL(\hD; \vw_0) - \bE \hL(\hD; \vw_T)\right)}{\sqrt{T}} 
			+  \frac{ (2 +  5 \beta^2 )\sigma^2}{b\sqrt{T}}.
		\end{align}
	\end{corollary}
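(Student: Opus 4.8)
The plan is to run the standard non-convex descent argument for stochastic first-order methods, exploiting that the switch $\xi_t\in\{0,1\}$ is Boolean so the SAM and ERM contributions decouple. Write $\vg_t$ for the mini-batch ERM gradient at $\vw_t$, $\vh_t$ for the mini-batch gradient at the perturbed point $\vw_t + \rho\vg_t$, and $\hat{\vg}_t = \nabla\hL(\hD;\vw_t)$, so the update is $\vw_{t+1} = \vw_t - \eta\big((1-\xi_t)\vg_t + \xi_t\vh_t\big)$. First I would apply $\beta$-smoothness of $\hL(\hD;\cdot)$ (Assumption~\ref{ass:smooth}) via Taylor expansion to get a one-step bound on $\hL(\hD;\vw_{t+1})$; when expanding the quadratic term $\|(1-\xi_t)\vg_t + \xi_t\vh_t\|^2$, the identities $\xi_t^2 = \xi_t$, $(1-\xi_t)^2 = 1-\xi_t$, $\xi_t(1-\xi_t) = 0$ annihilate the cross term and split it into a pure-ERM part $(1-\xi_t)\|\vg_t\|^2$ and a pure-SAM part $\xi_t\|\vh_t\|^2$.

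Next I would take expectations over the mini-batch and control each stochastic quantity. Unbiasedness and Assumption~\ref{ass:bd-noise} give $\bE\|\vg_t\|^2 = \sigma^2/b + \bE\|\hat{\vg}_t\|^2$. For the SAM terms two ingredients are needed: Lemma~\ref{lem:lem-sgd} to lower-bound $\bE\,\hat{\vg}_t^\top\vh_t \ge (\tfrac12 - \rho\beta)\bE\|\hat{\vg}_t\|^2 - \tfrac{\beta^2\rho^2\sigma^2}{2b}$, and an upper bound $\bE\|\vh_t\|^2 \le 2(1+\rho^2\beta^2)\tfrac{\sigma^2}{b} - (1-2\rho^2\beta^2)\bE\|\hat{\vg}_t\|^2 + 2\bE\,\hat{\vg}_t^\top\vh_t$ derived from $\|\vh_t - \vg_t\| \le \rho\beta\|\vg_t\|$ (smoothness of the mini-batch loss), a parallelogram-type split, and the variance bound.

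Substituting these into the one-step descent bound collapses it, after collecting terms, to $\bE\hL(\hD;\vw_{t+1}) \le \bE\hL(\hD;\vw_t) - \eta\,a_t\,\bE\|\hat{\vg}_t\|^2 + \eta\,b_t\,\tfrac{\sigma^2}{b}$, where $a_t = 1 - (1+\beta\eta-2\rho\beta)\tfrac{\xi_t}{2} - \tfrac{\beta\eta}{2}$ and $b_t$ is explicit and affine in $\xi_t$; the crucial point is that after substituting the $\bE\|\vh_t\|^2$ estimate the coefficient of $\bE\,\hat{\vg}_t^\top\vh_t$ becomes $-\eta\xi_t(1-\eta\beta) \le 0$, so Lemma~\ref{lem:lem-sgd} applies in the favorable direction, and $\eta < 1/(4\beta)$ forces $a_t > 0$. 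Summing over $t = 0,\dots,T-1$ telescopes the loss and, via $\min_t \bE\|\hat{\vg}_t\|^2 \le \frac{\sum_t a_t\bE\|\hat{\vg}_t\|^2}{\sum_t a_t}$, gives a bound whose denominator is $\eta\sum_t a_t = T\eta(1 - \tfrac{\gamma\zeta}{2} - \tfrac{\beta\eta}{2})$ with $\gamma = 1+\beta\eta-2\rho\beta$ and $\zeta = \tfrac1T\sum_t\xi_t$. Finally I plug in $\eta = 1/(4\beta\sqrt T)$ and $\rho = T^{-1/4}$: then $\rho^2 = 1/\sqrt T$, $\beta\eta \le \tfrac14$, $\gamma \le \tfrac32$, so $1 - \tfrac{\gamma\zeta}{2} - \tfrac{\beta\eta}{2} \ge \tfrac{7-6\zeta}{8}$, and simplifying the $\zeta$-dependent numerator (using $\rho^2\beta^2 \le \beta^2$ and $\beta\rho^2 = 4\beta^2\eta$ to absorb lower-order terms into the stated constants) yields \eqref{eq:conv-bound-1}; setting $\zeta=1$ recovers Corollary~\ref{cor:conv-sam}.

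The main obstacle is the indefinite cross term $\bE\,\hat{\vg}_t^\top\vh_t$: it cannot be signed directly, so one must first use the $\bE\|\vh_t\|^2$ identity to move it into a position where its coefficient becomes negative, and only then invoke Lemma~\ref{lem:lem-sgd}. Beyond that, the work is bookkeeping — carrying all the $\xi_t$-dependent coefficients through, verifying $a_t > 0$ under the stepsize restriction, and tracking which $O(\rho^2)$ and $O(\eta)$ terms survive — rather than anything conceptually deep, since once the cross term is handled the rest is the classical telescoping argument for non-convex SGD.
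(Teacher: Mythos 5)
Your proposal is correct and follows essentially the same route as the paper: it reproves Theorem~\ref{thm:conv-sgd} via the same decomposition ($\xi_t$-split of the quadratic term, Claims~1 and~2 on $\bE\|\vg_t\|^2$ and $\bE\|\vh_t\|^2$, Lemma~\ref{lem:lem-sgd} applied after the cross term's coefficient becomes $-\eta\xi_t(1-\eta\beta)$, then telescoping) and obtains the corollary by setting $\zeta=1$, which is exactly how the paper derives it (yielding $7-6\zeta=1$ and $1+\zeta+5\beta^2\zeta = 2+5\beta^2$).
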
	
	
	\begin{corollary}
		\label{thm:conv-sgd-batchsize}
		Let $b$ be the mini-batch size.  If $\eta = \frac{\sqrt{b}}{4\beta\sqrt{T}}$ and $\rho = 1/T^{\frac{1}{4}}$, algorithm $\hA$
		satisfies
		\begin{align}
			\min_{0\leq t\leq T-1} \bE  \| \nabla \hL(\hD; \vw_t) \|^2
			\leq 
			\frac{32\beta( \hL(\hD; \vw_0) - \bE \hL(\hD; \vw_T))}{\sqrt{Tb}(7-6\zeta )}
			+ \frac{(1 +  \zeta + 5\beta^2   \zeta )\sigma^2}{\sqrt{Tb}(7-6\zeta)},
		\end{align}
		where 
		$\zeta =  \frac{1}{T} \sum_{t=0}^{T-1}\xi_t\in [0,1]$.
	\end{corollary}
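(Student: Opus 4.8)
The plan is to observe that Corollary~\ref{thm:conv-sgd-batchsize} is just Theorem~\ref{thm:conv-sgd} run with a different step size, so essentially all the work is already in the proof of that theorem. Inspecting that proof, the per‑iteration descent bound \eqref{eq:temp-ashj3}, obtained from Lemma~\ref{lem:lem-sgd} and Claims~1--2 using only $\xi_t\in\{0,1\}$, holds for \emph{any} $\eta\le\frac{1}{4\beta}$ and $\rho=T^{-1/4}$; telescoping it and rearranging gives, still for any such $\eta$,
\begin{align*}
\min_{0\le t\le T-1}\bE\|\nabla\hL(\hD;\vw_t)\|^2 \le \frac{\hL(\hD;\vw_0)-\bE\hL(\hD;\vw_T)}{T\eta\left(1-\frac{\gamma\zeta}{2}-\frac{\beta\eta}{2}\right)} + \frac{(\eta+\eta\kappa\zeta+\beta\rho^2\zeta)\beta\sigma^2}{2b\left(1-\frac{\gamma\zeta}{2}-\frac{\beta\eta}{2}\right)},
\end{align*}
with $\gamma=1+\beta\eta-2\rho\beta$ and $\kappa=1+\rho^2\beta^2$, exactly as in the derivation leading to \eqref{eq:sgd-temp-asju}. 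So I would first check that the new choice $\eta=\frac{\sqrt b}{4\beta\sqrt T}$ still lies in $(0,\frac{1}{4\beta}]$: since $\beta\eta=\frac{\sqrt b}{4\sqrt T}$, this holds precisely when $b\le T$, the standard large‑$T$ regime, and then the displayed bound applies unchanged (in particular $\gamma\le 3/2$ and $\beta\eta\le 1/4$ still hold).

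Next I would substitute $\eta=\frac{\sqrt b}{4\beta\sqrt T}$ and $\rho^2=T^{-1/2}$ and read off the constants, carrying the extra $\sqrt b$. As in the theorem, $\gamma\le 3/2$ and $\beta\eta\le 1/4$ give $1-\frac{\gamma\zeta}{2}-\frac{\beta\eta}{2}\ge\frac{7-6\zeta}{8}$. For the first term, $T\eta=\frac{\sqrt{Tb}}{4\beta}$ turns $\frac{1}{T\eta}\cdot\frac{8}{7-6\zeta}$ into $\frac{32\beta}{\sqrt{Tb}(7-6\zeta)}$. For the second term, $\eta\beta=\frac{\sqrt b}{4\sqrt T}$ so $\frac{\eta\beta}{2b}=\frac{1}{8\sqrt{Tb}}$, again producing a $\sqrt{Tb}$ in the denominator; thus both terms pick up $\sqrt{Tb}$ in place of $\sqrt T$.

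The only place the argument genuinely departs from the theorem is in collapsing the noise coefficient: there one had $\rho^2=4\beta\eta$ with equality, whereas now $\rho^2=T^{-1/2}\le\frac{\sqrt b}{\sqrt T}=4\beta\eta$ for $b\ge1$, so $\beta\rho^2\zeta\le 4\beta^2\eta\zeta$ and $\eta+\eta\kappa\zeta+\beta\rho^2\zeta\le\eta(1+\kappa\zeta+4\beta^2\zeta)$ — an inequality rather than an identity, which is still fine for an upper bound. Bounding $\kappa=1+\beta^2/\sqrt T\le 1+\beta^2$ then gives $1+\kappa\zeta+4\beta^2\zeta\le 1+\zeta+5\beta^2\zeta$, and assembling the two terms yields the stated bound; the $\zeta=1$ specialization recovers an analogue of Corollary~\ref{cor:conv-sam}. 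I do not expect any real obstacle here — it is bookkeeping — and the two points to be careful about are (a) verifying $\eta\le\frac{1}{4\beta}$, i.e.\ $b\le T$, so the theorem's chain of inequalities remains valid, and (b) that $\beta\rho^2\zeta$ can still be absorbed into the $4\beta^2\eta\zeta$ term, which is exactly where $\rho^2\le 4\beta\eta$ is used.
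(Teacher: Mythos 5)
Your proposal is correct and follows essentially the same route as the paper: both derive the corollary by plugging $\eta=\frac{\sqrt{b}}{4\beta\sqrt{T}}$ into the intermediate bound \eqref{eq:sgd-temp-asju} from the proof of Theorem~\ref{thm:conv-sgd}, using $\gamma\le 3/2$ and $\beta\eta\le 1/4$ to reduce the denominator to $\frac{7-6\zeta}{8}$ and reading off the $\sqrt{Tb}$ factors. Your additional remarks — that $\eta\le\frac{1}{4\beta}$ now requires $b\le T$, and that $\beta\rho^2\zeta\le 4\beta^2\eta\zeta$ becomes an inequality needing $b\ge 1$ — are correct points of care that the paper's own (very terse) proof passes over silently.
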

	
	\begin{proof}
		It follows from \eqref{eq:sgd-temp-asju}
		that 
		\begin{align}
			\min_{0\leq t\leq T-1}  
			\bE \| \hat{\vg}_t \|^2  
			&\leq \frac{ \hL(\hD; \vw_0) - \bE \hL(\hD; \vw_T)}{T\eta(1-\frac{\gamma \zeta}{2} - \frac{\beta\eta }{2})}
			+ \frac{\eta \beta (1 +  \kappa\zeta  + 4\beta^2 \zeta)\sigma^2}{2b(1-\frac{\gamma\zeta}{2} - \frac{\beta\eta }{2})} \\
			&\leq \frac{4\beta( \hL(\hD; \vw_0) - \bE \hL(\hD; \vw_T))}{\sqrt{Tb}(\frac{7}{8}-\frac{3}{4}\zeta )}
			+ \frac{(1 +  \zeta + 5\beta^2   \zeta )\sigma^2}{8\sqrt{Tb}(\frac{7}{8}-\frac{3\zeta}{4})} \\
			& =\frac{32\beta( \hL(\hD; \vw_0) - \bE \hL(\hD; \vw_T))}{\sqrt{Tb}(7-6\zeta )}
			+ \frac{(1 +  \zeta + 5\beta^2   \zeta )\sigma^2}{\sqrt{Tb}(7-6\zeta)}. 
		\end{align} 
	\end{proof}

	\subsection{Convergence of full-batch gradient descent for AE-SAM}
	
	\begin{theorem}
		\label{thm:conv-gd}
		Under Assumption \ref{ass:smooth}, 
		with full-batch gradient descent,
		if $\rho < \frac{1}{2\beta}$ and $\eta<  \frac{1}{\beta}$,
		algorithm $\hA$
		satisfies
		\begin{align}
			\min_{0\leq t\leq T-1} \| \nabla \hL(\hD; \vw_t) \|^2  
			\leq \frac{  \hL(\hD; \vw_0) -  \hL(\hD; \vw_{T})}{ T\eta \left(1-\frac{\beta \eta }{2} - \beta \rho \zeta \right)},
		\end{align}
		where $\zeta =  \frac{1}{T} \sum_{t=0}^{T-1}\xi_t \in [0,1]$.
	\end{theorem}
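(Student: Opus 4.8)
The plan is to specialize the stochastic argument behind Lemma~\ref{lem:lem-sgd} and the proof of Theorem~\ref{thm:conv-sgd} to the noiseless full-batch regime, where $\hB_t=\hD$ for all $t$, so that every occurrence of the stochastic-gradient variance $\sigma^2/b$ disappears and the analysis reduces to a clean deterministic descent estimate. Write $\vg_t\equiv\nabla\hL(\hD;\vw_t)$ and $\vh_t\equiv\nabla\hL(\hD;\vw_t+\rho\vg_t)$, so the update is $\vw_{t+1}-\vw_t=-\eta\big((1-\xi_t)\vg_t+\xi_t\vh_t\big)$. First I would apply Assumption~\ref{ass:smooth} to get the descent inequality
\begin{align*}
\hL(\hD;\vw_{t+1})\leq \hL(\hD;\vw_t)+\vg_t^\top(\vw_{t+1}-\vw_t)+\frac{\beta}{2}\|\vw_{t+1}-\vw_t\|^2 .
\end{align*}
Substituting the update and using $\xi_t\in\{0,1\}$ (so $\xi_t(1-\xi_t)=0$, $\xi_t^2=\xi_t$, and the cross term in $\|\vw_{t+1}-\vw_t\|^2$ vanishes) yields
\begin{align*}
\hL(\hD;\vw_{t+1})\leq \hL(\hD;\vw_t)-\eta(1-\xi_t)\|\vg_t\|^2-\eta\xi_t\,\vg_t^\top\vh_t+\frac{\beta\eta^2}{2}\big((1-\xi_t)\|\vg_t\|^2+\xi_t\|\vh_t\|^2\big).
\end{align*}

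The main obstacle is the SAM branch $\xi_t=1$: I must show $-\eta\,\vg_t^\top\vh_t+\tfrac{\beta\eta^2}{2}\|\vh_t\|^2\leq-\eta\big(1-\tfrac{\beta\eta}{2}-\beta\rho\big)\|\vg_t\|^2$. The crude estimate $\|\vh_t\|^2\leq(1+\beta\rho)^2\|\vg_t\|^2$ overshoots, so instead I would use the polarization identity $\vg_t^\top\vh_t=\tfrac12\big(\|\vg_t\|^2+\|\vh_t\|^2-\|\vg_t-\vh_t\|^2\big)$ to rewrite the left-hand side as $-\tfrac{\eta}{2}\|\vg_t\|^2-\tfrac{\eta(1-\beta\eta)}{2}\|\vh_t\|^2+\tfrac{\eta}{2}\|\vg_t-\vh_t\|^2$. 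By $\beta$-smoothness, $\|\vg_t-\vh_t\|=\|\nabla\hL(\hD;\vw_t)-\nabla\hL(\hD;\vw_t+\rho\vg_t)\|\leq\beta\rho\|\vg_t\|$, hence $\|\vh_t\|\geq(1-\beta\rho)\|\vg_t\|\geq0$ because $\rho<\tfrac{1}{2\beta}$; combined with $1-\beta\eta>0$ this lets me replace $\|\vh_t\|^2$ by its lower bound $(1-\beta\rho)^2\|\vg_t\|^2$ and $\|\vg_t-\vh_t\|^2$ by $\beta^2\rho^2\|\vg_t\|^2$. A short computation gives the bracket $1-\beta\rho-\tfrac{\beta\eta}{2}(1-\beta\rho)^2$, which is $\geq 1-\tfrac{\beta\eta}{2}-\beta\rho$ since $(1-\beta\rho)^2\leq1$, establishing the claim. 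The ERM branch $\xi_t=0$ satisfies the same inequality trivially (with $\rho$ replaced by $0$). Hence in both cases
\begin{align*}
\hL(\hD;\vw_{t+1})\leq\hL(\hD;\vw_t)-\eta\Big(1-\frac{\beta\eta}{2}-\beta\rho\,\xi_t\Big)\|\vg_t\|^2 ,
\end{align*}
and the coefficient is strictly positive since $\beta\eta<1$ and $\beta\rho<\tfrac12$.

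Finally I would telescope this over $t=0,\dots,T-1$, giving $\sum_{t=0}^{T-1}\eta\big(1-\tfrac{\beta\eta}{2}-\beta\rho\xi_t\big)\|\vg_t\|^2\leq\hL(\hD;\vw_0)-\hL(\hD;\vw_T)$. Since each weight $a_t\equiv\eta\big(1-\tfrac{\beta\eta}{2}-\beta\rho\xi_t\big)\geq0$, we have $\sum_t a_t\|\vg_t\|^2\geq\big(\min_{0\leq t\leq T-1}\|\vg_t\|^2\big)\sum_t a_t$, and $\sum_{t=0}^{T-1}\big(1-\tfrac{\beta\eta}{2}-\beta\rho\xi_t\big)=T\big(1-\tfrac{\beta\eta}{2}-\beta\rho\zeta\big)$ by definition of $\zeta$. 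Rearranging yields exactly
\begin{align*}
\min_{0\leq t\leq T-1}\|\nabla\hL(\hD;\vw_t)\|^2\leq\frac{\hL(\hD;\vw_0)-\hL(\hD;\vw_T)}{T\eta\big(1-\tfrac{\beta\eta}{2}-\beta\rho\zeta\big)} ,
\end{align*}
which completes the proof. The only delicate point is the polarization step for the SAM branch; everything else is routine bookkeeping that mirrors the full-batch case $\zeta=1$ of SAM.
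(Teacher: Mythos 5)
Your proof is correct and follows essentially the same route as the paper's: the same smoothness descent inequality, the same case split on $\xi_t$ using $\xi_t(1-\xi_t)=0$, the same smoothness bound $\|\vh_t-\vg_t\|\le\beta\rho\|\vg_t\|$, and the same telescoping with positivity of the per-step coefficient $1-\tfrac{\beta\eta}{2}-\beta\rho\xi_t$. The only cosmetic difference is that you handle the cross term $\vg_t^\top\vh_t$ via polarization plus the reverse triangle inequality $\|\vh_t\|\ge(1-\beta\rho)\|\vg_t\|$, whereas the paper rearranges the same identity and invokes Lemma~\ref{lem:lemma-est} to bound $\vg_t^\top\vh_t\ge(1-\rho\beta)\|\vg_t\|^2$; both rest on the identical smoothness estimate and yield the stated constant.
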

	\begin{lemma}[Lemma 7 in 	
		\citet{andriushchenko2022towards}]
		\label{lem:lemma-est}
		Let $\hL(\hD;\vw)$ be a $\beta$-smooth function.
		For any $\rho >0$, we have
		\begin{align}
			\nabla \hL(\hD; \vw)^\top  \nabla \hL(\hD; \vw+ \rho \nabla \hL(\hD; \vw))
			\geq (1-\rho \beta) \| \nabla \hL(\hD; \vw)\|^2.
		\end{align}
	\end{lemma}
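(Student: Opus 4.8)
The plan is to prove the inner-product lower bound directly from $\beta$-smoothness and the Cauchy--Schwarz inequality, without invoking convexity. Write $\vg \equiv \nabla \hL(\hD;\vw)$ and $\vg' \equiv \nabla \hL(\hD; \vw + \rho \vg)$. First I would peel off the ``self'' term by adding and subtracting $\vg$ in the second factor:
\[
\vg^\top \vg' = \vg^\top \vg + \vg^\top (\vg' - \vg) = \| \vg \|^2 + \vg^\top (\vg' - \vg),
\]
so the whole problem reduces to lower-bounding the single cross term $\vg^\top (\vg' - \vg)$.

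Next I would control that cross term. By Cauchy--Schwarz, $\vg^\top (\vg' - \vg) \geq - \| \vg \| \, \| \vg' - \vg \|$. The displacement between the two evaluation points is $(\vw + \rho \vg) - \vw = \rho \vg$, so Assumption~\ref{ass:smooth} ($\beta$-smoothness of $\hL(\hD; \cdot)$) gives
\[
\| \vg' - \vg \| = \| \nabla \hL(\hD; \vw + \rho \vg) - \nabla \hL(\hD; \vw) \| \leq \beta \, \| \rho \vg \| = \rho \beta \, \| \vg \|,
\]
using $\rho > 0$. Chaining these two inequalities yields $\vg^\top (\vg' - \vg) \geq - \rho \beta \| \vg \|^2$.

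Substituting back into the decomposition gives
\[
\vg^\top \vg' \geq \| \vg \|^2 - \rho \beta \| \vg \|^2 = (1 - \rho \beta) \| \vg \|^2,
\]
which is exactly the claim. There is no real obstacle here; the only point needing care is that the smoothness estimate is applied at displacement $\rho \vg$ rather than $\vg$, so the radius $\rho$ enters the bound linearly, and it is precisely this factor that produces the $(1 - \rho \beta)$ coefficient. I note in passing that the inequality is informative only for small $\rho$: once $\rho \geq 1/\beta$ the right-hand side is nonpositive and the statement holds trivially, which matches the small-$\rho$ regime ($\rho = T^{-1/4}$) in which the lemma is used in the convergence proofs.
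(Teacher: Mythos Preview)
Your proof is correct. The paper does not supply its own proof of this lemma; it simply cites it as Lemma~7 of \citet{andriushchenko2022towards}, and your add-and-subtract decomposition followed by Cauchy--Schwarz and $\beta$-smoothness is exactly the standard argument one finds there.
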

	
	\begin{proof}[Proof of Theorem \ref{thm:conv-gd}]
		Let $\vg_t \equiv \nabla \hL(\hD; \vw_t)$ and $\vh_t \equiv \nabla \hL(\hD; \vw_t + \rho \nabla\hL(\hD; \vw_t))$ be the update direction of ERM and SAM, respectively.
		By Taylor expansion and $\hL(\hD; \vw)$ is $\beta$-smooth, we have  
		\begin{align}
			&\hL(\hD; \vw_{t+1})  \nonumber\\
			&\leq \hL(\hD; \vw_t) + \vg_t^\top (\vw_{t+1} - \vw_t ) 
			+ \frac{\beta}{2}  \| \vw_{t+1} - \vw_t \|^2 \nonumber\\
			&\leq \hL(\hD; \vw_t) 
			- \eta \vg_t^\top  \left((1-\xi_t)\vg_t   + \xi_t \vh_t\right) + \frac{\beta \eta^2}{2}  \| (1-\xi_t)\vg_t   + \xi_t \vh_t \|^2 \nonumber\\
			&=\! \hL(\hD; \vw_t) 
			\!-\! \eta (1\!-\!\xi_t) \| \vg_t\|^2 
			\!- \!\eta \xi_t \vg_t^\top \vh_t
			\!+\! \frac{\beta \eta^2}{2} \!\!\left(\! (1\!-\!\xi_t) \| \vg_t\|^2 \!+\! \xi_t \| \vh_t\|^2 \!+\! \underbrace{2\xi_t (1\!-\!\xi_t) \vg_t^\top \vh_t}_{=0} \!\right)\! \label{eq:temp-xxa1} \\
			&=  \hL(\hD; \vw_t)  - \eta \left( 1-\xi_t -\frac{ \beta\eta (1-\xi_t)}{2} \right) \| \vg_t\|^2 + \frac{\beta \eta^2 \xi_t}{2} \| \vh_t \|^2 - \eta \xi_t \vg_t^\top \vh_t, \label{eq:temp-xxa2}
		\end{align}
		where we have used $\xi_t (1-\xi_t) = 0$ as $\xi_t\in \{0,1\}$, $\xi_t^2=\xi_t$, and $(1-\xi_t)^2=1-\xi_t$ to obtain \eqref{eq:temp-xxa1}. 
		
		As  $ \| \vh_t \|^2 = \| \vh_t - \vg_t \|^2 - \| \vg_t \|^2  + 2 \vg_t^\top \vh_t$,
		it follows from  \eqref{eq:temp-xxa2}
		that
		\begin{align}
			& \hL(\hD;\vw_{t+1})  \nonumber\\
			=&  \hL(\hD;\vw_t) \! -\! \eta \left(\!1\! -\! \xi_t - \frac{\beta \eta (1-\xi_t)}{2}\!\right) \!\| \vg_t \|^2\! +\! \frac{\beta \eta^2 \xi_t}{2} \left(\| \vh_t \!-\! \vg_t \|^2 \!-\!\| \vg_t \|^2  + 2 \vg_t^\top \vh_t  \right)\! -\!  \eta \xi_t \vg_t^\top  \vh_t  \nonumber \\
			\leq &
			\hL(\hD; \vw_t)  
			\!-\! \eta \left(\!1 \!-\! \xi_t\! -\! \frac{\beta \eta (1-\xi_t)}{2} + \frac{\beta \eta \xi_t}{2} \right) \| \vg_t \|^2 
			+ \frac{\beta \eta^2 \xi_t  }{2} \| \vh_t - \vg_t\|^2 
			- \eta  (1-\beta \eta ) \xi_t\vg_t^\top \vh_t   \nonumber
			\\
			\leq &  \hL(\hD; \vw_t)  \!-\! \eta \!\left(\!1\! -\! \xi_t\! -\! \frac{\beta \eta (1\!-\!\xi_t)}{2} \!+\! \frac{\beta \eta \xi_t}{2} \right) \| \vg_t \|^2 + \frac{\beta^3 \eta^2 \rho^2\xi_t  }{2} \| \vg_t\|^2 - \eta  (1-\beta \eta )\xi_t \vg_t^\top \vh_t   \label{eq:tempx-asak1}\\
			=&   \hL(\hD; \vw_t) - \eta \left( 1 - \xi_t - \frac{\beta \eta (1-\xi_t)}{2} + \frac{\beta \eta \xi_t}{2} + \frac{\beta^3 \eta  \rho^2 \xi_t}{2} +   (1-\beta \eta)(1-\beta \rho ) \xi_t \right) \| \vg_t \|^2 
			\label{eq:tempx-asak2}\\
			=&   \hL(\hD; \vw_t) - \eta \left( 1 -  \frac{\beta \eta (1-\xi_t)}{2} + \frac{\beta \eta \xi_t}{2} + \frac{\beta^3 \eta \xi_t \rho^2 }{2} -  \beta \eta \xi_t -  \beta \rho \xi_t +  \beta^2 \eta \rho \xi_t \right) \| \vg_t \|^2 \nonumber\\
			\leq 	&   \hL(\hD; \vw_t) - \eta \left( 1- \frac{\beta\eta }{2} -    \beta \rho \xi_t\right) \| \vg_t \|^2, \label{eq:temp-ajsk2}
		\end{align}
		where we have used $\| \vh_t - \vg_t \|^2 = \|  \nabla \hL(\hD; \vw_t + \rho \nabla\hL(\hD; \vw_t)) -  \nabla \hL(\hD; \vw_t)\|^2 \leq \beta^2 \rho^2 \|  \nabla \hL(\hD; \vw_t) \|^2 = \beta^2 \rho^2 \|\vg_t\|^2$ to obtain \eqref{eq:tempx-asak1},
		and Lemma \ref{lem:lemma-est} to obtain \eqref{eq:tempx-asak2}.
		
		Summing over $t$ from $t=0$ to $T-1$ on both sides
		of \eqref{eq:temp-ajsk2} and rearranging, we have
		\begin{align}
			\sum_{t=0}^{T-1} \eta  \left( 1- \frac{\beta\eta }{2} -  \beta \rho \xi_t \right) \| \vg_t \|^2 
			\leq   \hL(\hD; \vw_0) -  \hL(\hD;\vw_{T}).
			\label{eq:temp-axsk}
		\end{align}
		As $\rho < \frac{1}{2\beta}$ and $\eta <  \frac{1}{\beta}$,
		it follows that $1- \frac{\beta\eta }{2} -  \beta \rho \xi_t>0$ for all $t$.
		Thus,  \eqref{eq:temp-axsk} 
		implies
		\begin{align}
			\min_{0\leq t\leq T-1} \| \vg_t \|^2  \leq \frac{  \hL(\hD;\vw_0) -  \hL(\hD; \vw_{T})}{\sum_{t=0}^{T-1} \eta  \left( 1- \frac{\beta\eta }{2} -  \xi_t \beta \rho \right)} 
			&= \frac{  \hL(\hD; \vw_0) -  \hL(\hD; \vw_{T})}{ T\eta \left(1-\frac{\beta \eta }{2} - \beta \rho \zeta \right)},
		\end{align}
		where $\zeta = \frac{1}{T} \sum_{t=0}^{T-1} \xi_t  \in [0,1]$
		and we finish the proof.
	\end{proof}
	
	\section{Additional Experimental Results}
	\subsection{Distribution of Stochastic Gradient Norms}
	\label{apd:dist-gdn}
	
	\vskip -.1in
	Figure \ref{fig-apd:gradnorm-dist-cifar}
	shows the distributions
	of stochastic gradient norms 
	for \textit{ResNet-18}, \textit{WRN-28-10}
	and \textit{PyramidNet-110}
	on \textit{CIFAR-10}
	and
	\textit{CIFAR-100}.
	As can be seen, 
	the distribution follows a Bell curve
	in all settings.
	Figure \ref{fig-apd:gradnorm-qq-cifar}
	shows the Q-Q plots.
	We can see that
	the curves are close to the lines.
	
	\begin{figure}[!h]
		\vskip -.15in
		\centering
		\!\!\!
		\subfigure[\textit{ResNet-18}. \label{fig:gradnorm-dist-cifar10-resnet-18}]{\includegraphics[width=0.33\textwidth]{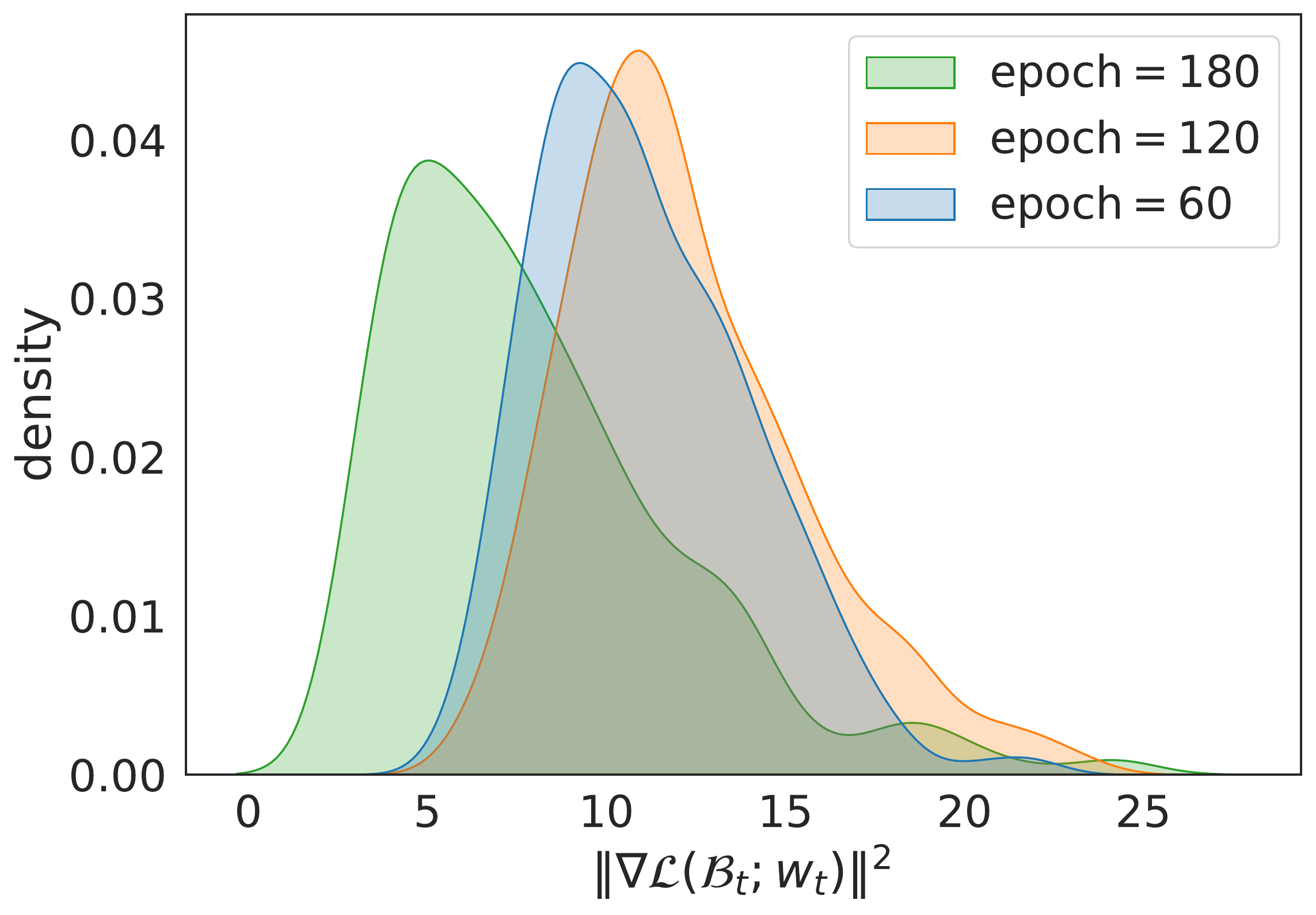}}
		\subfigure[\textit{WRN-28-10}. \label{fig:gradnorm-dist-cifar10-wrn2810}]{\includegraphics[width=0.33\textwidth]{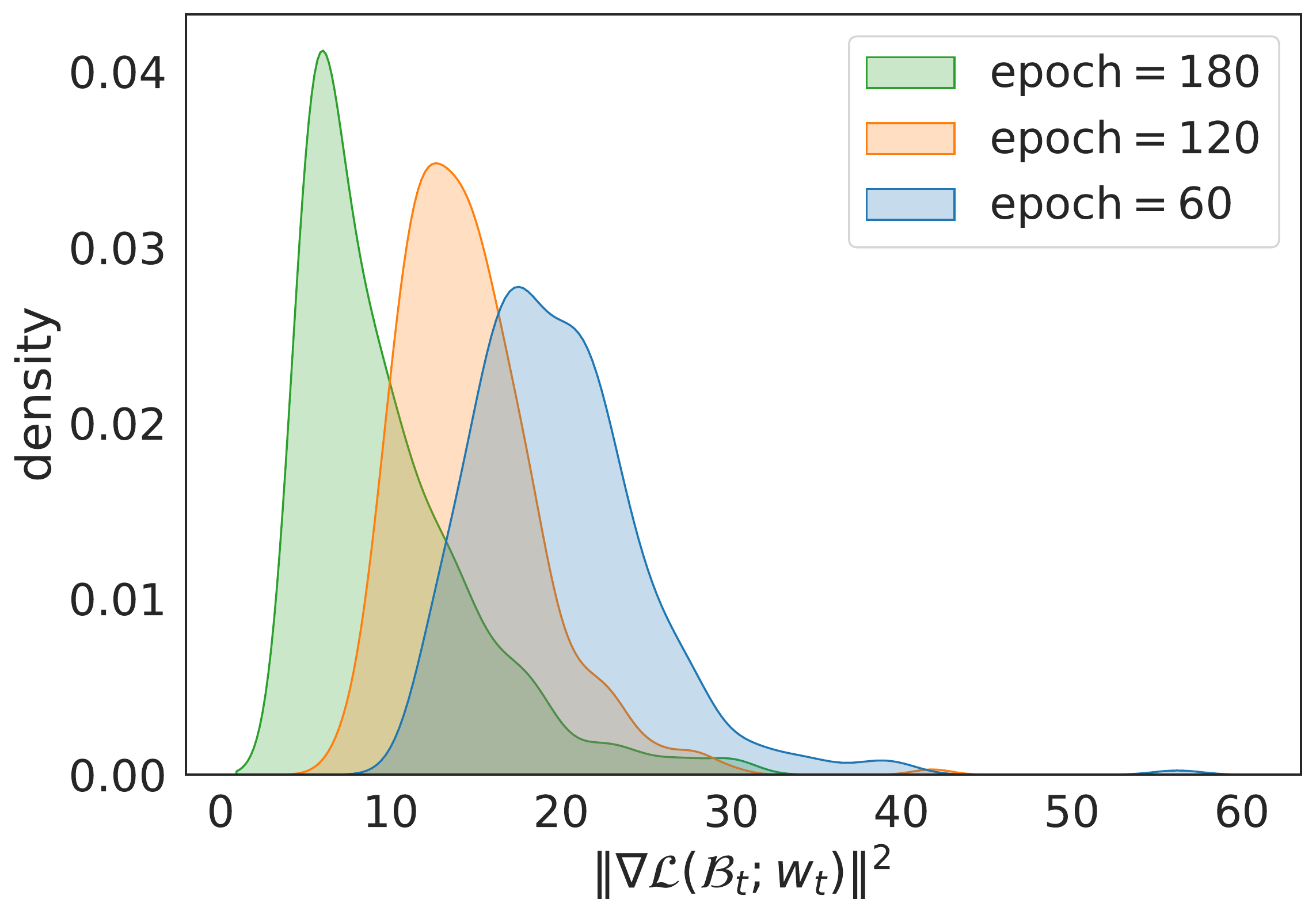}}
		\subfigure[\textit{PyramidNet-110}. \label{fig:gradnorm-dist-cifar10-pyramidnet}]{\includegraphics[width=0.33\textwidth]{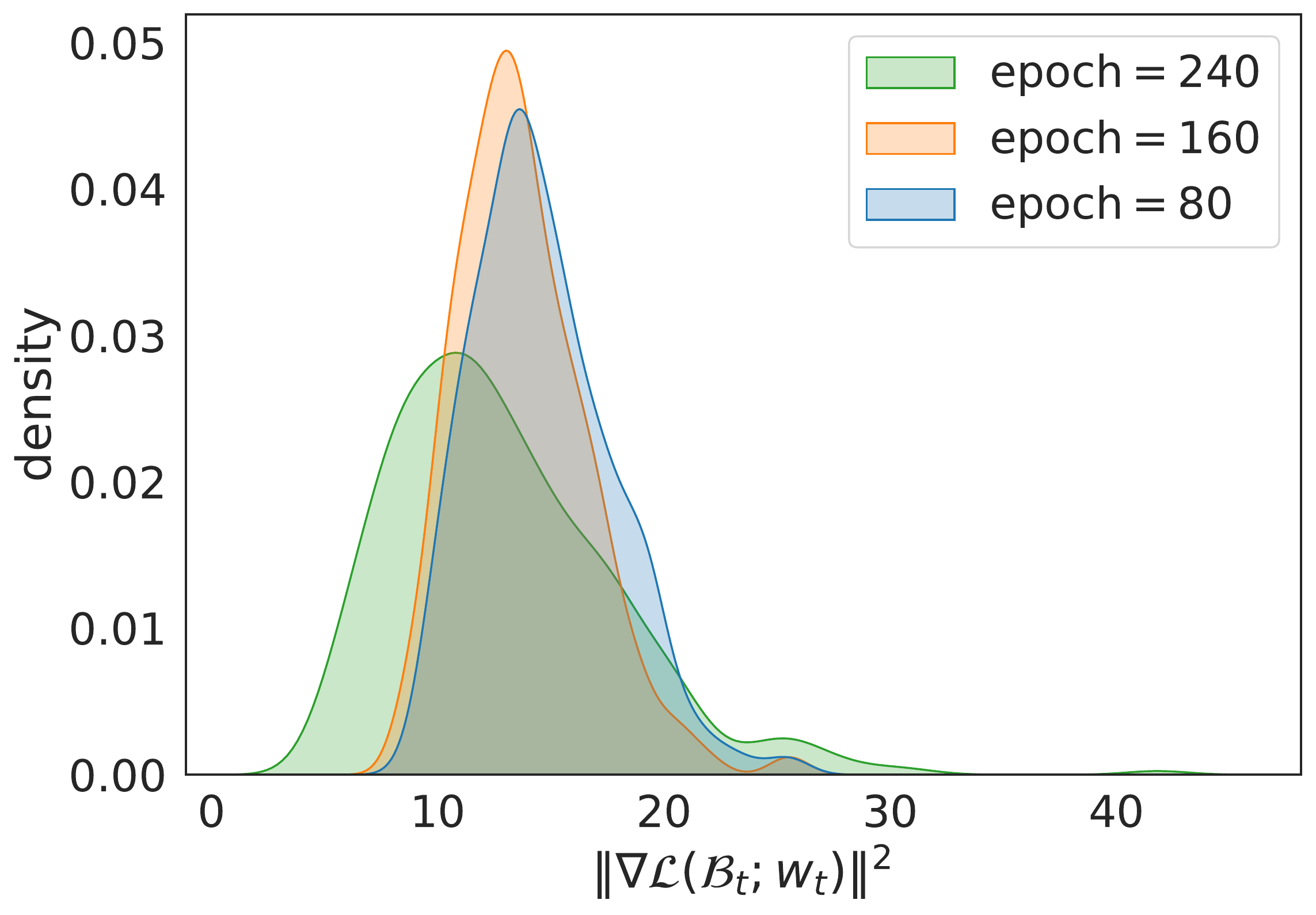}} \!\!
		\\
		\vskip -.15in
		\!\!
		\subfigure[\textit{ResNet-18}. \label{fig:apd-grad-norm-dist}]{\includegraphics[width=0.33\textwidth]{figs/gaussian_cifar100_resnet-18_grad_norm.pdf}}  \!\!\!
		\subfigure[\textit{WRN-28-10}. \label{fig:gradnorm-dist-cifar100-wrn2810}]{\includegraphics[width=0.33\textwidth]{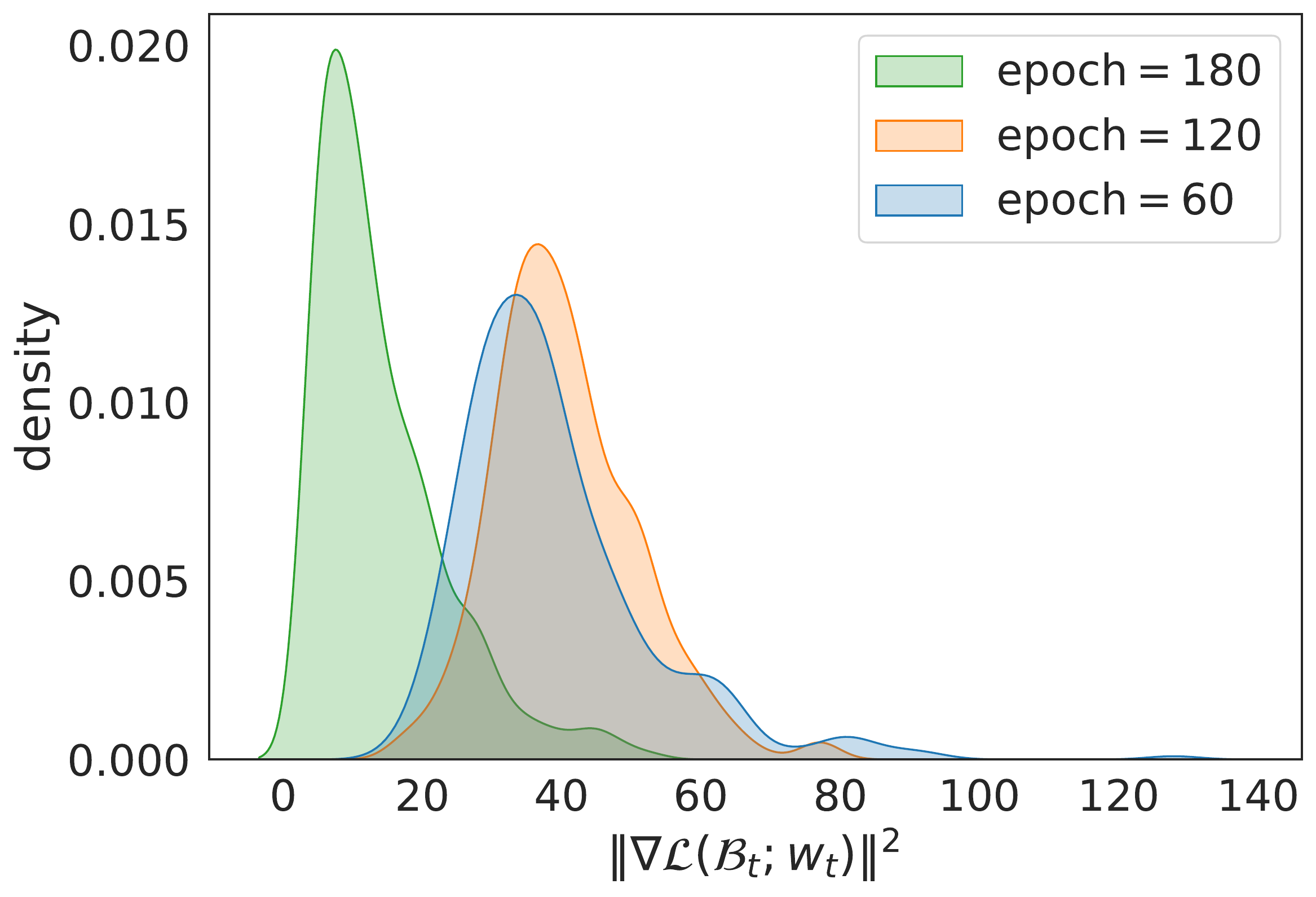}} \!\!\!
		\subfigure[\textit{PyramidNet-110}. \label{fig:gradnorm-dist-cifar100-pyramidnet}]{\includegraphics[width=0.335\textwidth]{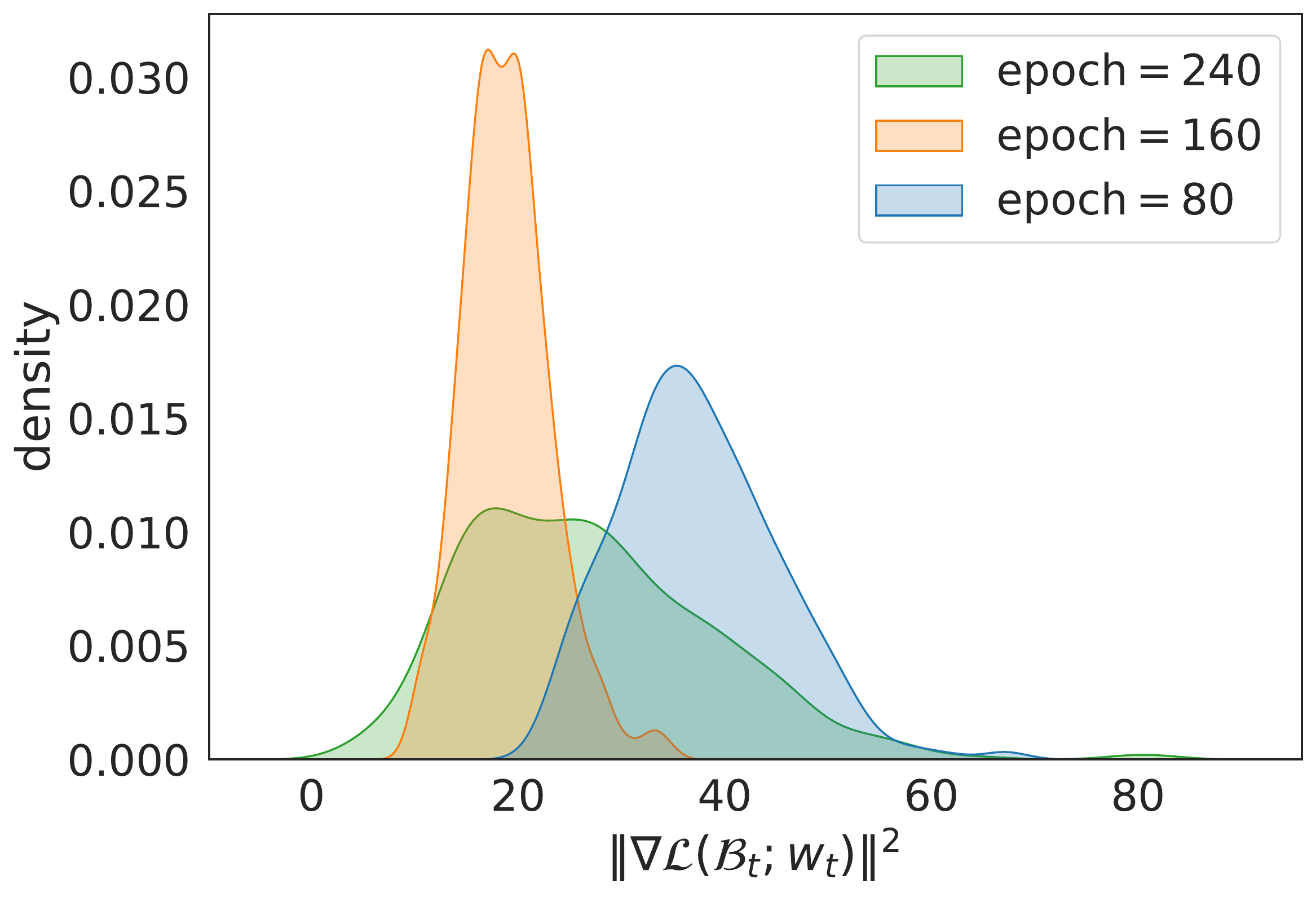}} \\
		\vskip -.15in
		\caption{
			Distributions of stochastic gradient norms on
			\textit{CIFAR-10} (top) and 
			\textit{CIFAR-100} (bottom).
			Best viewed in color.
		}
		\label{fig-apd:gradnorm-dist-cifar}
	\end{figure}
	
	\begin{figure}[!h]
		\vskip -.15in
		\centering
		\!\!\!
		\subfigure[\textit{ResNet-18}. \label{fig:qq-gradnorm-dist-cifar10-resnet-18}]{\includegraphics[width=0.33\textwidth]{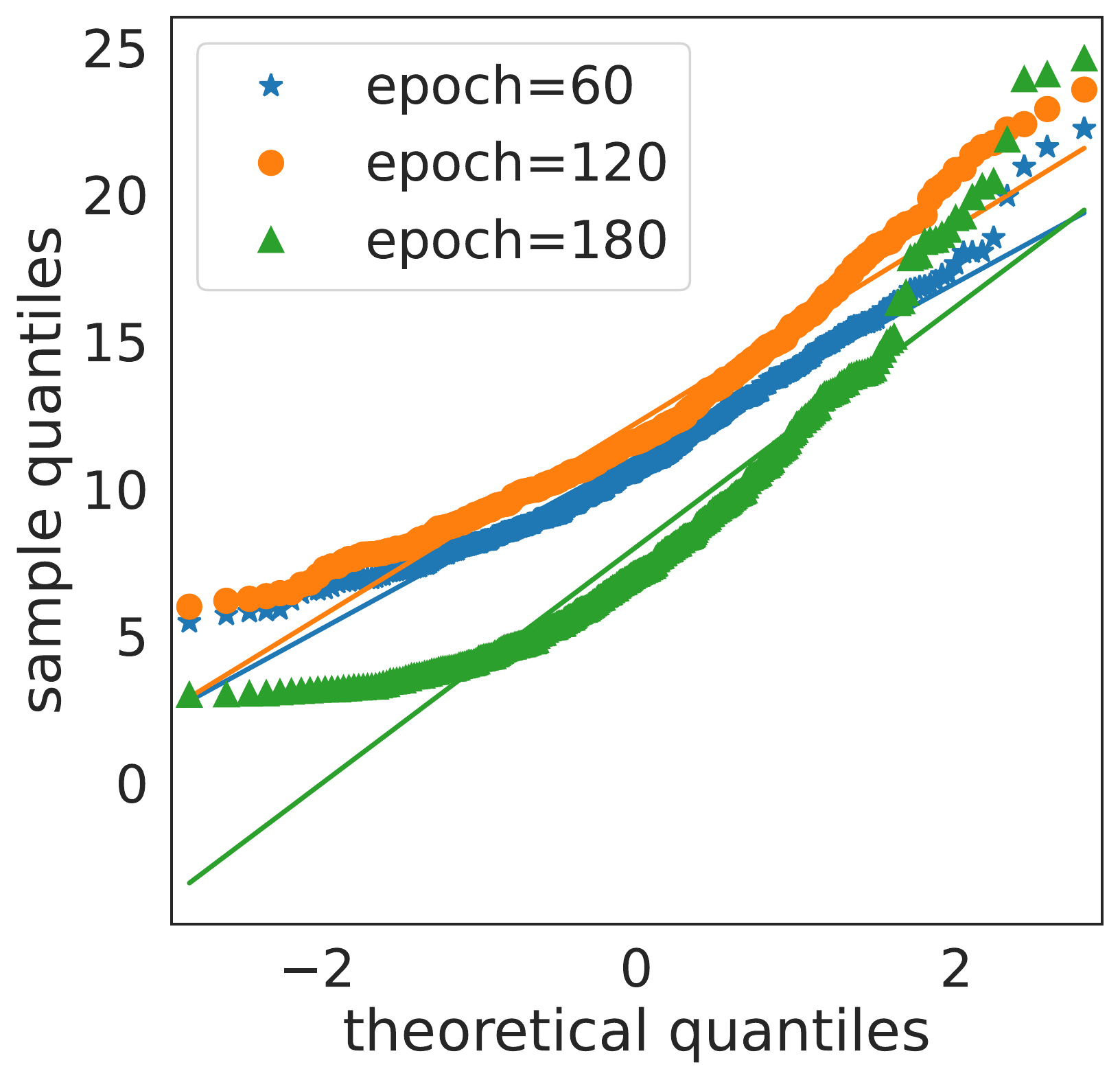}}
		\subfigure[\textit{WRN-28-10}. \label{fig:qq-gradnorm-dist-cifar10-wrn2810}]{\includegraphics[width=0.33\textwidth]{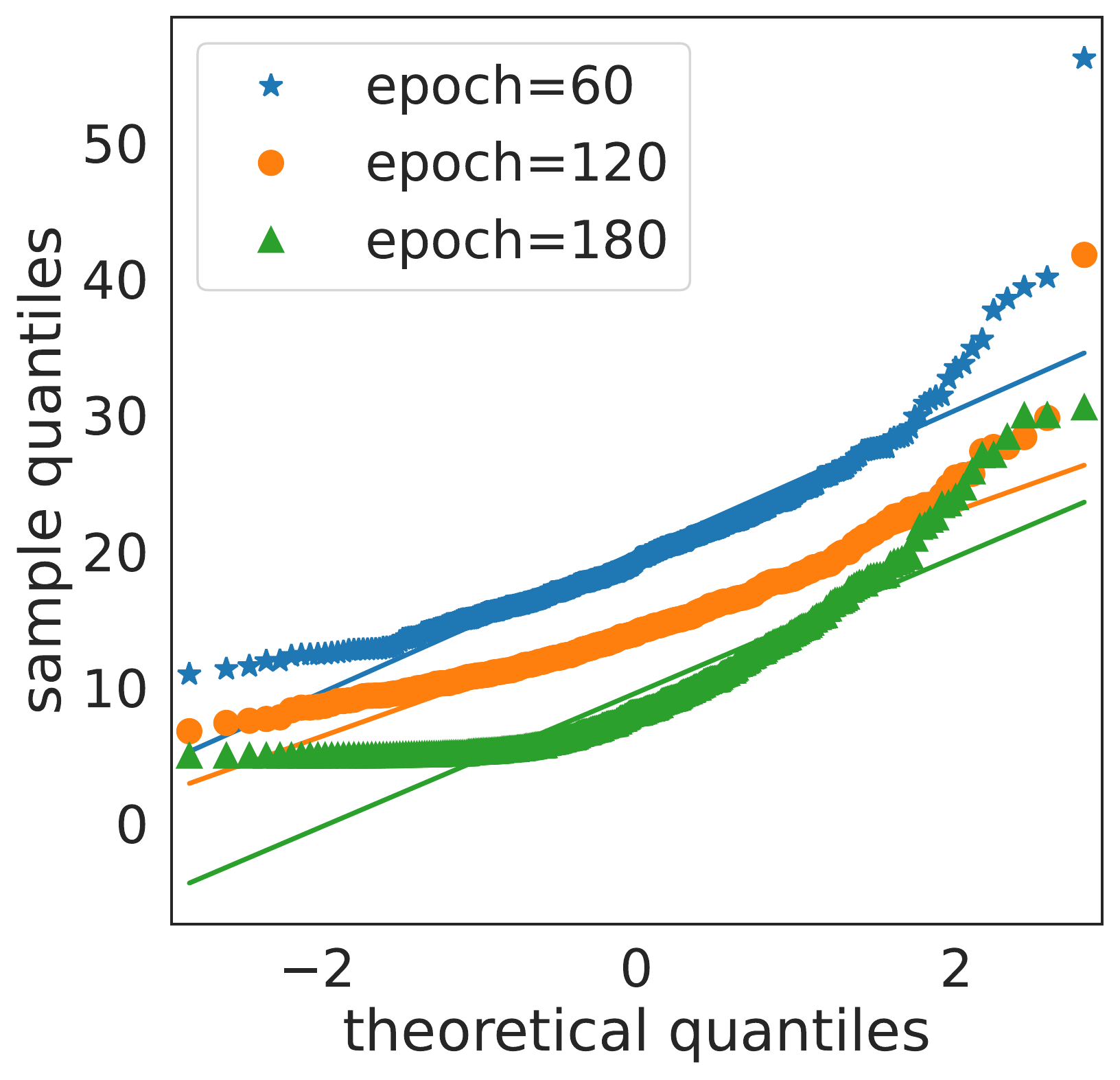}}
		\subfigure[\textit{PyramidNet-110}. \label{fig:qq-gradnorm-dist-cifar10-pyramidnet}]{\includegraphics[width=0.33\textwidth]{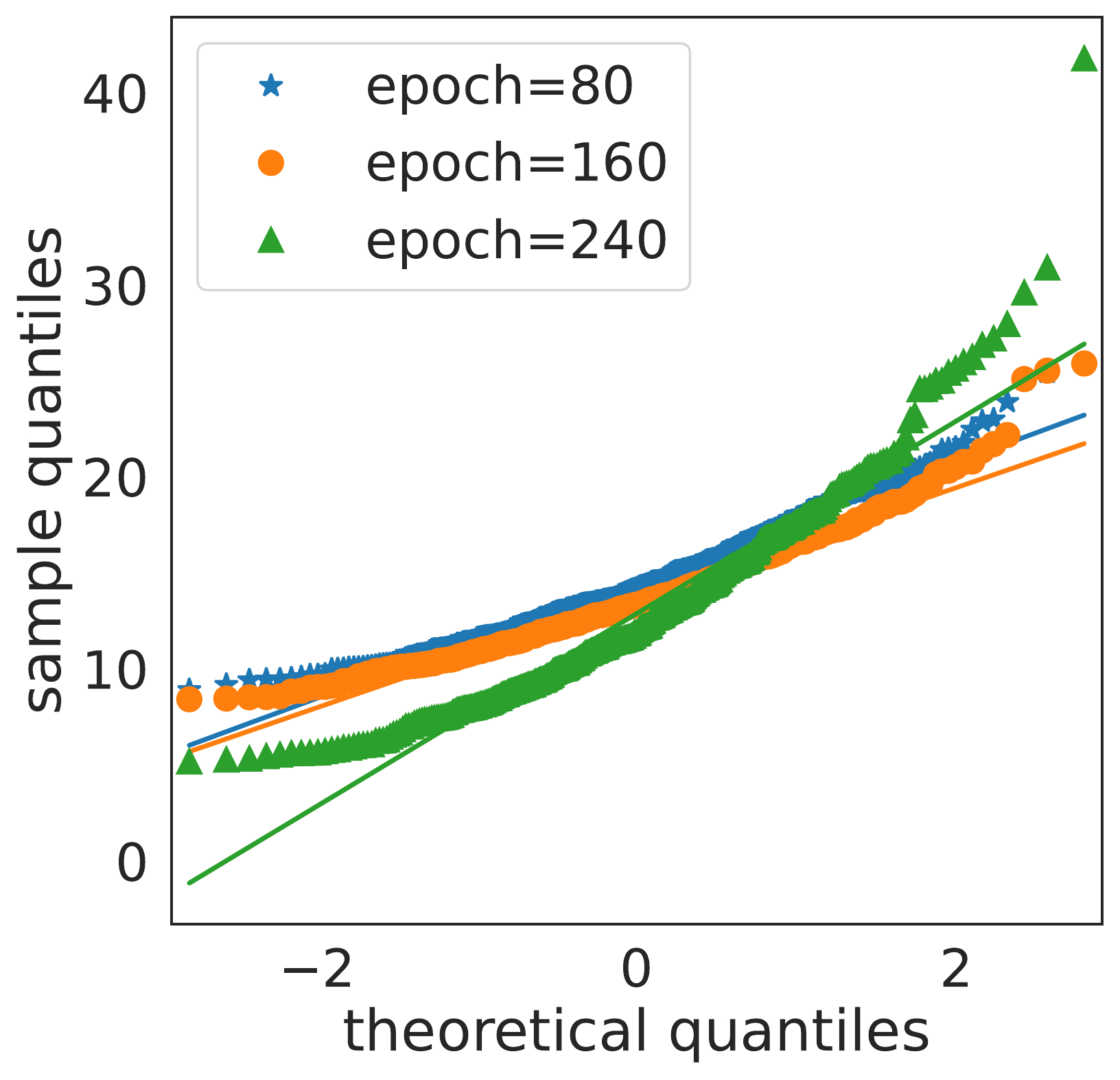}} \!\! \\
		\vskip -.15in
		\!\!
		\subfigure[\textit{ResNet-18}. \label{fig:qq-apd-grad-norm-dist}]{\includegraphics[width=0.33\textwidth]{figs/gaussian_cifar100_resnet-18_grad_norm_qqplot.pdf}}  \!\!\!
		\subfigure[\textit{WRN-28-10}. \label{fig:qq-gradnorm-dist-cifar100-wrn2810}]{\includegraphics[width=0.34\textwidth]{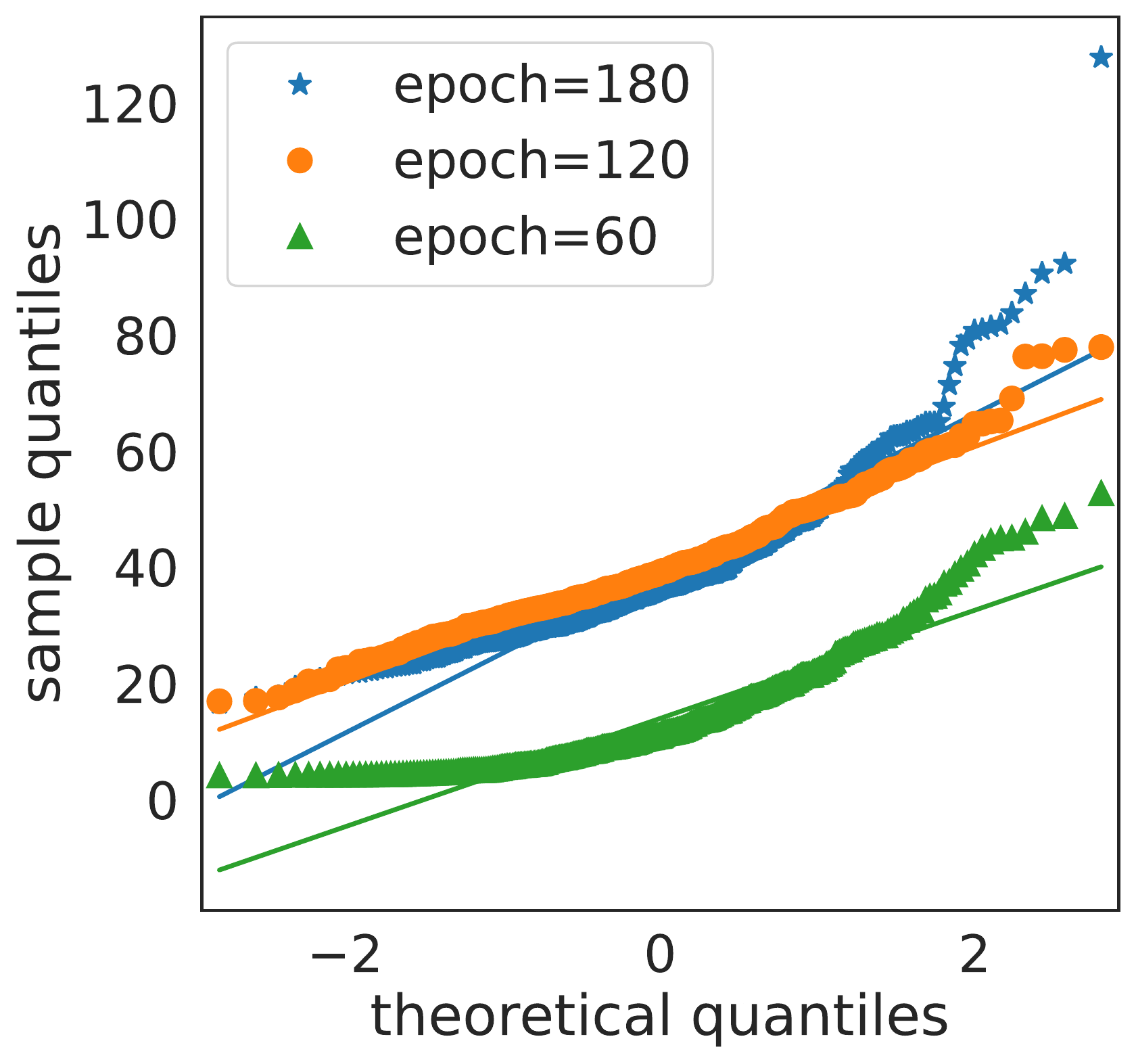}} \!\!\!
		\subfigure[\textit{PyramidNet-110}. \label{fig:qq-gradnorm-dist-cifar100-pyramidnet}]{\includegraphics[width=0.33\textwidth]{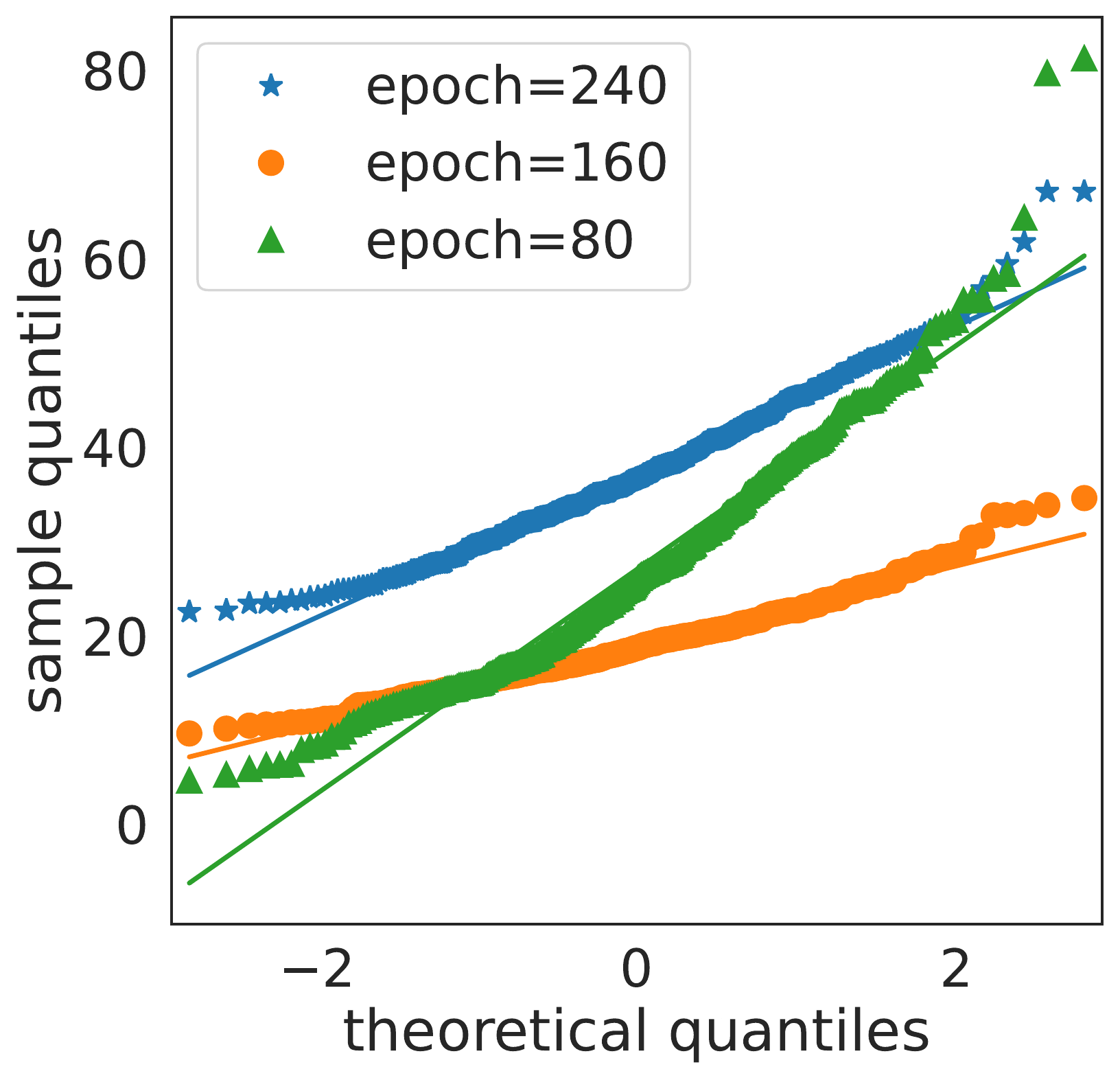}} \\
		\vskip -.15in
		\caption{
			Q-Q plots of stochastic gradient norms on
			\textit{CIFAR-10} (top) and 
			\textit{CIFAR-100} (bottom).
			Best viewed in color.
		}
		\label{fig-apd:gradnorm-qq-cifar}
		\vskip -.1in
	\end{figure}
	
	\subsection{Effect of $k$ on LookSAM}
	\label{sec:appendix-label-noise}
	
	\vskip -.1in
	In this
	experiment,
	we demonstrate that
	LookSAM 
	is sensitive to the choice of $k$.
	Table \ref{table:looksam-noisy-label}
	shows the testing accuracy and fraction of SAM updates when using LookSAM 
	on noisy \textit{CIFAR-10},
	with
	$k\in \{2,3,4,5\}$ and the \textit{ResNet-18} model.
	As can be seen,
	$k=2$ yields much better performance than $k\in \{3,4,5\}$, particularly at higher noise levels (e.g., $80\%$).
	
	\begin{table}[!h]
		\centering
		\vskip -.15in		
		\caption{Effects of $k$ in LookSAM on \textit{CIFAR-10}
			with different levels of label noise using \textit{ResNet-18}.
		}
		\label{table:looksam-noisy-label}
		\resizebox{.98\textwidth}{!}{
			\begin{tabular}{c c c c c c c c c}
				\toprule
				& \multicolumn{2}{c}{$\text{noise }=20\%$}
				& \multicolumn{2}{c}{$\text{noise }=40\%$}
				& \multicolumn{2}{c}{$\text{noise }=60\%$}
				& \multicolumn{2}{c}{$\text{noise }=80\%$} \\
				$k$ &
				accuracy & \fracsam
				&accuracy & \fracsam
				&accuracy & \fracsam
				&accuracy & \fracsam	\\
				\midrule
				2 & $\mathbf{92.72}$ 	& $50.0$  
				& $\mathbf{88.04}$   	&$50.0$   
				& $\mathbf{72.26}$   &$50.0$  
				& $\mathbf{69.72}$  & $50.0$ 
				\\
				3 & $89.07$ & $33.3$  & $75.38$ 
				&$33.3$  & $63.79$  
				&$33.3$ & $	53.87$   & $33.3$ 
				\\
				4 & $89.00$ & $25.0$  & $74.12$ 
				&$25.0$  & $58.17$  
				&$25.0$ & $52.28$   & $25.0$  
				\\
				5 & $88.57$ & $20.0$  & $73.90$ 
				&$20.0$  & $56.80$  
				&$20.0$ & $51.82$   & $20.0$  
				\\
				\bottomrule
			\end{tabular}
		}
	\end{table}

	\subsection{More Results on    Robustness to Label Noise}
	\label{sec:robust}
	
	Figure \ref{fig:curve-noisy-resnet18-apd} (resp.
	\ref{fig:curve-noisy-resnet32-apd})
	shows the curves of accuracies at noise levels 
	of $20\%$, $40\%$, $60\%$, and $80\%$ with
	\textit{ResNet-18} (resp.
	\textit{ResNet-32}).
	As can be seen, in all settings,
	AE-LookSAM
	is as robust to label noise as SAM.
	
	\begin{figure}[!h]
		\centering
		\vskip -.15in
		\!\!\!\!\!\!
		\subfigure[20\% (Training). \label{fig:noisy-cifar10-20-resnet18-train}]{\includegraphics[width=0.25\textwidth]{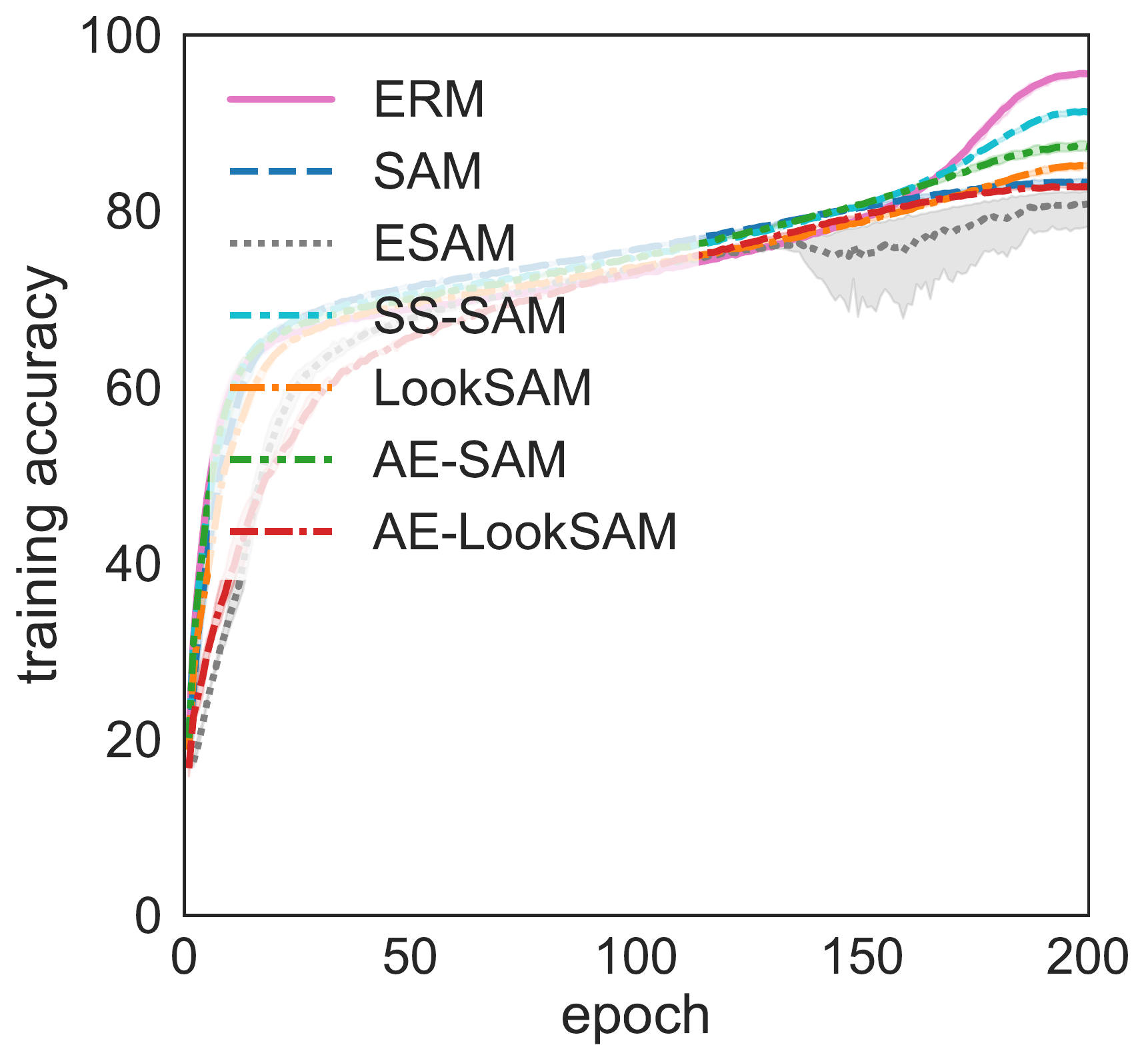}}\!\!
		\subfigure[20\% (Testing). \label{fig:noisy-cifar10-20-resnet18-valid}]{\includegraphics[width=0.25\textwidth]{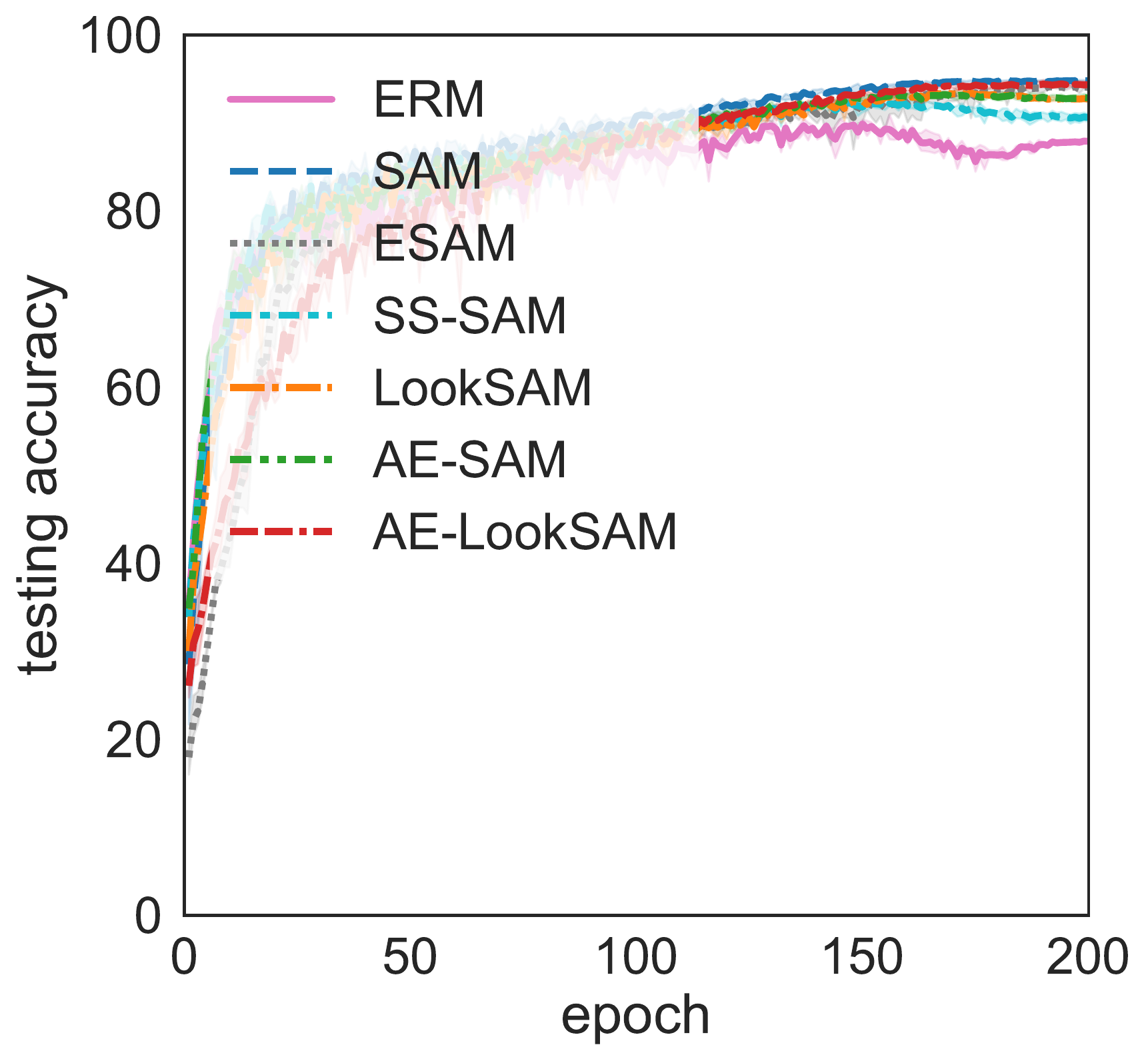}} \!\!
		\subfigure[40\% (Training).
		\label{fig:noisy-cifar10-40-resnet18-train}]{\includegraphics[width=0.25\textwidth]{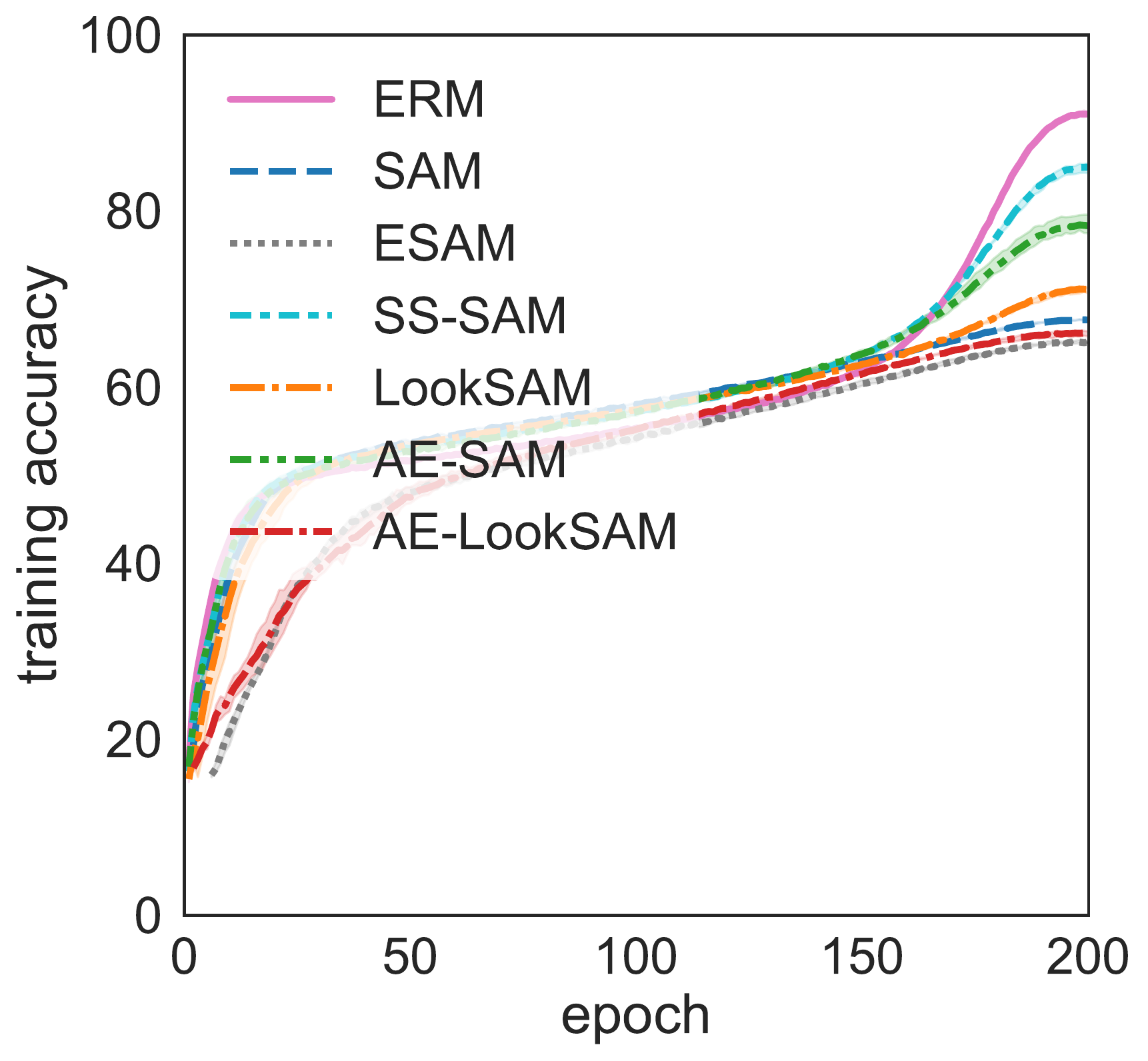}} \!\! 
		\subfigure[40\% (Testing). \label{fig:noisy-cifar10-40-resnet18-valid}]{\includegraphics[width=0.25\textwidth]{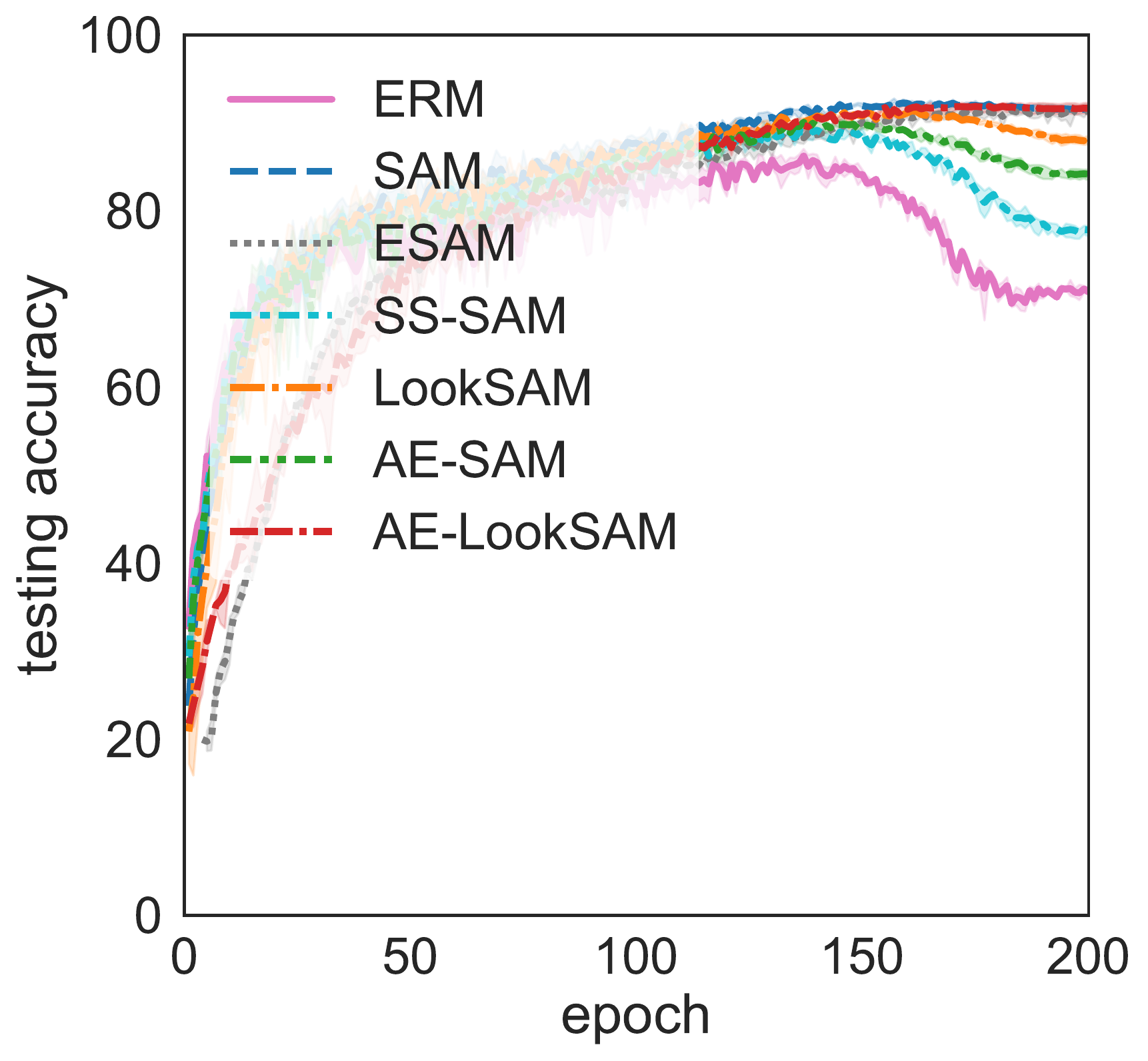}} \!\! \\
		\vskip -.15in
		\!\!\!\!\!\!
		\subfigure[60\% (Training).
		\label{fig:noisy-cifar10-60-resnet18-train}]{\includegraphics[width=0.25\textwidth]{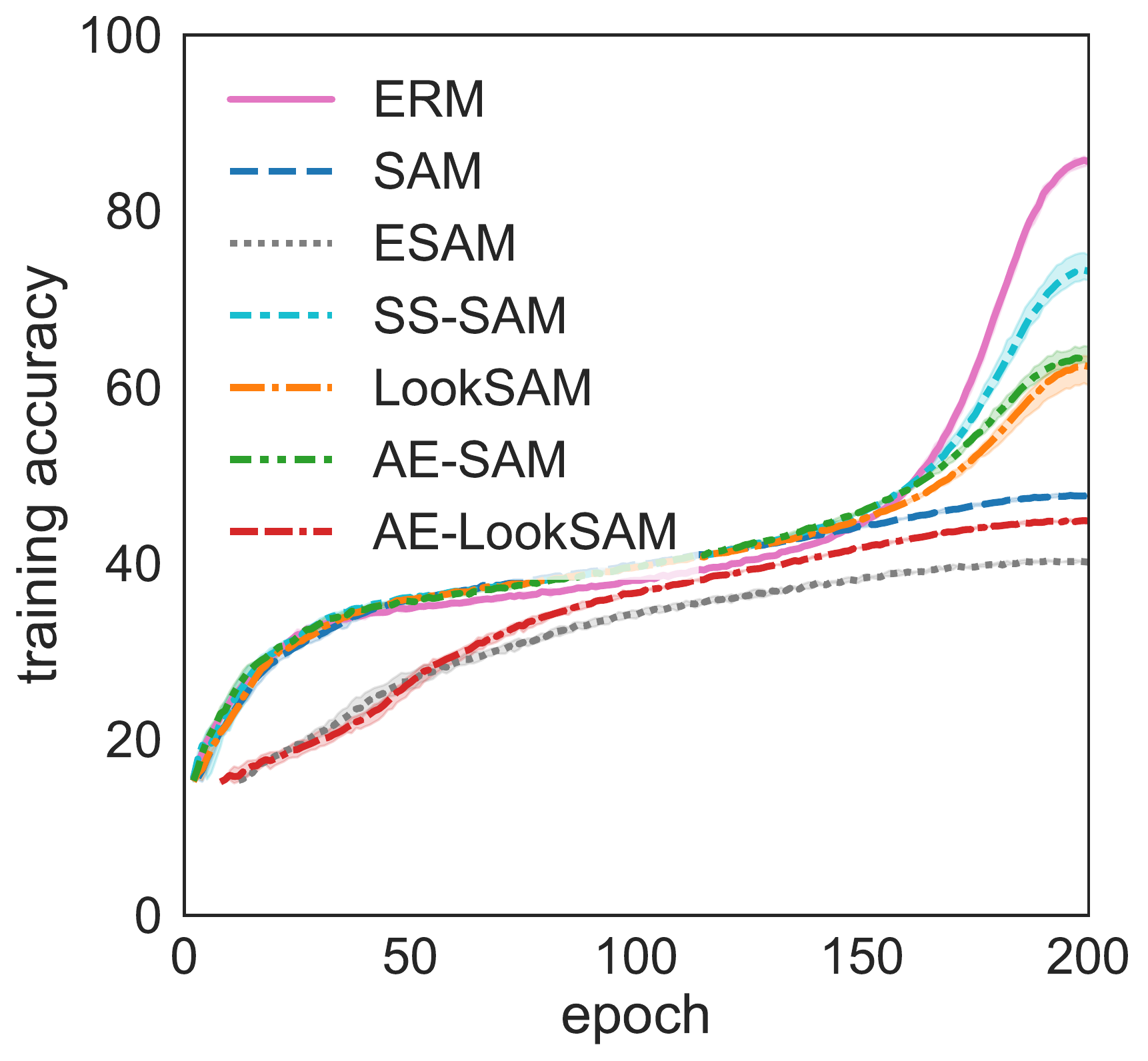}} \!\!
		\subfigure[60\% (Testing). \label{fig:noisy-cifar10-60-resnet18-valid}]{\includegraphics[width=0.25\textwidth]{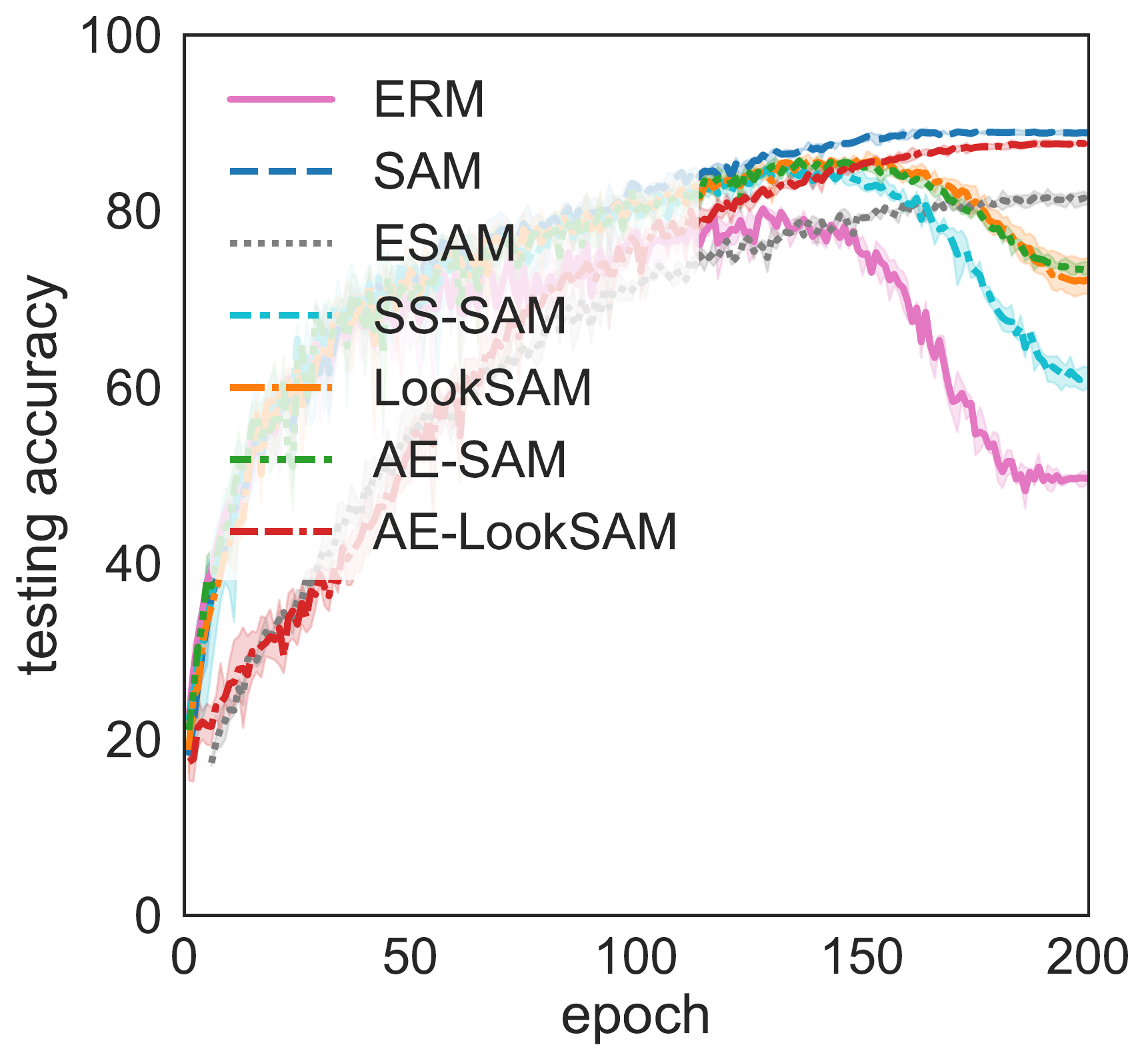}} \!\!
		\subfigure[80\% (Training). \label{fig:noisy-cifar10-80-resnet18-train-apd}]{\includegraphics[width=0.245\textwidth]{figs/noisy_cifar10_80_resnet18_train}} \!\!
		\subfigure[80\% (Testing). \label{fig:noisy-cifar10-80-resnet18-valid-apd}]{\includegraphics[width=0.245\textwidth]{figs/noisy_cifar10_80_resnet18_valid}} \!\!\!
		\vskip -.15in
		\caption{Accuracies with number of epochs on CIFAR-10 with $20\%, 40\%$, $60\%$, and $80\%$ noise level using \textit{ResNet-18}.
			Best viewed in color.
		}
		\label{fig:curve-noisy-resnet18-apd}
	\end{figure}
	
	\begin{figure}[!h]
		\centering
		\vskip -.15in
		\!\!\!\!\!\!
		\subfigure[20\% (Training). \label{fig:noisy-cifar10-20-resnet32-train}]{\includegraphics[width=0.25\textwidth]{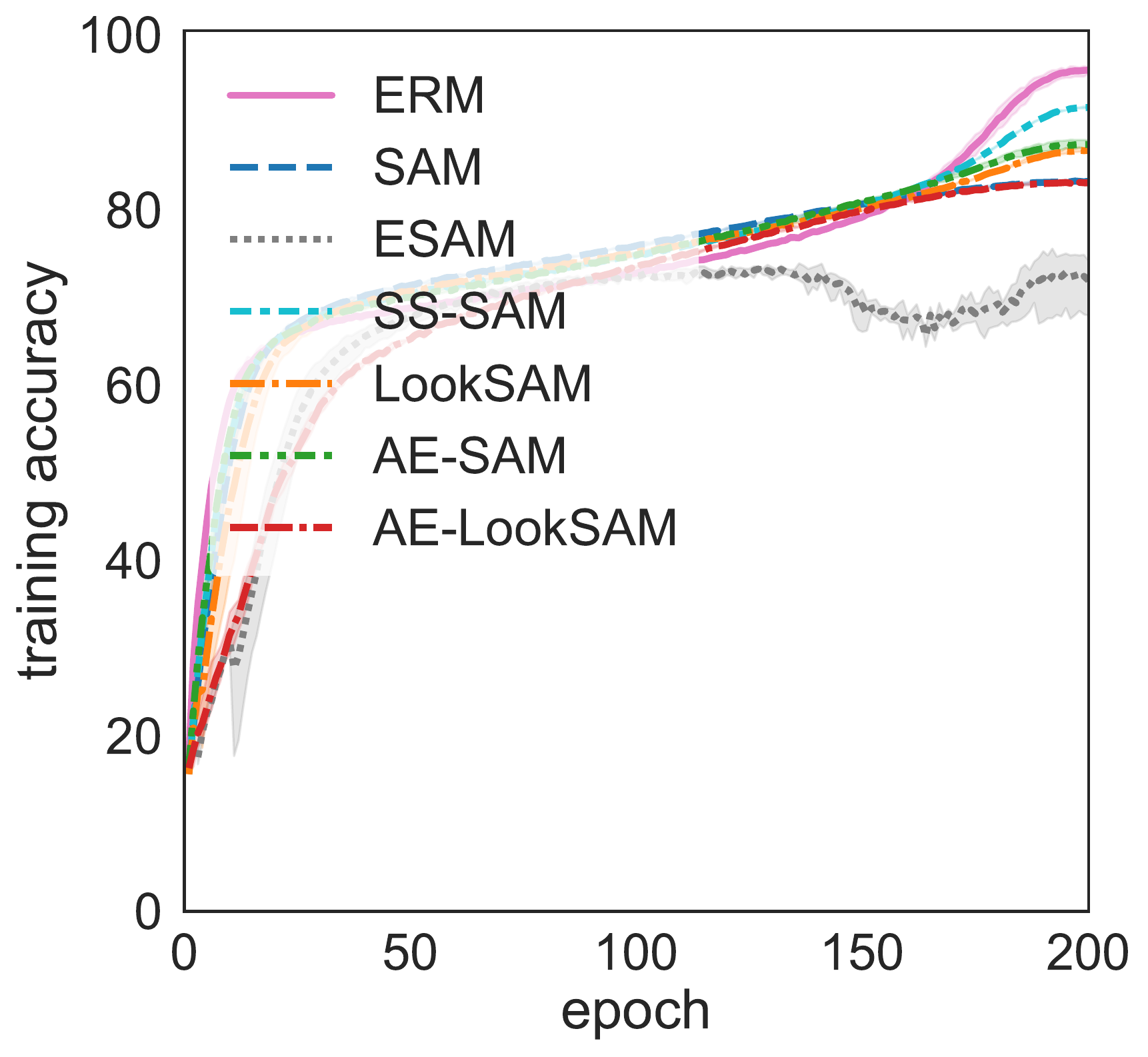}}\!\!
		\subfigure[20\% (Testing). \label{fig:noisy-cifar10-20-resnet32-valid}]{\includegraphics[width=0.25\textwidth]{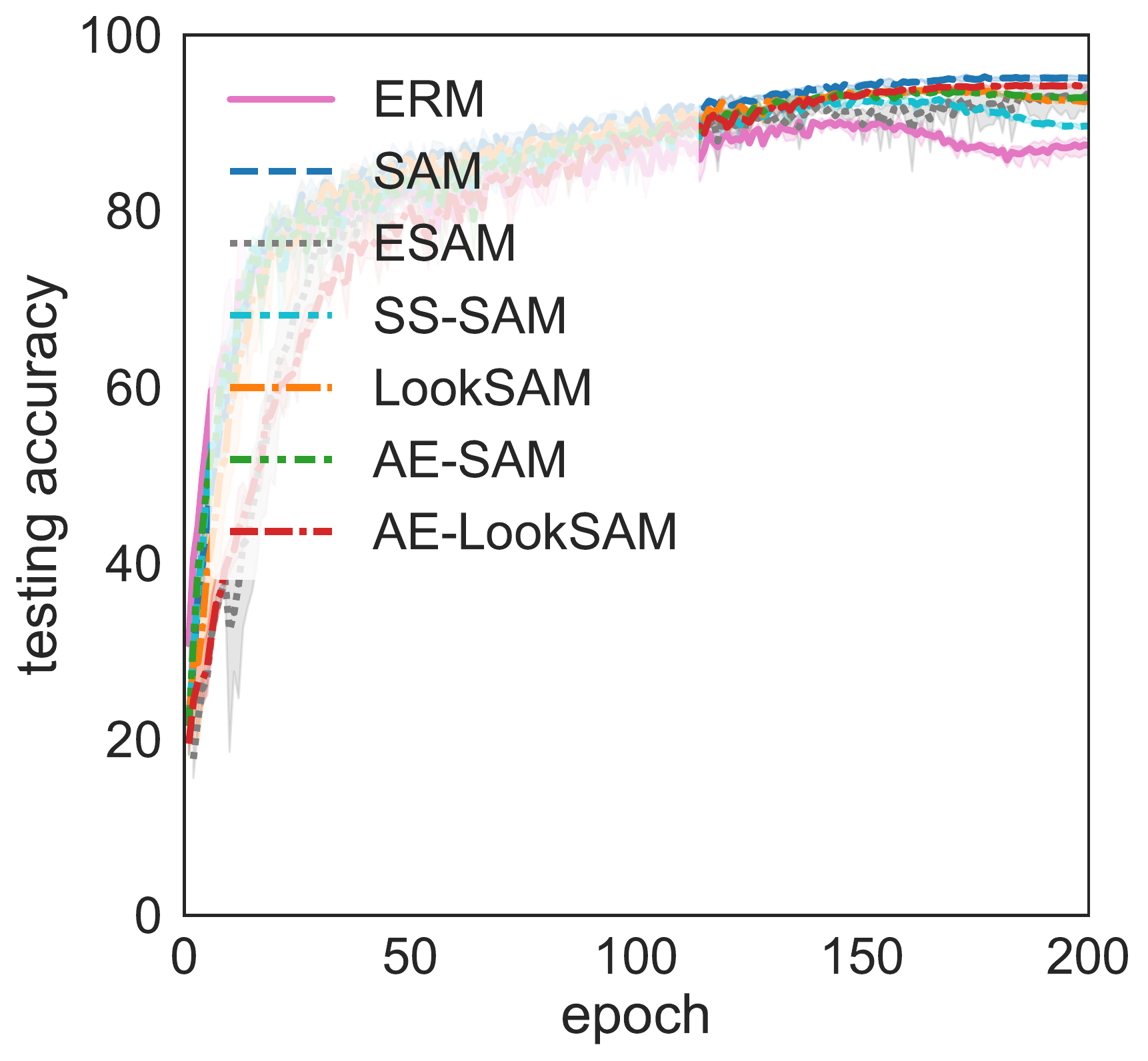}} \!\!
		\subfigure[40\% (Training).
		\label{fig:noisy-cifar10-40-resnet32-train}]{\includegraphics[width=0.25\textwidth]{figs/noisy_cifar10_40_resnet18_train}} \!\! 
		\subfigure[40\% (Testing). \label{fig:noisy-cifar10-40-resnet32-valid}]{\includegraphics[width=0.25\textwidth]{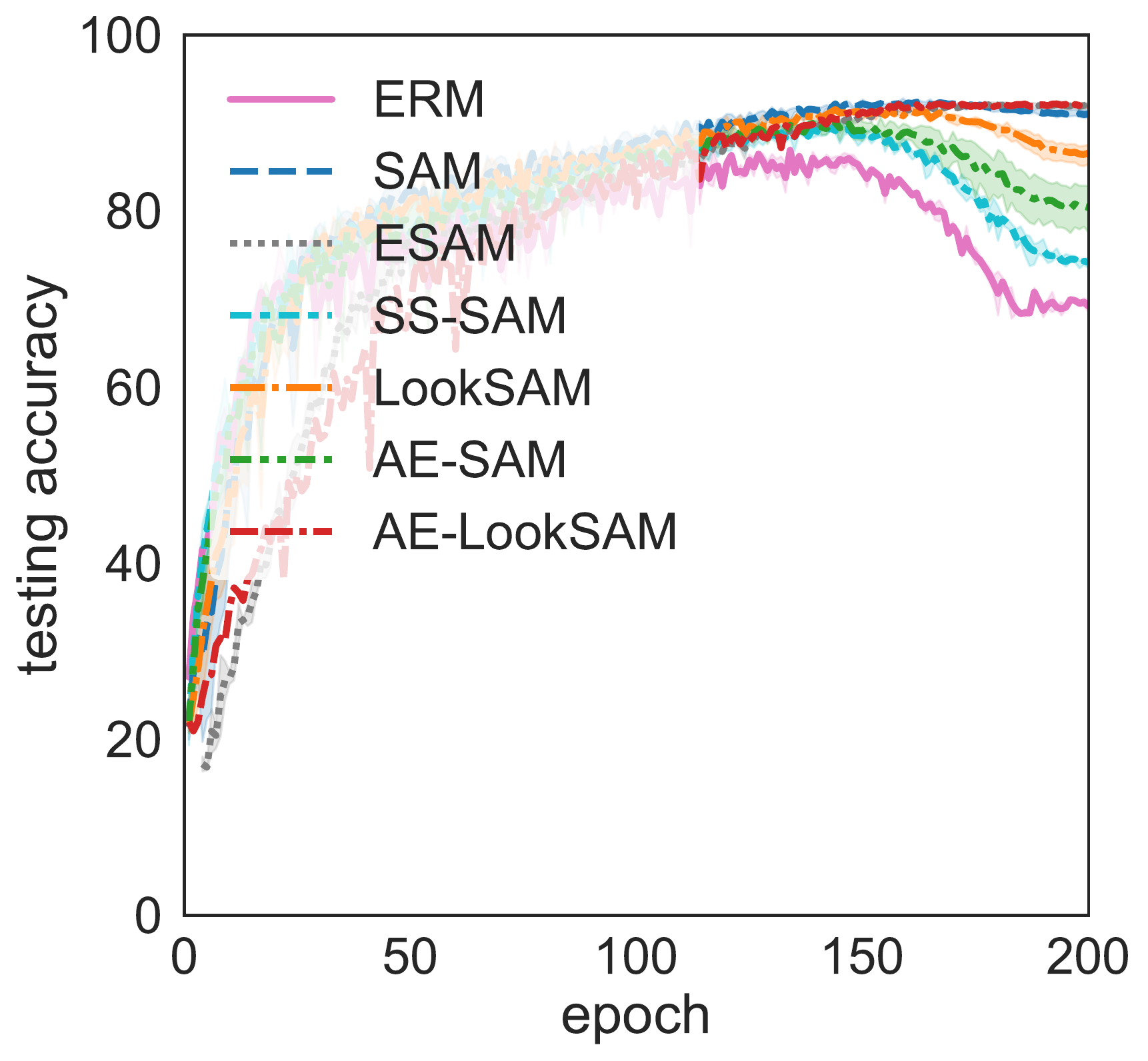}} \!\! \\
		\vskip -.15in
		\!\!\!\!\!\!
		\subfigure[60\% (Training).
		\label{fig:noisy-cifar10-60-resnet32-train}]{\includegraphics[width=0.25\textwidth]{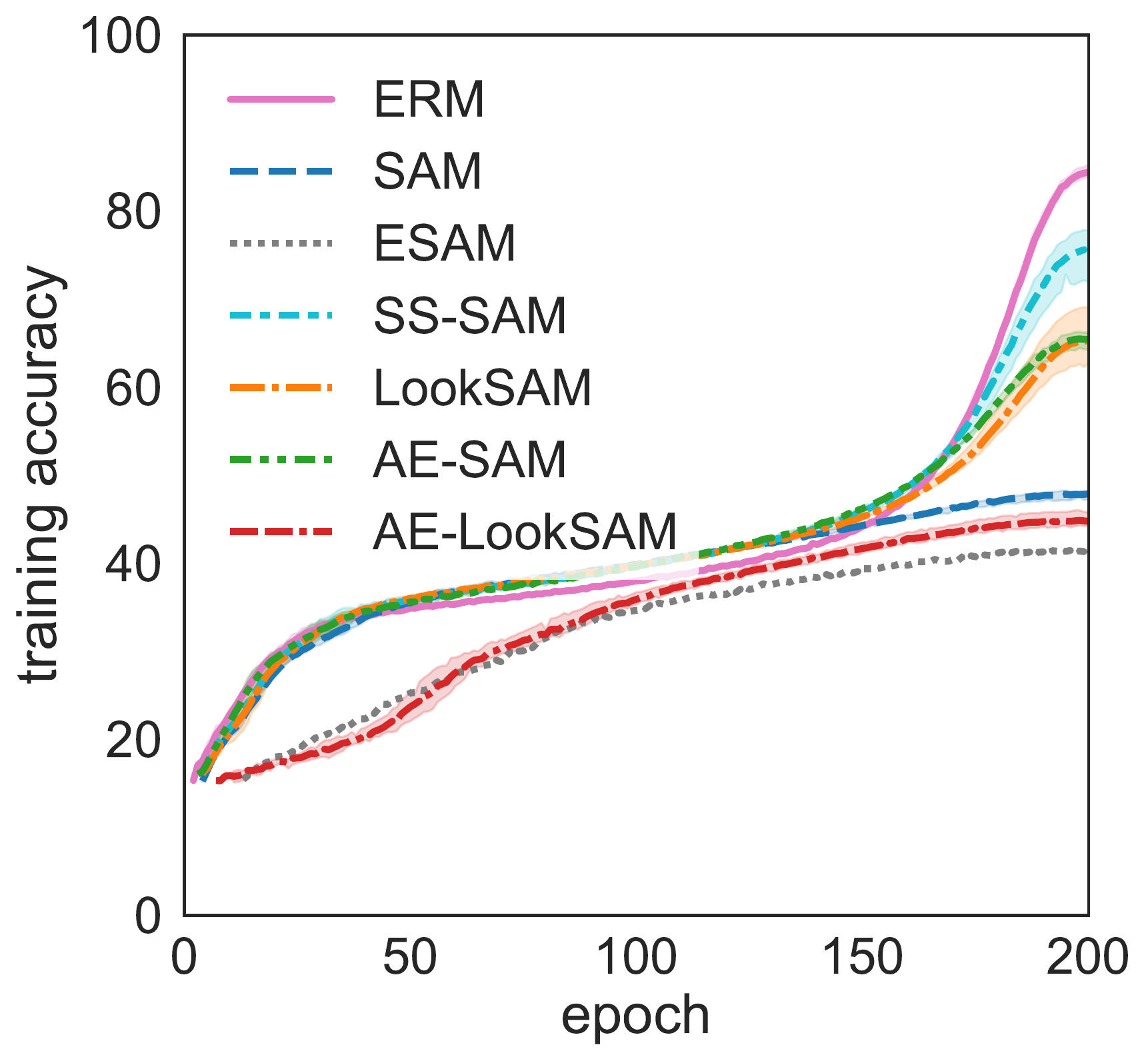}} \!\!
		\subfigure[60\% (Testing). \label{fig:noisy-cifar10-60-resnet32-valid}]{\includegraphics[width=0.25\textwidth]{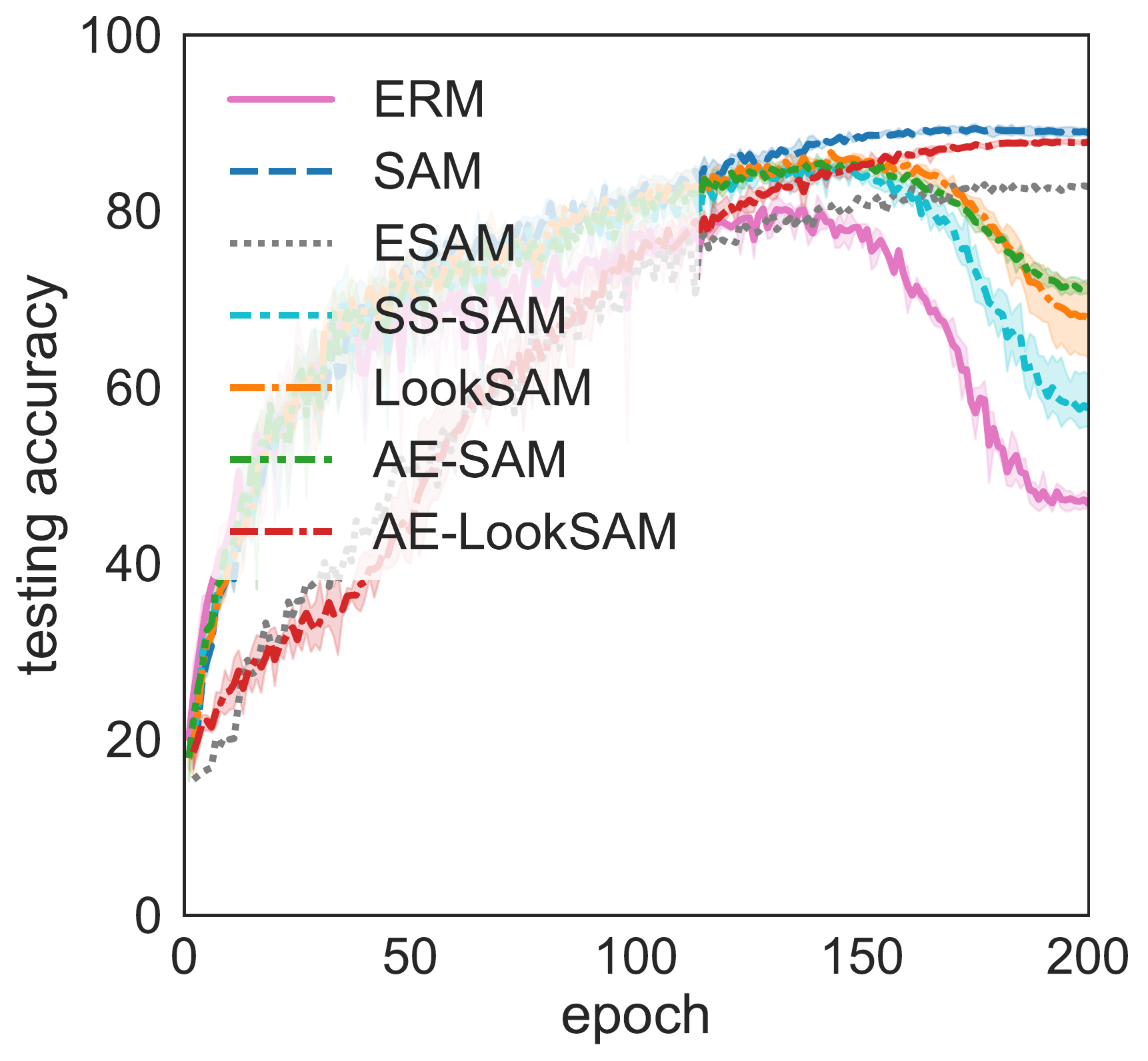}} \!\!
		\subfigure[80\% (Training). \label{fig:noisy-cifar10-80-resnet32-train-apd}]{\includegraphics[width=0.25\textwidth]{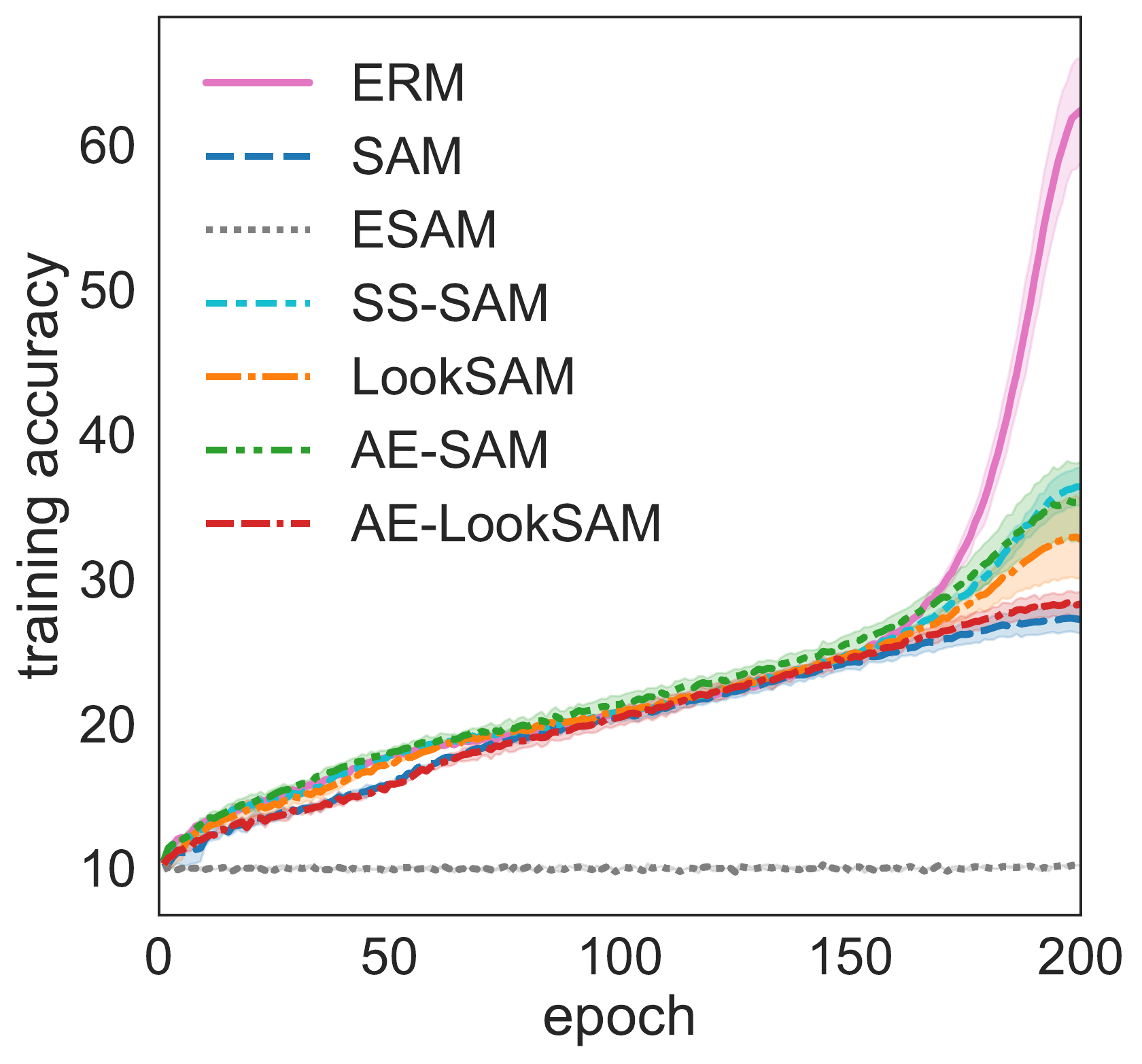}} \!\!
		\subfigure[80\% (Testing). \label{fig:noisy-cifar10-80-resnet32-valid-apd}]{\includegraphics[width=0.25\textwidth]{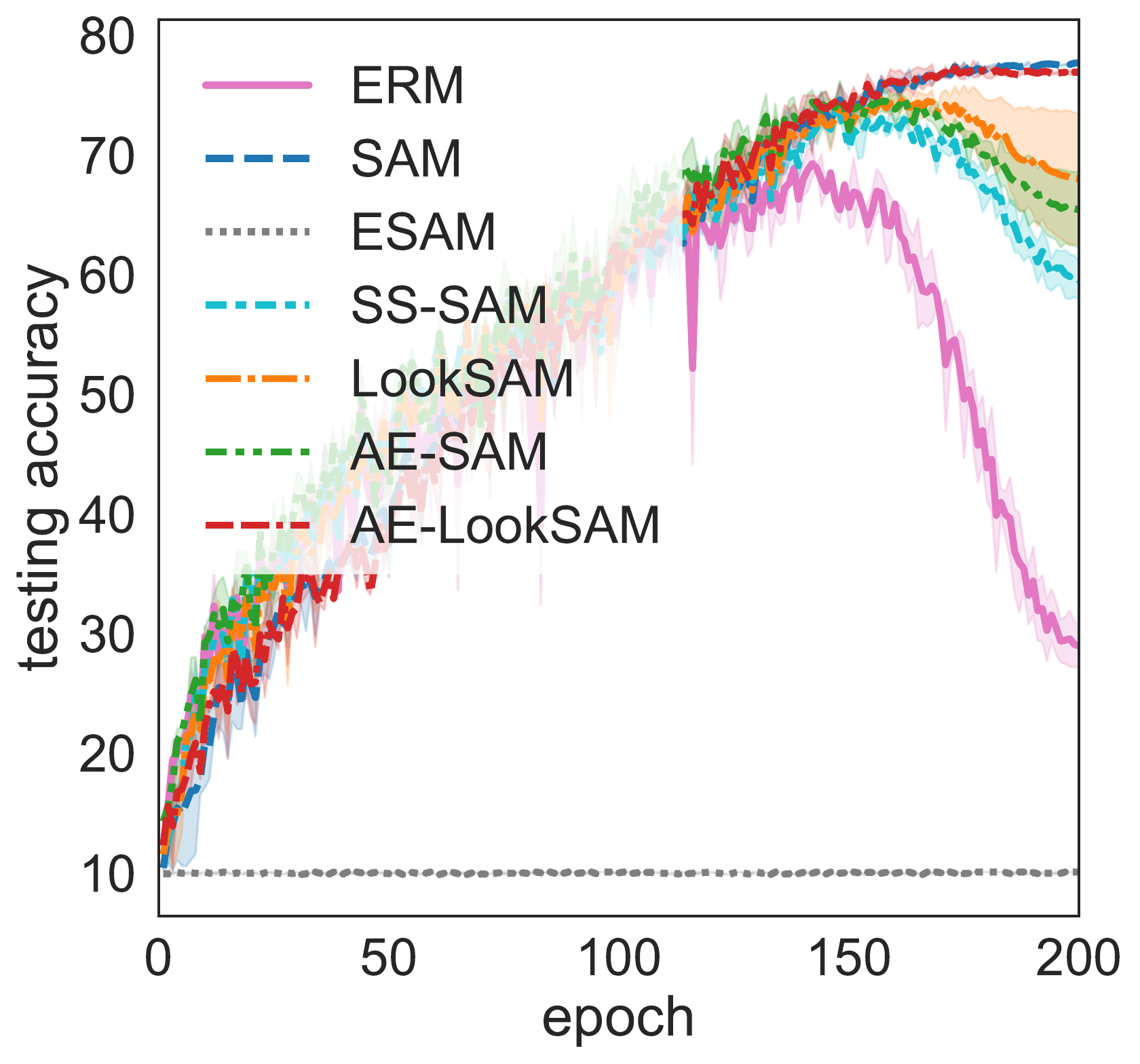}} \!\!\!
		\vskip -.15in
		\caption{Accuracies with number of epochs on CIFAR-10 with $20\%, 40\%$, $60\%$, and $80\%$ noise level using \textit{ResNet-32}.
			Best viewed in color.
		}
		\label{fig:curve-noisy-resnet32-apd}
		\vskip -.1in
	\end{figure}
	
	\subsection{Effects of $\lambda_1$ and $\lambda_2$ on AE-LookSAM}
	\label{sec:lambda}
	In this experiment,
	we study the effects of 
	$\lambda_1$ and $\lambda_2$ on AE-LookSAM.
	Experiment is performed 
	on 
	\textit{CIFAR-10}
	with label noise ($80\%$ noisy labels), 
	using the same setup as in Section \ref{sec:noisy-level}.
	
	Figure \ref{fig:ablation-samfrac-noisy}
	shows the effects of $\lambda_1$
	and $\lambda_2$ 
	on the fraction of SAM updates.
	Again, as in Section \ref{sec:ablation study ae-sam},
	for a fixed $\lambda_2$,
	increasing $\lambda_1$ always 
	reduces the fraction of SAM updates.
	Figure \ref{fig:ablation-acc-noisy}
	shows the effects of $\lambda_1$
	and $\lambda_2$ 
	on the testing accuracy of AE-SAM.
	As can be seen,  the observations are similar to those
	in Section \ref{sec:ablation study ae-sam}.

	\begin{figure}[!h]
		\centering
		\begin{minipage}{.48\textwidth}
			\centering
			\vskip -.1in 
			\!\!\!
			\subfigure[\textit{ResNet-18}.\label{fig:ablation-noisycifar-resnet18-samfrac}]{\includegraphics[width=0.48\textwidth]{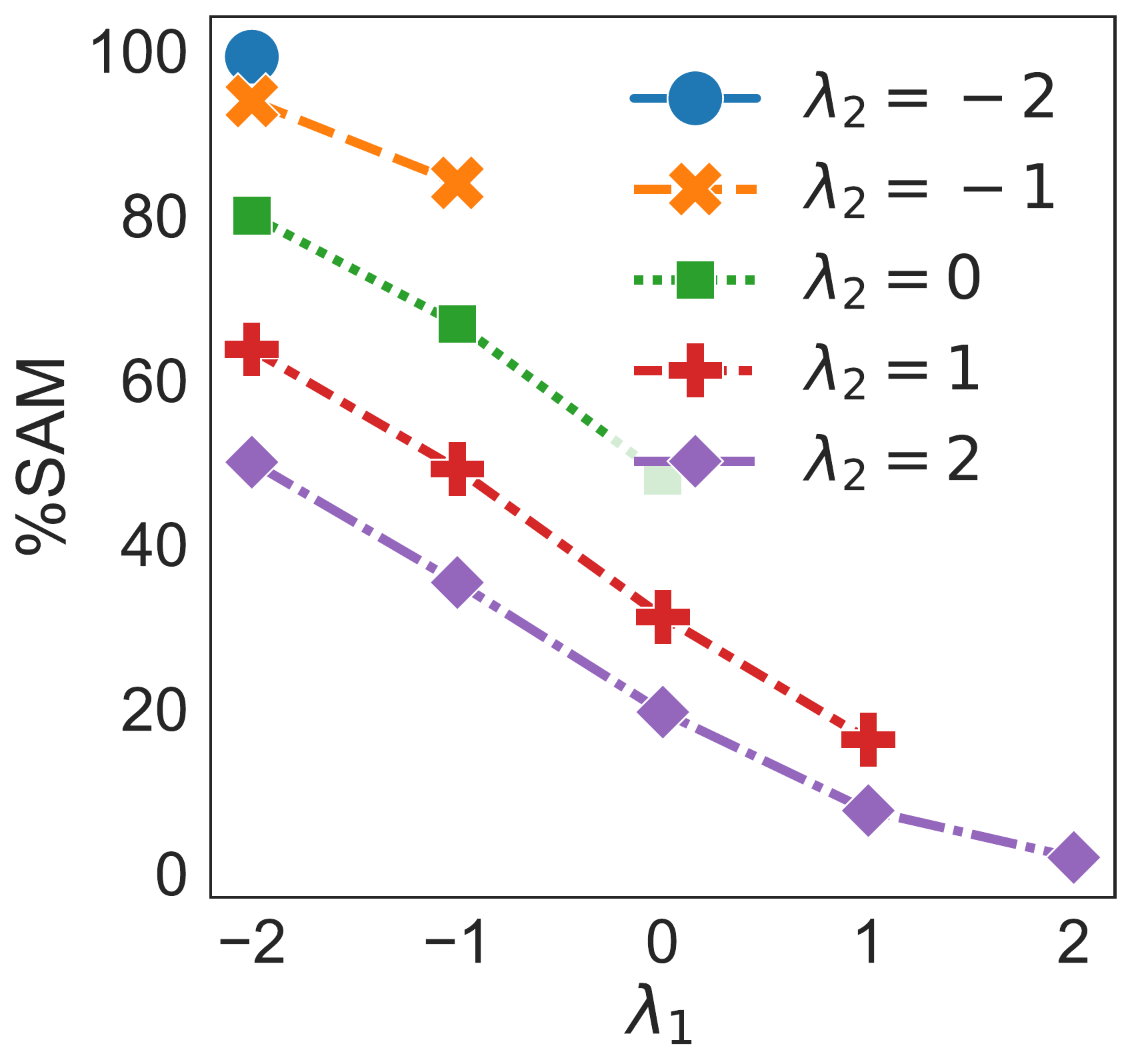}}	
			\subfigure[\textit{ResNet-32}. \label{fig:ablation-noisycifa-resnett34r-samfrac}]{\includegraphics[width=0.48\textwidth]{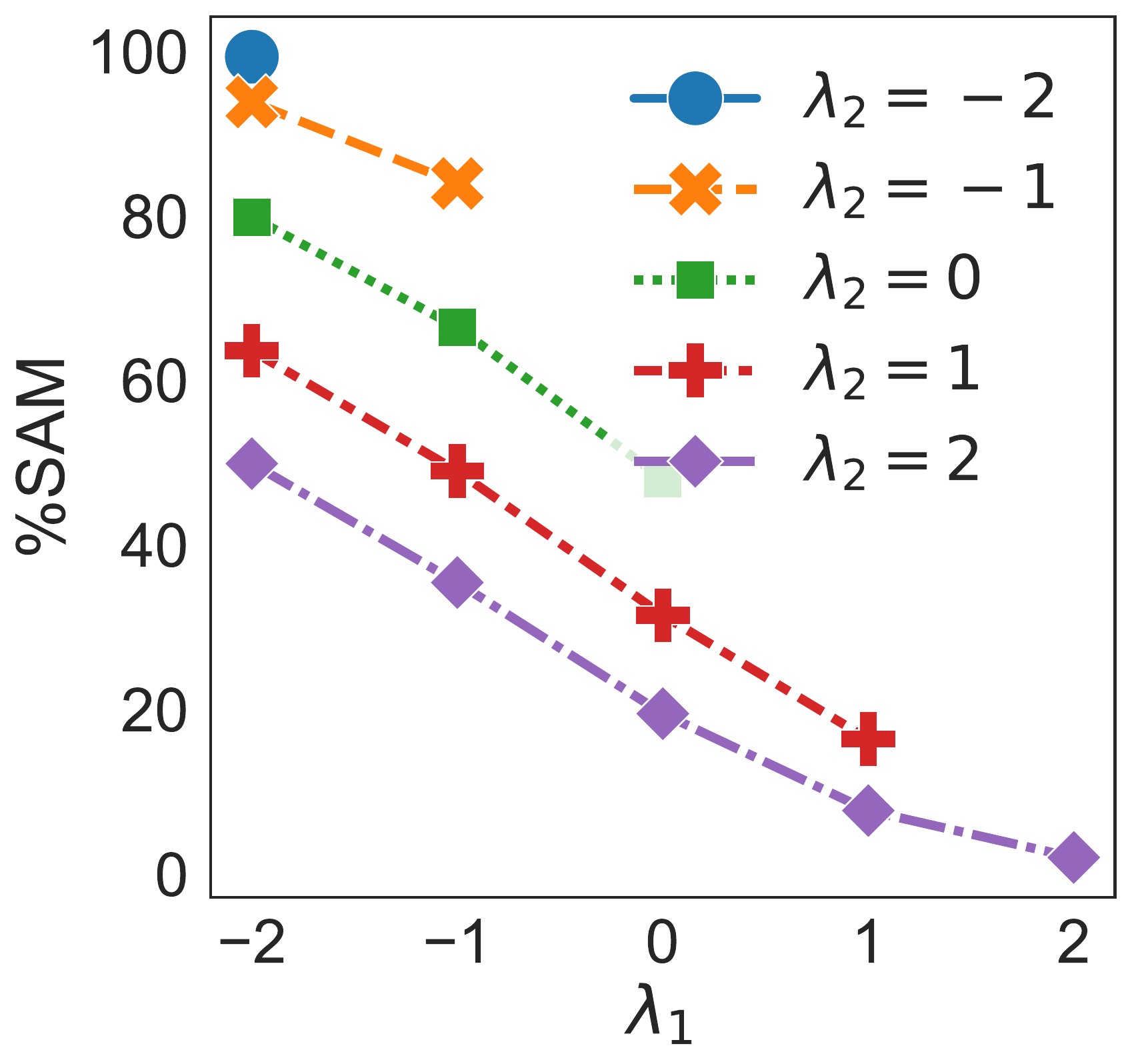}}
			\!\!\!
			\vskip -.15in
			\caption{
				Effects of $\lambda_1$ and $\lambda_2$ on fraction of SAM updates
				on \textit{CIFAR-10} (with $80\%$ noisy labels). 
				Best viewed in color.
			}
			\label{fig:ablation-samfrac-noisy}
		\end{minipage} \hfill
		\begin{minipage}{.48\textwidth}
			\centering
			\vskip -.1in 
			\!\!\!
			\subfigure[\textit{ResNet-18}.\label{fig:ablation-noisycifar-resnet18-acc}]{\includegraphics[width=0.48\textwidth]{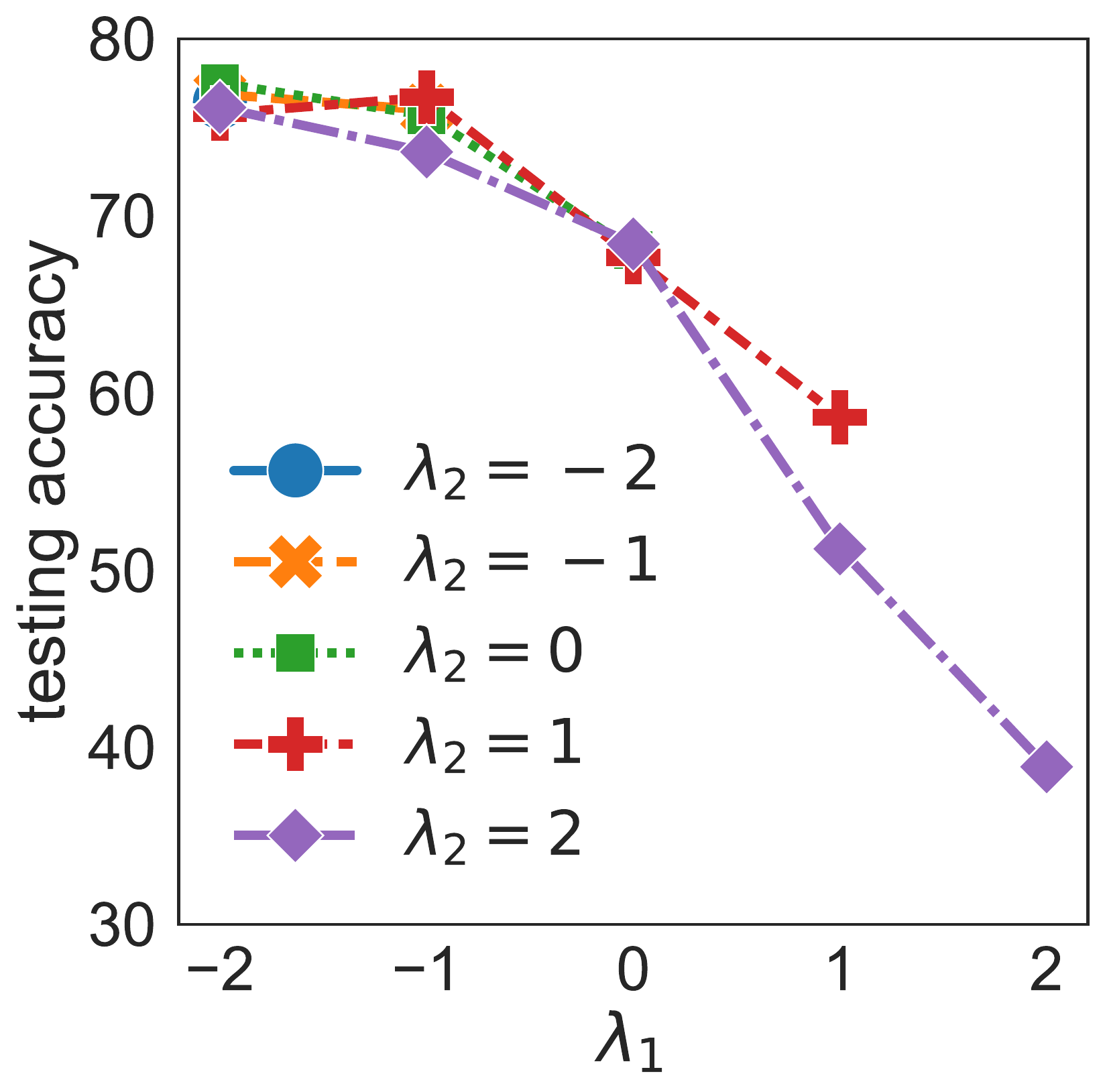}}	
			\subfigure[\textit{ResNet-32}. \label{fig:ablation-noisycifa-resnett34r-acc}]{\includegraphics[width=0.48\textwidth]{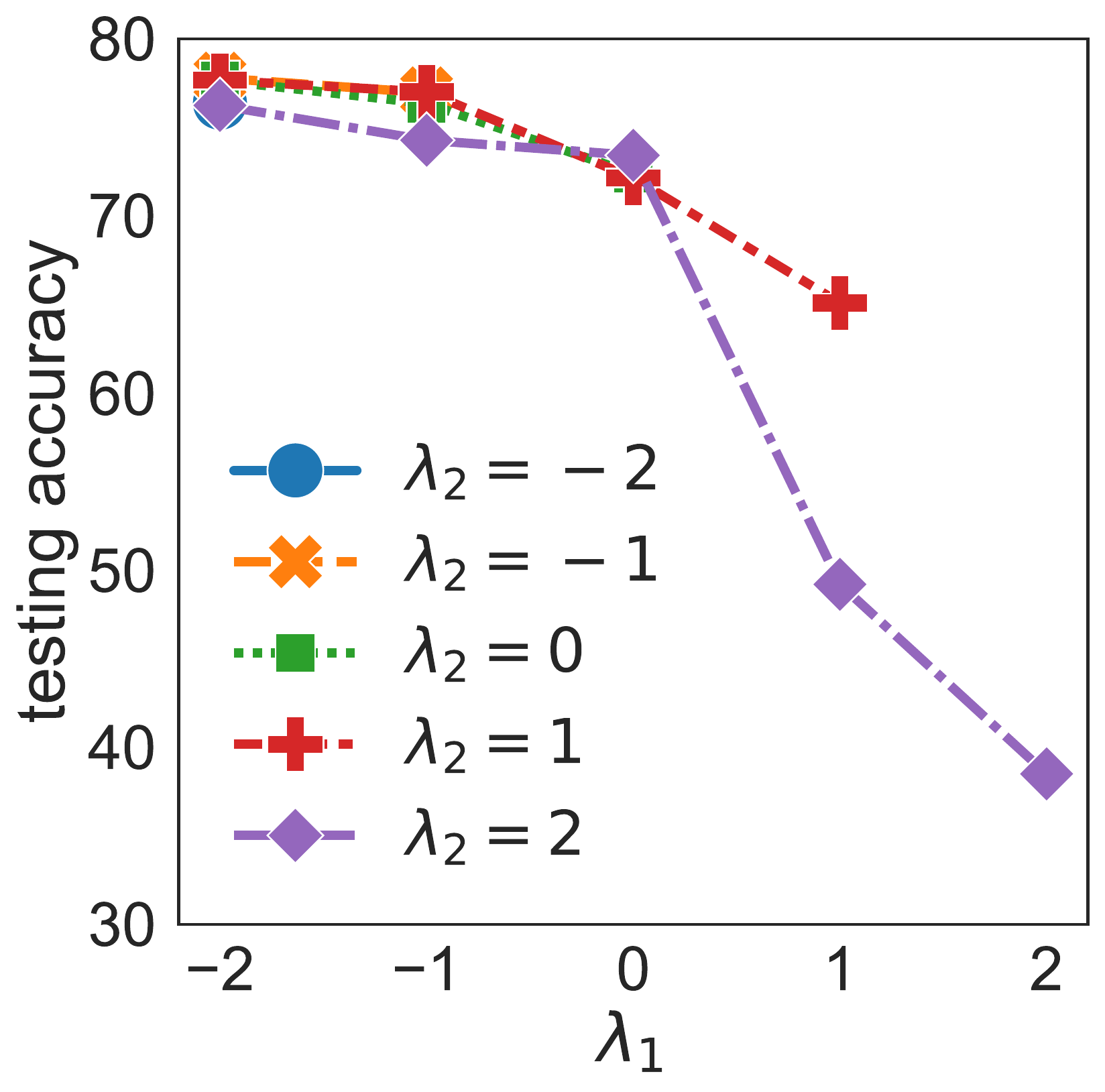}}
			\!\!\!
			\vskip -.15in
			\caption{
				Effects of $\lambda_1$ and $\lambda_2$ on testing accuracy
				of \textit{CIFAR-10} (with $80\%$ noisy labels). 
				Note that 
				the curves for $\lambda_2 \in \{-2, -1\}$ overlap completely with that of $\lambda_2 = 1$.
				Best viewed in color.
			}
			\label{fig:ablation-acc-noisy}
		\end{minipage}
	\end{figure}

	\subsection{Additional Convergence Results on \textit{CIFAR-10} and \textit{CIFAR-100}}
	\label{sec:appendix-expt-result}
	
	\vskip -.1in
	Figure \ref{fig:loss-trend}
	shows convergence of AE-SAM's training loss on the \textit{CIFAR-10} and \textit{CIFAR-100} datasets.
	As can be seen, AE-SAM achieves convergence with various network architectures.
	
	Figure \ref{fig:conv-compar-loss-trend}
	shows the training losses w.r.t. the number of epochs for AE-SAM and SS-SAM.
	As can be seen,
	AE-SAM and SS-SAM converge with comparable speeds,
	which agrees with Theorem \ref{thm:conv-sgd}
	as both of them have comparable fractions of SAM updates (Table \ref{table:result-cifar}).
	
	\begin{figure}[!h]
		\centering
		\vskip -.15in
		\!\!
		\subfigure[\textit{ResNet-18}. \label{fig:loss-trend-resnet}]{\includegraphics[width=0.28\textwidth]{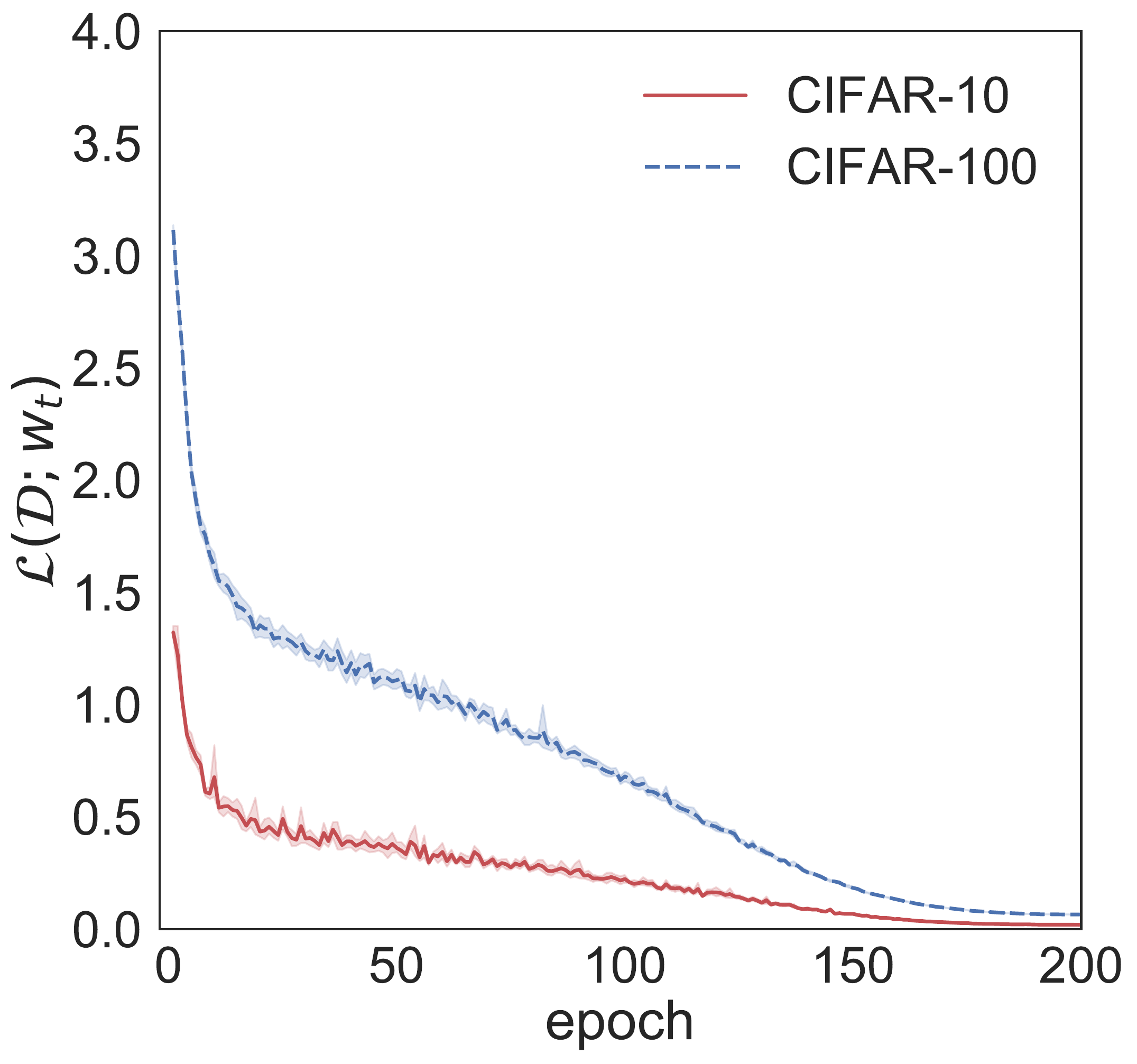}}
		\!\!\!
		\subfigure[\textit{WRN-28-10}. \label{fig:loss-trend-wrn}]{\includegraphics[width=0.28\textwidth]{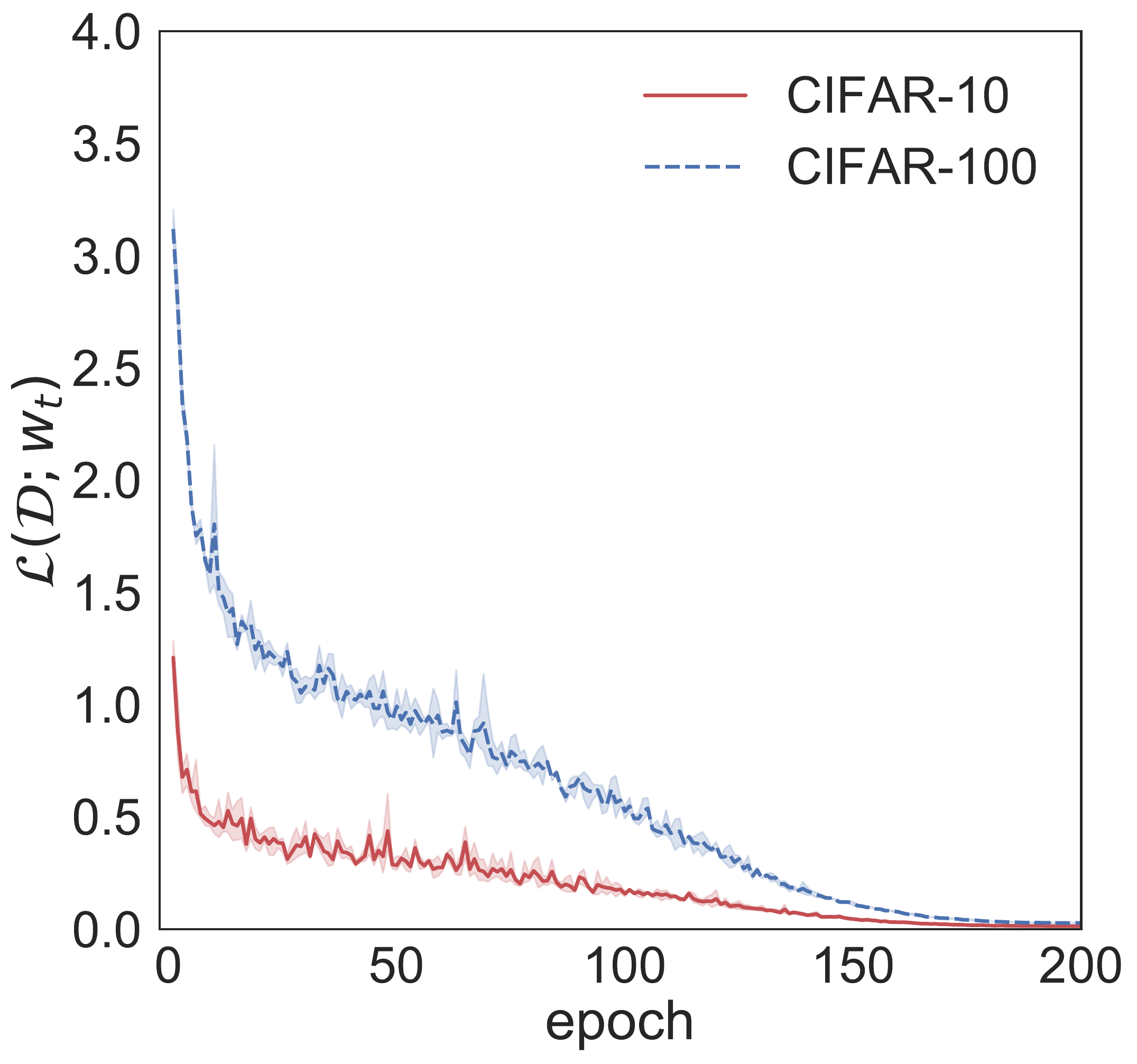}}
		\!\!\!
		\subfigure[\textit{PyramidNet-110}. \label{fig:loss-trend-pyramidnet}]{\includegraphics[width=0.28\textwidth]{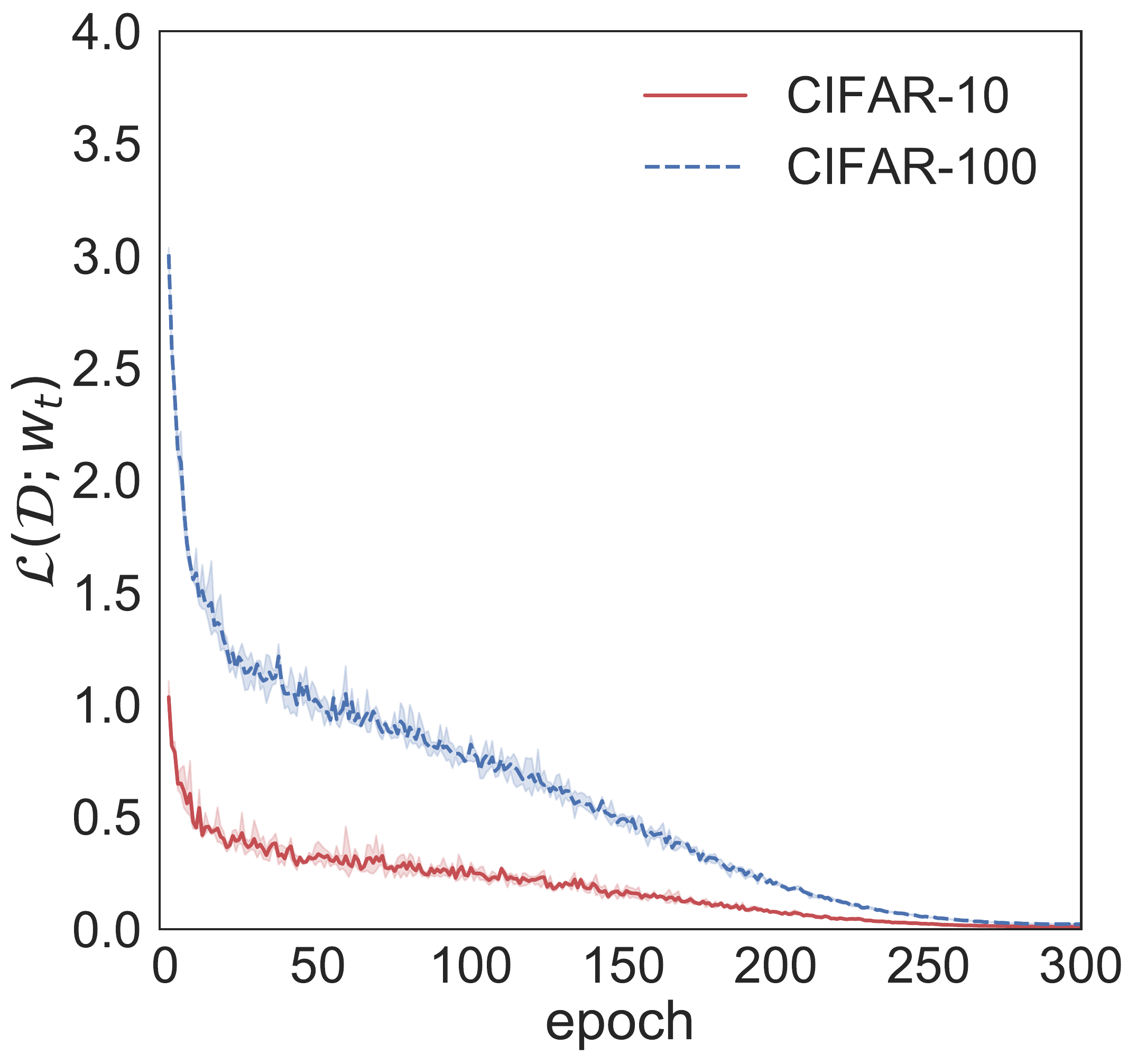}}
		\!\!\!
		\vskip -.15in
		\caption{Training loss of AE-SAM with number of epochs on \textit{CIFAR-10} and \textit{CIFAR-100}.
			Best viewed in color.
		}
		\label{fig:loss-trend}
	\end{figure}
	
	\begin{figure}[!h]
		\centering
		\vskip -.15in
		\!\!
		\subfigure[\textit{ResNet-18}. \label{fig:conv-loss-resnet-cifar10}]{\includegraphics[width=0.28\textwidth]{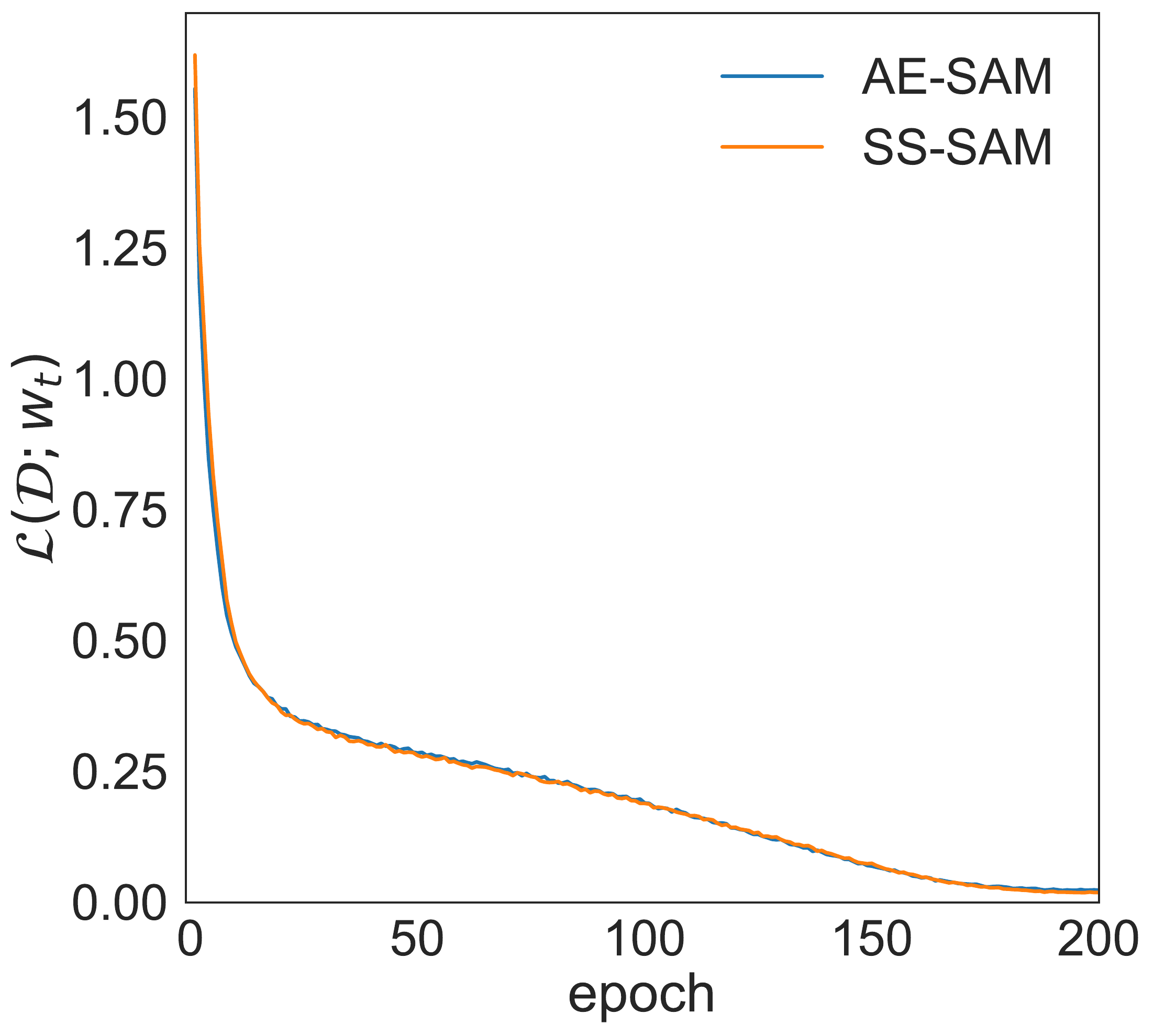}}
		\!\!\!
		\subfigure[\textit{WRN-28-10}. \label{fig:conv-loss-wrn-cifar10}]{\includegraphics[width=0.28\textwidth]{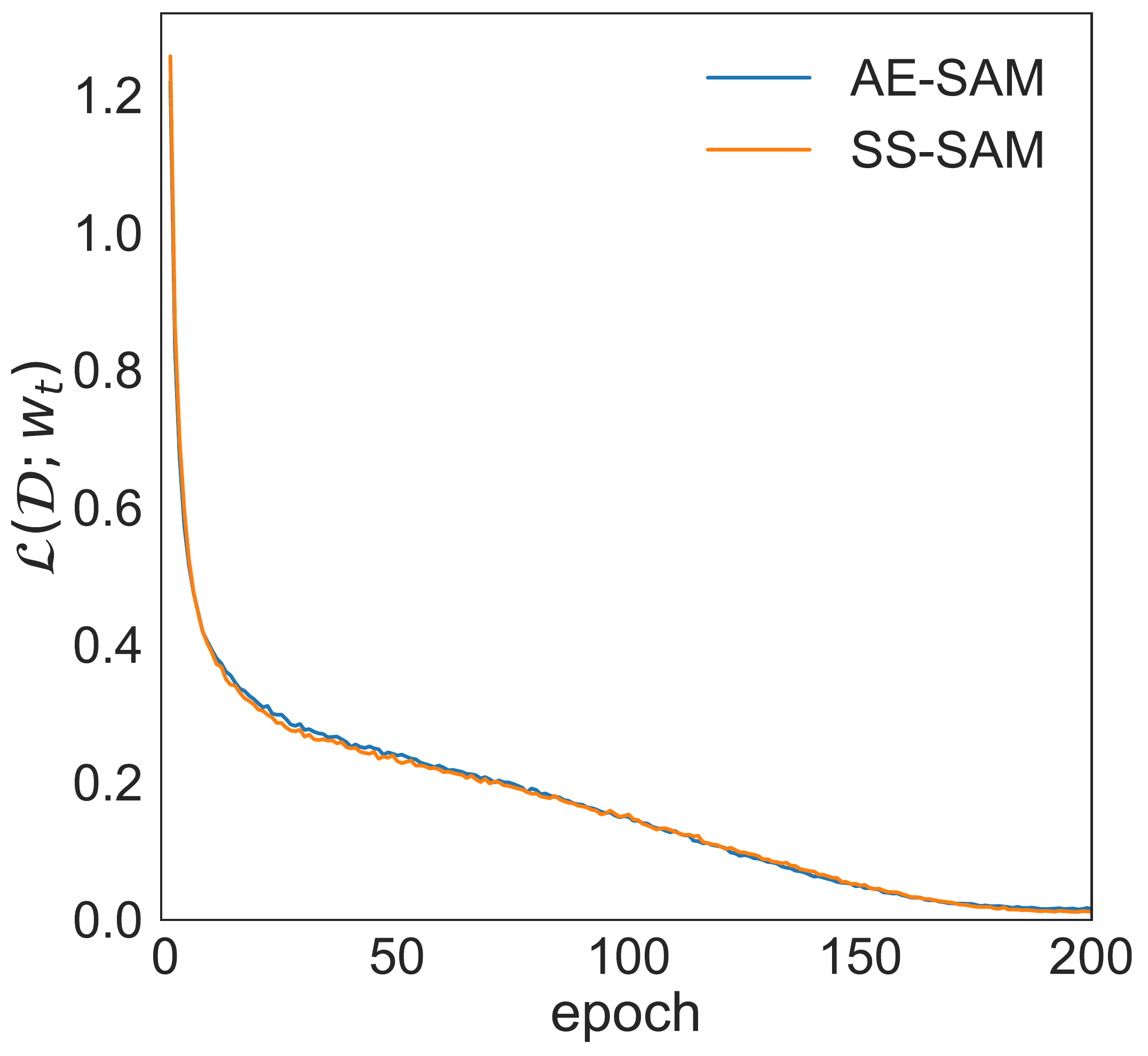}}
		\!\!\!
		\subfigure[\textit{PyramidNet-110}. \label{fig:conv-loss-pyramidnet-cifar10}]{\includegraphics[width=0.28\textwidth]{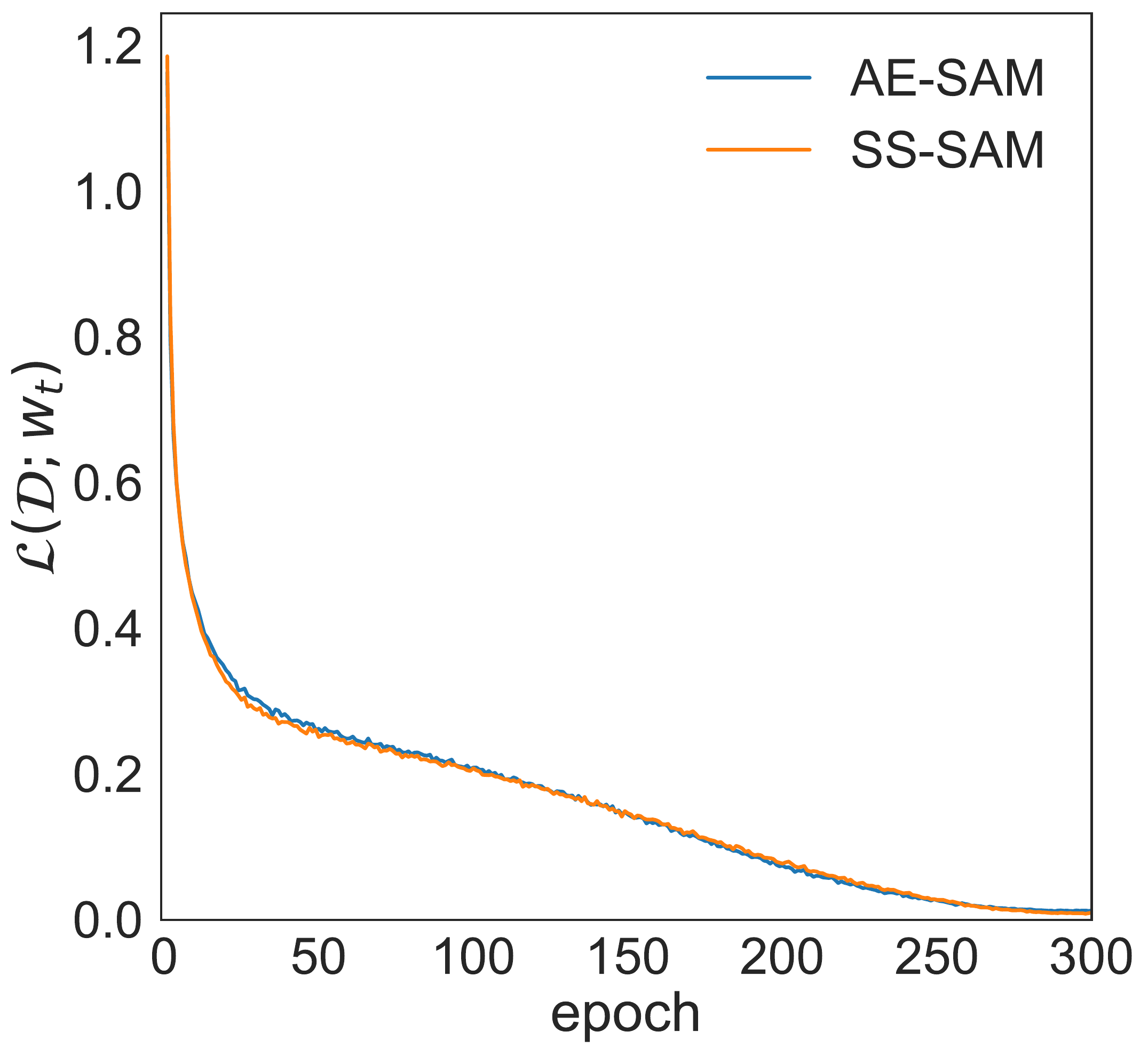}}
		\!\!
		\vskip -.15in
		\caption{
			Training losses of AE-SAM and SS-SAM with number of epochs on
			\textit{CIFAR-10}. Note that the two curves almost completely overlap.
			Best viewed in color.
		}
		\label{fig:conv-compar-loss-trend}
	\end{figure}

\end{document}